\DeclareMathOperator*{\argmax}{arg\,max}
\DeclareMathOperator*{\argmin}{arg\,min}
\newcommand{\norm}[1]{\left\lVert#1\right\rVert}
\newcommand{\mc}[1]{\mathcal{#1}}
\newtheorem{assumption}[theorem]{Assumption}
\theoremstyle{plain}
\newtheorem{example}{Example}
\newcounter{relctr} %% <- counter for relations
\everydisplay\expandafter{\the\everydisplay\setcounter{relctr}{0}} %% <- reset every eq
\newcommand\labelrel[2]{%
  \begingroup
    \refstepcounter{relctr}%
    \stackrel{\textnormal{(\alph{relctr})}}{\mathstrut{#1}}%
    \originallabel{#2}%
  \endgroup
}
\newcommand{\newstuff}[1]{{\leavevmode\color{black}{#1}}}
\crefname{hypothesis}{Hypothesis}{Hypotheses}
\title{Occupancy Information Ratio: Infinite-Horizon, Information-Directed, Parameterized Policy Search % \thanks{}
%
% \funding{The research of W. A. Suttle was supported by the U.S. Army Research Laboratory and was accomplished under Cooperative Agreements W911NF-21-2-0127 and W911NF-22-2-0003.
% %
% The research of J.~Liu was supported in part by Army Research Laboratory Cooperative
% Agreement W911NF-21-2-0098.
% %
% A.~Koppel contributed to this work in his previous role with the Army Research Laboratory in Adelphi, MD 20783.}
}
\author{
Wesley A. Suttle\thanks{U.S. Army Research Laboratory (\email{wesley.a.suttle.ctr@army.mil})}
\and Alec Koppel\thanks{J.P. Morgan AI Research (\email{alec.koppel@jpmchase.com})}
\and Ji Liu\thanks{Department of Electrical \& Computer Engineering, Stony Brook University, NY (\email{ji.liu@stonybrook.edu})}
\funding{The research of W. A. Suttle was supported by the U.S. Army Research Laboratory and was accomplished under Cooperative Agreements W911NF-21-2-0127 and W911NF-22-2-0003.
The research of J.~Liu was supported in part by Army Research Laboratory Cooperative
Agreement W911NF-21-2-0098.
A.~Koppel contributed to this work in his previous role with the Army Research Laboratory in Adelphi, MD 20783.}
}
\begin{document}

\maketitle

\begin{abstract}
    In this work, we propose an information-directed objective for infinite-horizon reinforcement learning (RL), called the occupancy information ratio (OIR), inspired by the information ratio objectives used in previous information-directed sampling schemes for multi-armed bandits and Markov decision processes as well as recent advances in general utility RL. The OIR, comprised of a ratio between the average cost of a policy and the entropy of its induced state occupancy measure, enjoys rich underlying structure and presents an objective to which scalable, model-free policy search methods naturally apply. Specifically, we show by leveraging connections between quasiconcave optimization and the linear programming theory for Markov decision processes that the OIR problem can be transformed and solved via concave programming methods when the underlying model is known. Since model knowledge is typically lacking in practice, we lay the foundations for model-free OIR policy search methods by establishing a corresponding policy gradient theorem. Building on this result, we subsequently derive REINFORCE- and actor-critic-style algorithms for solving the OIR problem in policy parameter space. Crucially, exploiting the powerful \textit{hidden quasiconcavity} property implied by the concave programming transformation of the OIR problem, we establish finite-time convergence of the REINFORCE-style scheme to global optimality and asymptotic convergence of the actor-critic-style scheme to (near) global optimality under suitable conditions. \newstuff{Finally, we experimentally illustrate the utility of OIR-based methods over vanilla methods in sparse-reward settings, supporting the OIR as an alternative to existing RL objectives.}
\end{abstract}

% REQUIRED
\begin{keywords}
reinforcement learning, policy gradient methods, non-convex optimization
\end{keywords}

% REQUIRED
\begin{MSCcodes}
93E03, 93E20, 93E35, 90-08, 65K05
\end{MSCcodes}

\section{Introduction}

The field of reinforcement learning (RL) \cite{sutton2018reinforcement} has seen many attempts to address the exploration/exploitation trade-off
by incentivizing exploration via additive regularization; the hope is that, with more experience, the agent can improve its exploitation capabilities.
Prior works on \textit{information-directed} solution methods for multi-armed bandits (MABs) \cite{russo2014learning, russo2016information} and Markov decision processes (MDPs)  \cite{lu2023reinforcement} instead seek to address this trade-off by minimizing an \textit{information ratio} objective, defined as the ratio of cost incurred to information acquired.
Importantly, when used as a tool for devising information-directed action-selection schemes, the specific form of these information ratio objectives leads to policies with improved data efficiency and improved regret bounds revealing the dependence of performance on information. \newstuff{Beyond the original works, the advantages of information-ratio objectives have been analyzed in the frequentist bandit setting \cite{kirschner2021asymptotically}, as well as the more general linear partial monitoring setting \cite{kirschner2020information}.}
In the RL setting, however, the same information-theoretic quantities and assumptions on problem structure that make these insights possible also limit the practical utility of the information ratios proposed in \cite{lu2023reinforcement} as tools for guiding action-selection. In particular, the abstract learning targets, representation of cost in terms of regret, and mutual information formulation of information gain of a policy lead to difficulties in devising practical estimation procedures. Moreover, the practical schemes proposed in \cite{lu2023reinforcement} rely on optimizing over the space $\mathcal{D}(\mathcal{A})$ of action distributions at each step, limiting their practical use to the finite action space setting. Due to these issues, the information ratio and its proxies explored in \cite{lu2023reinforcement} suffer from tractability and scalability issues in realistic settings. 

\newstuff{Gaps therefore remain in the theory of information-directed methods under general function approximation. The work \cite{nikolov2018information} proposes a variant of deep Q-learning that optimizes the ratio of Bellman error to a variance surrogate for information gain, with substantial performance gains in practice, suggesting developing the theory of information-directed schemes that can operate with parameterization is a worthy avenue of pursuit.} New proxy objectives that tractably, scalably extend the spirit of the information-directed schemes of \cite{russo2014learning, russo2016information, lu2023reinforcement} to \newstuff{operate with function approximation and exhibit performance guarantees} are therefore required. In order to achieve this, two issues must be addressed. First, in order to overcome the limited scalability inherent in value-based methods, operating in parameter space is required, for which policy gradient methods are most natural \cite{lillicrap2015continuous, schulman2017proximal, haarnoja2018soft}. Recent theoretical progress has also been made in providing global optimality guarantees for policy gradient methods \cite{bhandari2019global, Agarwal_Kakade_Lee_Mahajan_2020, mei2020global, zhang2020variational, bedi2021sample}, strengthening the motivation for pursuing such methods. Second, to address the estimation issues associated with the notions of information gain used in \cite{lu2023reinforcement}, we need a definition of informativeness that is amenable to policy search in parameter space. Occupancy measure entropy has recently been used as an optimization objective  \cite{hazan2019provably, lee2019efficient, zhang2020variational} quantifying the amount of information about the environment that a policy provides through the Kullback–Leibler divergence of its state occupancy measure from a uniform distribution. Motivated by this, in this work we take occupancy measure entropy, or occupancy information, of a policy as the fundamental quantity defining its informativeness.
Based on this definition, we develop and study a new RL objective called the \textit{occupancy information ratio}, or OIR, which captures the exploration/exploitation trade-off as defined by the ratio of long-term average cost to occupancy information of a policy.

\textbf{Main Contributions.} Our main contributions are as follows. (1) We propose a new RL objective, the occupancy information ratio (OIR), that is both inspired by the information ratio objectives of \cite{russo2014learning, russo2016information, lu2023reinforcement} and amenable to solution via policy search. (2) Drawing on connections between quasiconcave optimization and the linear programming theory for MDPs, we derive a concave programming reformulation of the OIR optimization problem over the space of state-action occupancy measures, establishing underlying theory that we exploit to strengthen our subsequent convergence results. (3) We derive an OIR policy gradient theorem, then use it to develop OIR policy gradient algorithms: Information-Directed REINFORCE (ID-REINFORCE) and Information-Directed Actor-Critic (IDAC). (4) We establish corresponding convergence theory with three key results: (i) OIR policy optimization enjoys a powerful \textit{hidden quasiconcavity} property guaranteeing its first-order stationary points are global optima; (ii) the gradient descent scheme underlying ID-REINFORCE enjoys a non-asymptotic, information-dependent convergence rate; (iii) IDAC converges with probability one to (a neighborhood of) a \textit{global} optimum of the OIR problem. \newstuff{(5) We provide experimental results indicating that OIR-based methods are able to outperform vanilla RL methods in sparse-reward settings, providing auxiliary support for the study of the OIR as an independent RL objective.}

\newstuff{It is important to note that, while the technical motivation for the OIR objective stems from balancing explore-exploit issues via connections with the information ratio methods of \cite{russo2014learning, russo2016information, lu2023reinforcement}, our main convergence theory is of an optimization flavor, in the sense that we provide asymptotic and non-asymptotic analysis of algorithms optimizing the OIR objective. An information-theoretic characterization of the resulting policies remains an important, open problem that we leave for future work.}
\newstuff{The key technical challenge in our results lies in handling the fractional form of the OIR objective, which has not been previously addressed in the literature. To overcome this challenge, we first characterize the \emph{quasiconvex} structure of the OIR problem in \S\ref{sec:elements_of_OIR}. Leveraging this structure, especially properties of the perspective transform familiar to the quasiconcave programming literature, we then extend the concave utility analysis of \cite{zhang2020variational} to \emph{quasiconcave} utilities, including the OIR, in \S\S\ref{sec:hidden_concavity}-\ref{sec:conv_rate}. Finally, we extend the asymptotic actor-critic analyses of \cite{bhatnagar2009natural, suttle2021reinforcement} to our IDAC algorithm, taking special care to establish the requisite smoothness properties of the OIR gradient as well as asymptotic negligibility of corresponding, OIR-specific noise and error terms.
}
\section{Problem Formulation} \label{sec:problem_formulation}

We now describe our problem setting and formulate the occupancy information ratio objective. We first define an underlying Markov decision process, then formulate the OIR as an objective to be optimized over it.

\subsection{Markov Decision Processes}
Consider an average-cost MDP described by the tuple $(\mathcal{S}, \mathcal{A}, p, c)$, where $\mc{S}$ is the finite state space, $\mc{A}$ is the finite action space, $p : \mc{S} \times \mc{A} \rightarrow \mc{D}(\mc{S})$ is the transition probability kernel mapping state-action pairs to distributions over the state space, and $c : \mc{S} \times \mc{A} \rightarrow \mathbb{R}^+$ is the cost function mapping state-action pairs to positive scalars.
In this setting, at time-step $t$, the agent is in state $s_t$, chooses an action $a_t$ according to a policy $\pi : \mc{S} \rightarrow \mc{D}(\mc{A})$ mapping states to distributions over $\mc{A}$, incurs cost $c(s_t, a_t)$, and then the system transitions into a new state $s_{t+1} \sim p(\cdot | s_t, a_t)$.
Since we are interested in policy gradient methods, we give the following definitions with respect to a parameterized family $\{ \pi_{\theta} : \mc{S} \rightarrow \mc{D}(\mc{A}) \}_{\theta \in \Theta}$ of policies, where $\Theta\subset\mathbb{R}^d$ is some set of permissible policy parameters. Note that analogous definitions apply to any policy $\pi$. For any $\theta \in \Theta$, let $d_{\theta}(s) = \lim_{t \rightarrow \infty} P(s_t = s \ | \ \pi_{\theta})$
%
% \begin{equation} \label{eqn:s_occ_meas}
%     d_{\theta}(s) = \lim_{t \rightarrow \infty} P(s_t = s \ | \ \pi_{\theta})
% \end{equation}
%
denote the steady-state occupancy measure over $\mc{S}$ induced by $\pi_{\theta}$, which we assume to be independent of the initial start-state.
% \footnote{Note that different from the discounted case, this expression is defined directly in terms of a limiting probability, rather than a sum over discounted state transitions.}
In addition, let
$\lambda_{\theta}(s, a) = \lim_{t \rightarrow \infty} P(s_t = s, a_t = a\ | \ \pi_{\theta})$
%
% \begin{equation} \label{eqn:sa_occ_meas}
%     \lambda_{\theta}(s, a) = \lim_{t \rightarrow \infty} P(s_t = s, a_t = a\ | \ \pi_{\theta})
% \end{equation}
%
denote the state-action occupancy measure induced by $\pi_{\theta}$ over $\mc{S} \times \mc{A}$. Notice that $\lambda_{\theta}(s, a) = d_{\theta}(s) \pi_{\theta}(a | s)$. Furthermore, let $J(\theta) = \sum_s d_{\theta}(s) \sum_a \pi_{\theta} (a | s) c(s, a)$
%
% \begin{align}
%     J(\theta) % &= \lim_{n \rightarrow \infty} \frac{1}{n} \mathbbm{E}_{\pi_{\theta}} \left[ \sum_{i=1}^n c(s_i, a_i) \right] \label{eqn:average_cost} \\
%     %
%     &= \sum_s d_{\theta}(s) \sum_a \pi_{\theta} (a | s) c(s, a) \label{eqn:average_cost}  % \nonumber
% \end{align}
%
denote the long-run average cost of using policy $\pi_{\theta}$. Finally, given $\theta$, define the entropy of the state occupancy measure induced by $\pi_{\theta}$ to be $H(d_{\theta}) = - \sum_s d_{\theta}(s) \log d_{\theta}(s).$
This quantity measures how well $\pi_{\theta}$ covers the state space $\mc{S}$ in the long run.

\subsection{Occupancy Information Ratio}
We consider the OIR objective
\begin{equation} \label{eqn:OIR}
    \rho(\theta) = \frac{J(\theta)}{\kappa + H(d_{\theta})},
\end{equation}
where $\kappa > - \min_{\theta} H(d_{\theta})$ is a user-specified constant chosen to ensure that the denominator in \eqref{eqn:OIR} remains strictly positive. 
%
% When $\pi_{\theta}$ and the system dynamics are such that $H(d_{\theta}) = 0$ is possible, restricting $\kappa > 0$ ensures that \eqref{eqn:OIR} is well-defined. Under suitable conditions, however, $H(d_{\theta}) > 0$ is guaranteed, so $\kappa = 0$ is allowed.
%
% \footnote{One common situation where $H(d_{\theta}) > 0$ holds, for all $\theta \in \Theta$, is when all Markov chains induced by the members of the policy class $\{ \pi_{\theta} \}_{\theta \in \Theta}$ over $\mathcal{S}$ are assumed to be ergodic.}
%
Given an MDP $(\mc{S}, \mc{A}, p, c)$, our goal is to find a policy parameter $\theta^*$ such that $\pi_{\theta^*}$ minimizes \eqref{eqn:OIR} \textit{over the MDP}, i.e., subject to its costs and dynamics.
% For brevity, we say the goal is to minimize $\rho(\theta)$ over $(\mc{S}, \mc{A}, p, c)$.
% Note that the average cost $J(\theta)$ and entropy $H(d_{\theta})$ are both long-run, infinite-horizon quantities, since their values depend on the asymptotic behavior of policy $\pi_{\theta}$.
As $J(\theta)$ and $H(d_{\theta})$ are both infinite-horizon quantities, we regard \eqref{eqn:OIR} as an \textit{infinite-horizon objective}.

% \begin{remark*} \label{rem:kappa_regularizer}
%
% Since $\kappa$ scales the relative importance of $H(d_{\theta})$ in \eqref{eqn:OIR}, it can be viewed (and used) as a regularizer. When minimizing a function $f(x)$, one frequently considers a regularized objective function $f(x) + \kappa \norm{x}$, where $\kappa \geq 0$. Here, the larger $\kappa$ becomes, the more important the regularization term becomes with respect to the objective function. In contrast, for \eqref{eqn:OIR}, the relative importance of the entropy term actually \textit{diminishes} as $\kappa$ becomes larger: when $\kappa$ is small, even minor changes in the value of $H(d_{\theta})$ can have a large effect on the value of $\rho(\theta)$; when $\kappa$ is large, on the other hand, even significant perturbations of the value of $H(d_{\theta})$ have little effect on the value of $\rho(\theta)$.
% \end{remark*}

\begin{remark*} \label{rem:kappa_for_rewards}
Though we stipulated that $\kappa > - \min_{\theta} H(d_{\theta})$ in the definition of the OIR above, letting $\kappa < - \max_{\theta} H(d_{\theta})$ has an important interpretation as well. When $\kappa < - \max_{\theta} H(d_{\theta})$ and $J(\theta) \geq 0$, for all $\theta \in \Theta$, clearly the OIR $\rho(\theta)$ will always be non-positive. Because of this, minimizing the OIR will in fact minimize the ratio of $-J(\theta)$ to the absolute value $| \kappa + H(d_{\theta}) |$, or, equivalently, it will maximize $J(\theta) / ( | \kappa | - H(d_{\theta}) )$. This allows the OIR framework to accommodate rewards by simply replacing the cost function $c$ in the MDP with a reward function $r$, and choosing $\kappa < - \max_{\theta} H(d_{\theta})$.
\end{remark*}

\newstuff{
\subsection{OIR as a Proxy Objective for Information-Directed Sampling}

In this section we discuss the occupancy information ratio objective as a proxy for the information ratio objective of the information-directed sampling (IDS) scheme proposed in \cite{lu2023reinforcement}.
%
% We first recall the definition of the information ratio proposed in \cite{lu2023reinforcement}, then derive the OIR as a reasonable proxy that is amenable to policy search (see \S\ref{sec:elements_of_OIR}) and enjoys useful theoretical properties (\S\ref{sec:theoretical_results}) and practical benefits (\S\ref{sec:experiments}).
%
The general setting of \cite{lu2023reinforcement} is a sequential decision-making problem where the goal is to balance optimizing a given objective with acquiring information about an abstract \textit{learning target}, $\mc{X}$, through interactions with the environment, all while maintaining and updating some relevant \textit{epistemic state}, $\mc{P}_t$. For example, $\mc{X}$ may denote the optimal policy for the objective or some suitable exploration scheme, while $\mc{P}_t$ could include policy and value function parameters at time $t$. Given some reward $r$, state $s$, and policy $\pi$, let $V_{\pi}(s)$ denote the value function starting from state $s$ of policy $\pi$, and let $Q_{\pi}(s, a)$ denote the state-action value function for $\pi$ starting from $s, a$. Define $V_*(s) = \max_{\pi} V_{\pi}(s), Q_*(s, a) = \max_{\pi} Q_{\pi}(s, a)$, and let $H( \mc{X} | \mc{P}_t )$ denote the conditional entropy, or remaining uncertainty, of the learning target given $\mc{P}_t$. Once the agent has successfully achieved its learning target, $H( \mc{X} | \mc{P}_t )$ will typically be small or zero. Given $\mc{P}_t$, horizon $\tau$, and a candidate policy $\pi$, let $\mc{P}_{t+\tau}$ denote the epistemic state resulting from starting with $\mc{P}_t$ and using $\pi$ for $\tau$ steps. Then $\left[ H(\mc{X} | P_t) - H(\mc{X} | P_{t+\tau}) \right] / \tau$ is the $\tau$-step \textit{information gain} resulting from following $\pi$.
For a given candidate policy $\pi$, the $\tau$-step information ratio of $\pi$ at time $t$ is defined in \cite{lu2023reinforcement} as the ratio of its instantaneous squared shortfall to its $\tau$-step information gain:
\begin{equation}
    % \small
    \Gamma_{\tau, t}^{\pi} = \frac{ \mathbb{E}_{\pi} \left[ V_*(s_t) - Q_*(s_t, a_t) \right]^2 }{ \left[ H(\mc{X} | P_t) - H(\mc{X} | P_{t+\tau}) \right] / \tau }. \label{eqn:ids_ir}
\end{equation}
%
% This objective can be used to find a policy at each timestep that explicitly balances behaving suboptimally with information acquisition.
%
For a candidate $\pi$, \cite{russo2016information, lu2023reinforcement} show\footnote{The regret bound stated here is a simplification, for purposes of exposition, of the more general statements proven in \cite[\S4.4]{lu2023reinforcement}.} that
%
% \begin{equation}
%     \small
%     \text{Regret}(T | \pi) = \sum_{t=0}^{T-1} \mathbb{E}_{\pi} \left[ V_*(s_t) - Q_*(s_t, a_t) \right] \leq \sqrt{ H( \mc{X} | P_0 ) \sum_{t=0}^{T-1} \Gamma_{\tau, t}^{\pi} }. \label{eqn:ids_regret}
% \end{equation}
%
\small
$\text{Regret}(T | \pi) =$ $\sum_{t=0}^{T-1} \mathbb{E}_{\pi} \left[ V_*(s_t) - Q_*(s_t, a_t) \right]$ $\leq \sqrt{ H( \mc{X} | P_0 ) \sum_{t=0}^{T-1} \Gamma_{\tau, t}^{\pi} }$.
\normalsize
This bound suggests that, by choosing a policy minimizing \eqref{eqn:ids_ir} at each timestep, overall regret can be minimized, leading to improved data efficiency due to intelligent information acquisition.
However, several factors limit the tractability of the information ratio objective \eqref{eqn:ids_ir}. First, the presence of $V_*, Q_*$ render explicit estimation of the numerator intractable. Similarly, the specific choice of $\mc{X}$, formulation of $P_t$, and choice of $\tau$ make estimation of the denominator difficult. Objective \eqref{eqn:ids_ir} is thus more useful as an archetype for proxy objectives than as an optimization objective itself. Several Q-learning-based schemes using such proxy objectives are accordingly proposed in \cite{lu2023reinforcement}, yet these are inherently restricted to the finite action space setting and the corresponding proxy objectives are not amenable to optimization using policy gradient-based methods, limiting scalability.

We propose the OIR \eqref{eqn:OIR} as a new proxy objective retaining the spirit of \eqref{eqn:ids_ir} while remaining tractable for parameterized policy search.
To obtain \eqref{eqn:OIR} as a proxy for \eqref{eqn:ids_ir}, we recast its components into policy search-friendly terms. We first replace the squared shortfall in the numerator of \eqref{eqn:ids_ir} with the expected average cost, $J(\theta)$, of the candidate policy $\pi_{\theta}$. Though this substitution loses amenability to the regret analyses of \cite{lu2023reinforcement}, it gains practical and theoretical tractability for policy search by eliminating $V_*$ and $Q_*$ and enabling use of the policy gradient theorem.
Next, we recast the information gain in the denominator. To do this, a tractable learning target $\mc{X}$ and epistemic state $P_t$ must be designed. Without prior knowledge of $\pi^*$, environmental exploration is a natural choice for $\mc{X}$. As discussed in the introduction, state occupancy measure entropy, $H(d_{\theta})$, is widely used to quantify the exploration achieved by policy $\pi_{\theta}$. With $\mc{X}$ denoting state occupancy measure entropy and letting $P_t = \pi_{\theta_t}$, we thus have the interpretation $H(\mc{X} | P_t) = H(d_{\theta_t})$. Similarly, taking $\tau = 1$ for simplicity and letting $P_{t+1} = \pi_{\theta}$ denote the candidate policy, we furthermore have $H(\mc{X} | P_{t+\tau}) = H(d_{\theta})$. This yields our initial information gain reformulation, $\left[ H(\mc{X} | P_t) - H(\mc{X} | P_{t+\tau}) \right] / \tau = H(d_{\theta_t}) - H(d_{\theta})$. Upon closer inspection, however, a slight modification is required: since our goal in the learning target $\mc{X}$ is to increase exploration, we wish to increase $H(d_{\theta})$. We thus redefine $H(\mc{X} | P_t) = -H(d_{\theta_t})$ and $H(\mc{X} | P_{t+\tau}) = -H(d_{\theta})$, yielding $\left[ H(\mc{X} | P_t) - H(\mc{X} | P_{t+\tau}) \right] / \tau = H(d_{\theta}) - H(d_{\theta_t})$, which captures the increase in occupancy measure entropy of candidate policy $\pi_{\theta}$ over the current policy $\pi_{\theta_t}$. Since $H(d_{\theta_t})$ can be viewed as fixed in the expression $J(\theta) / \left[ H(d_{\theta}) - H(d_{\theta_t}) \right]$, we simplify the expression by replacing $H(d_{\theta_t})$ with a constant $\kappa > - \min_{\theta} H(d_{\theta})$ to obtain the OIR of equation \eqref{eqn:OIR}. Despite losing some of the nuance present in the information gain term of \eqref{eqn:ids_ir}, the resulting OIR objective is far more tractable for policy search, as seen in the following section.
}
\section{Elements of OIR Optimization} \label{sec:elements_of_OIR}

We now turn to the problem of optimizing the OIR defined in \eqref{eqn:OIR}.
% In this section we develop two key aspects of the theory underlying this problem.
% 
First, we build on parallels with linear programming solutions to MDPs and the theory of quasiconcave programming to show that we can transform the non-convex problem of minimizing our objective \eqref{eqn:OIR} into a concave program over the space of state-action occupancy measures.
%
% This transformation relies in a crucial way on our definition of the OIR, which ensures that \eqref{eqn:OIR} is \textit{quasiconvex} in the state-action occupancy measure and unlocks the concave reformulation.
%
% In addition to laying firm theoretical foundations for OIR minimization by drawing connections to the rich linear programming theory for MDPs, this concave reformulation allows us to efficiently recover optimal policies in certain cases, which we use to provide optimal benchmarks in our experimental results.
%
This endows the OIR optimization problem with a powerful \textit{hidden quasiconcavity} property [cf. \S\ref{sec:hidden_concavity}] that we exploit to strengthen the convergence results for our policy gradient algorithms in \S\ref{sec:theoretical_results}.
Second, % since the cost and transition dynamics are typically unknown in practice,
we lay the groundwork for model-free policy search methods developed in \S\ref{sec:algorithms} by deriving a policy gradient theorem for $\nabla \rho(\theta)$.
%
% This policy gradient theorem critically relies on the definition of the OIR, as it admits a tractable gradient expression that is easy to estimate in practice.
%
On the road to this result, we derive an \textit{entropy gradient theorem} providing a simple expression for $\nabla H(d_{\theta})$ that we believe is of independent interest.
%
% In the subsequent section, we use the OIR gradient expression obtained here to develop policy gradient algorithms for minimizing \eqref{eqn:OIR}.

\subsection{Concave Reformulation} \label{sec:concave_program_2}

% In this subsection we leverage connections with the theory of linear programming for MDPs to show that, given knowledge of the cost function and transition probability function of an MDP, we can solve a related concave program to find a policy minimizing the OIR objective \eqref{eqn:OIR} over the MDP.

Given an average-cost MDP $(\mathcal{S}, \mathcal{A}, p, c)$ and a policy $\pi$, let $\lambda_{\pi} \in \mc{D}(\mathcal{S} \times \mathcal{A})$ denote the state-action occupancy measure induced by $\pi$ on $\mathcal{S} \times \mathcal{A}$, i.e., $\lambda_{sa} = \lim_{t \rightarrow \infty} P(s_t = s, a_t = a \ | \pi)$. As discussed in \S8.8 of \cite{Puterman_2014}, if we have access to $p$ and $c$, an optimal state value function can be obtained by solving a related linear program, (P). This is useful, as 
the existence of weakly polynomial-time algorithms for solving linear programs \cite{khachiyan1979polynomial, karmarkar1984new} ensures the problem can be solved efficiently.
Furthermore, the state-action occupancy measure $\lambda^*$ of the optimal policy $\pi^*$ for $(\mc{S}, \mc{A}, p, c)$ can be obtained by solving the following linear program, which is dual to (P): $\min_{\lambda \geq 0} \{ J(\lambda) = c^T \lambda \ | \ \sum_{s, a} \lambda_{sa} = 1 \text{ and } \sum_a \lambda_{sa} = \sum_{s', a} p(s | s', a) \lambda_{s'a}, \forall s \in \mathcal{S} \}$. Call this dual linear program $(D)$.
%
% \begin{mini*}
%   {\lambda \geq 0}{J(\lambda) = c^T \lambda}{}{}
%   \addConstraint{\sum_{s, a} \lambda_{sa}}{= 1}{}  \tag{$D$} \label{opt:D}
%   \addConstraint{\sum_a \lambda_{sa}}{= \sum_{s', a} p(s | s', a) \lambda_{s'a},}{\hspace{4mm} \forall s \in \mathcal{S}}
%   % \addConstraint{\lambda}{\geq 0}{}{},
% \end{mini*}
%
The constraints ensure that the decision variables $\lambda$ give a valid state-action occupancy measure for the MDP.
% i.e. that the probability vector $\lambda \in \mc{D}(\mc{S} \times \mc{A})$ is achievable given the transition probability function $p$.
So, given a feasible solution $\lambda$ to (D), $J(\lambda)$ is clearly the expected long-run average cost of following a policy that induces $\lambda$. Furthermore, the policy $\pi_{\lambda}$ defined by
$\pi_{\lambda}(a | s) = \frac{\lambda_{sa}}{\sum_{a'} \lambda_{sa'}}$
induces $\lambda$ (see Thm. 8.8.2 in \cite{Puterman_2014}). This means that, once the optimal $\lambda^*$ is obtained by solving (D), the corresponding policy $\pi_{\lambda^*}$ is optimal for $(\mc{S}, \mc{A}, p, c)$.

An analogous problem can be formulated for minimizing \eqref{eqn:OIR} over $(\mc{S}, \mc{A}, p, c)$. Consider the following:
\vspace{-1.5mm}
\small
\begin{mini*}
  {\lambda \geq 0}{\rho(\lambda) = \frac{J(\lambda)}{\kappa + \widehat{H}(\lambda)} }{}{}
  \addConstraint{\sum_{s, a} \lambda_{sa}}{= 1}{}  \tag{$Q$} \label{opt:Q0}
  \addConstraint{\sum_a \lambda_{sa}}{= \sum_{s', a} p(s | s', a) \lambda_{s'a},}{\hspace{4mm} \forall s \in \mathcal{S}}
  % \addConstraint{\lambda}{\geq 0}{}{},
\end{mini*}
\normalsize
% \vspace{-2mm}
%
where $\widehat{H}(\lambda) = H(d^{\lambda})$ is the entropy of $d^{\lambda} \in \mc{D}(\mathcal{S})$ given by $d^{\lambda}_s = \sum_a \lambda_{sa}$. In the standard definition of the function $H(d)$, for any $d_i = 0$, we take $d_i \log d_i = \lim_{d_i \rightarrow 0^+} d_i \log d_i = 0$, so $H(d)$ is always well-defined and finite for $d \geq 0$ (see, e.g., \cite{gray2011entropy}). Similarly, we take $d^{\lambda}_s \log d^{\lambda}_s = 0$ whenever $d^{\lambda}_s = 0$, so that $\widehat{H}(\lambda)$ is well-defined for $\lambda \geq 0$. To ensure that the objective of \eqref{opt:Q0} is well-defined over its feasible region, we make the following mild assumption:
\begin{assumption} \label{assum:two_nonzero_entries}
    For all $\lambda$ feasible to \eqref{opt:Q0}, $d^{\lambda}$ has at least two non-zero entries.
\end{assumption}
This ensures $\rho(\lambda_{\pi})$ is well-defined, for any $\pi$, and is weaker than the ergodicity conditions frequently encountered in the RL literature [cf. Assumption \ref{assum:actor_conditions}].

Since the feasible region of \eqref{opt:Q0} corresponds to precisely those state-action occupancy measures achievable over $(\mc{S}, \mc{A}, p, c)$, solving \eqref{opt:Q0} yields the state-action occupancy measure minimizing $\rho(\lambda)$. Furthermore, as with (D) above, any $\lambda^*$ optimal to \eqref{opt:Q0} allows us to recover a policy $\pi_{\lambda^*}$ minimizing $\rho(\lambda)$. Unlike (D), however, the objective function in \eqref{opt:Q0} is non-convex, so the problem may be difficult to solve directly. Fortunately, due to the \textit{quasiconvexity} of $\rho(\lambda)$ [cf. Definition \ref{def:quasiconvex}], the problem \eqref{opt:Q0} can be transformed via the substitution $y = \lambda / c^T \lambda, t = 1 / c^T \lambda$ and an application of the perspective transform [cf. Definition \ref{def:perspective_transform}] described in Chapter 7 of \cite{avriel2010generalized} to the equivalent concave program
\small
\begin{maxi*}
    {y \geq 0, t}{\kappa t - \sum_{s, a} y_{sa} \log \left( \frac{\sum_a y_{sa}}{t} \right)}{}{}
    \addConstraint{\sum_{s, a} y_{sa}}{= t, \quad \sum_{s, a} c_{sa} y_{sa} = 1}{} \tag{$Q'$} \label{opt:Q0_concave}
    \addConstraint{\sum_a y_{sa}}{= \sum_{s', a} p(s | s', a) y_{s'a},}{\hspace{4mm} \forall s \in \mathcal{S}}
    % \addConstraint{\sum_{s, a} c_{sa} y_{sa}}{= 1}
    % \addConstraint{y}{\geq 0}.
    % \addConstraint{y / t}{\geq 0}.
\end{maxi*}
\normalsize
This problem can be efficiently solved using well-known methods for concave optimization \cite{boyd2004convex} to obtain the optimal state-occupancy measure and corresponding optimal policy. We formalize this as the following result:
\begin{theorem} \label{thm:Q0_eq_Q0_concave}
Problem \eqref{opt:Q0_concave} is a concave program, and any optimal solution to it is optimal for the OIR problem \eqref{opt:Q0}.
\end{theorem}
In addition to enabling efficient solution when the MDP model is known, this reformulation implies the existence of \textit{hidden quasiconcavity} underlying any policy gradient methods developed for the OIR minimization problem, as shown in \S\ref{sec:hidden_concavity}.
\\ \\
\textbf{Proof of Theorem \ref{thm:Q0_eq_Q0_concave}.} The remainder of this subsection is devoted to the proof of Theorem \ref{thm:Q0_eq_Q0_concave}.
%
% We first prove in \S\ref{subsubsec:Q0_quasiconvex} that the quasiconvex optimization problem \eqref{opt:Q0} is feasible and that every local optimal solution to it is global optimal. We then show that the problem can be equivalently formulated as an intermediate quasiconcave optimization problem. We subsequently show in \S\ref{subsubsec:transformation} how this quasiconcave program can be transformed into the concave program \eqref{opt:Q3} via the range transformation described in Ch. 7 of \cite{avriel2010generalized}. The latter transformation is our end goal, as the resulting concave program can then be efficiently solved to obtain an optimal solution to \eqref{opt:Q0}.

\subsubsection{Quasiconvexity of \eqref{opt:Q0}} \label{subsubsec:Q0_quasiconvex}

Let us first formally define quasiconvexity/-concavity. Given a scalar $\alpha$ and function $f : \mathbb{R}^n \rightarrow \mathbb{R}$ defined on a convex set $C \subset \mathbb{R}^n$, define the $\alpha$-superlevel set of $f$ on $C$ to be $U(f, \alpha) = \{ x \in C \ | \ f(x) \geq \alpha \}$ and the $\alpha$-sublevel set of $f$ on $C$ to be $L(f, \alpha) = \{ x \in C \ | \ f(x) \leq \alpha \}$.
\begin{definition} \label{def:quasiconvex}
Given $f : \mathbb{R}^n \rightarrow \mathbb{R}$ defined on a convex set $C \subset \mathbb{R}^n$, $f$ is \textit{quasiconvex (resp. quasiconcave)} if $L(f, \alpha)$ (resp. $U(f, \alpha)$) is convex, for each $\alpha \in \mathbb{R}$.
\end{definition}
\noindent Now let $\Delta(\mathbb{R}^{|\mathcal{S}| \cdot |\mathcal{A}|})$ denote the unit simplex in $\mathbb{R}^{|\mathcal{S}| \cdot |\mathcal{A}|}$, and let $F$ denote the feasible region of \eqref{opt:Q0}. Clearly $F$ is a convex subset of $\Delta(\mathbb{R}^{|\mc{S}| \cdot |\mc{A}|})$, since it is defined by linear equality and nonnegativity constraints. Note that the numerator of $\rho(\lambda)$, the objective function in \eqref{opt:Q0}, is convex (linear, in fact). Also notice that $\widehat{H}(\lambda) = H(d^{\lambda})$ is concave on $F$, which follows from the facts that the entropy $H(d)$ is concave in $d$, $d^{\lambda}$ is a linear function of $\lambda$, and the composition of a concave function with an affine function is itself concave. This implies that, for any fixed $\kappa \geq 0$, the denominator of $\rho(\lambda)$ is concave and also positive by Assumption \ref{assum:two_nonzero_entries} over all its sublevel subsets of the feasible region. These facts guarantee that \eqref{opt:Q0} is a quasiconvex program with a well-behaved objective function, as formalized in the following Lemma.
\begin{lemma} \label{lemma:quasiconvex}
\eqref{opt:Q0} is feasible and has an optimal solution with finite objective function value, and the objective $\rho$ of \eqref{opt:Q0} is strictly quasiconvex on $F$.
\end{lemma}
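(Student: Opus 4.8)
The plan is to dispatch the three claims in turn—feasibility, attainment of a finite optimum, and strict quasiconvexity—the last being the only one requiring real care. For feasibility, I would observe that the feasible region $F$ of \eqref{opt:Q0} is exactly the set of valid state-action occupancy measures for $(\mc{S}, \mc{A}, p, c)$, which is nonempty by the standard linear-programming theory for MDPs \cite[\S8.8]{Puterman_2014}: any (say, uniformly random) policy induces a stationary chain whose occupancy measure satisfies the normalization, flow-balance, and nonnegativity constraints. Since $F$ is cut out by linear equalities and nonnegativity inside the unit simplex, it is a nonempty compact polytope.

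For attainment of a finite optimum, I would argue that $\rho$ is continuous on $F$: the numerator $c^T\lambda$ is linear, and the denominator $\kappa + \widehat{H}(\lambda)$ is continuous because the entropy $H(\cdot)$ is continuous on the simplex (with the convention $0\log 0 = 0$) and $d^{\lambda}$ is linear in $\lambda$. By Assumption \ref{assum:two_nonzero_entries} the denominator is strictly positive everywhere on $F$, so $\rho$ has no singularities; Weierstrass's theorem then yields a minimizer, and the value is finite because $0 < c^T\lambda$ is bounded on the simplex while the continuous positive denominator is bounded away from zero on the compact set $F$.

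The substance of the lemma is strict quasiconvexity, for which I would exploit the linear-fractional structure directly rather than appeal to a black-box ratio rule, since $\widehat{H}$ is \emph{not} strictly concave in $\lambda$ (it depends on $\lambda$ only through the state marginal $d^{\lambda}_s = \sum_a \lambda_{sa}$, hence is constant along marginal-preserving directions). Fix $\lambda_1 \neq \lambda_2$ in $F$ with, without loss of generality, $\rho(\lambda_1) < \rho(\lambda_2) =: \alpha$, and let $\lambda_\theta = \theta\lambda_1 + (1-\theta)\lambda_2$ for $\theta \in (0,1)$. Because the denominator is positive, the target bound $\rho(\lambda_\theta) < \alpha$ is equivalent to $J(\lambda_\theta) < \alpha(\kappa + \widehat{H}(\lambda_\theta))$. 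I would then combine three ingredients: linearity of $J$ gives $J(\lambda_\theta) = \theta J(\lambda_1) + (1-\theta)J(\lambda_2)$; concavity of $\widehat{H}$ gives $\kappa + \widehat{H}(\lambda_\theta) \geq \theta(\kappa + \widehat{H}(\lambda_1)) + (1-\theta)(\kappa + \widehat{H}(\lambda_2))$; and positivity $\alpha > 0$ (guaranteed since $c$ is strictly positive, so $J > 0$ on the simplex) lets me multiply the second inequality by $\alpha$ without flipping it. Substituting $\alpha(\kappa + \widehat{H}(\lambda_2)) = J(\lambda_2)$ and the strict inequality $\alpha(\kappa + \widehat{H}(\lambda_1)) > J(\lambda_1)$ then yields $\alpha(\kappa + \widehat{H}(\lambda_\theta)) \geq \theta\,\alpha(\kappa + \widehat{H}(\lambda_1)) + (1-\theta)J(\lambda_2) > \theta J(\lambda_1) + (1-\theta)J(\lambda_2) = J(\lambda_\theta)$, which is exactly the claim.

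The point I would emphasize—and the step I expect to be the main conceptual obstacle—is precisely that the strictness is \emph{not} inherited from any strict concavity of the entropy term, which fails here. Instead, it is produced entirely by the single strict inequality at $\lambda_1$ together with $\alpha > 0$ and $\theta \in (0,1)$; the concavity of $\widehat{H}$ and the linearity of $J$ need only hold weakly. This also clarifies which notion of ``strictly quasiconvex'' is in force: the one from \cite[Ch.~7]{avriel2010generalized} requiring $\rho(\lambda_\theta) < \max\{\rho(\lambda_1), \rho(\lambda_2)\}$ whenever $\rho(\lambda_1) \neq \rho(\lambda_2)$, which is exactly what the argument delivers and which, as used in the sequel, forces every local minimizer of \eqref{opt:Q0} to be global.
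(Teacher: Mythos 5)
Your proposal is correct, and on the main claim it takes a genuinely different route from the paper. The paper handles feasibility and boundedness exactly as you do (any policy induces a feasible occupancy measure; $J(\lambda) > 0$ and $\kappa + \widehat{H}(\lambda) > 0$ on the bounded set $F$), but for quasiconvexity it simply invokes the black-box result \cite[Prop.~5.20]{avriel2010generalized} on ratios of a positive convex function over a positive concave function, whereas you prove the (semi)strict quasiconvexity from first principles via the three-ingredient computation (linearity of $J$, weak concavity of $\widehat{H}$, positivity of $\alpha$). Your version buys two things: it is self-contained, and it makes explicit the point the citation obscures — that the strictness is \emph{not} inherited from strict concavity of the entropy term (which indeed fails, since $\widehat{H}$ is constant along marginal-preserving directions) but is manufactured from the single strict inequality at $\lambda_1$; it also pins down that the notion delivered is the semistrict one ($\rho(\lambda_\theta) < \max\{\rho(\lambda_1),\rho(\lambda_2)\}$ only when the endpoint values differ), which is precisely what Lemma \ref{lemma:local_implies_global} needs downstream. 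Your treatment of attainment is in fact slightly more careful than the paper's: the paper only asserts that $\rho$ is \emph{bounded} on $F$, which by itself does not give existence of a minimizer, while you close this gap by noting continuity of $\rho$ on the compact polytope $F$ (denominator bounded away from zero by Assumption \ref{assum:two_nonzero_entries}) and invoking Weierstrass. The only quibbles are cosmetic: your attribution of the strict-quasiconvexity definition to \cite[Ch.~7]{avriel2010generalized} (the relevant definitions and the ratio result live in the earlier chapters, Ch.~7 being the fractional-programming transformation used later), and, like the paper, your feasibility argument tacitly uses Ces\`aro rather than pointwise limits for possibly periodic chains.
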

%
% \begin{proof}
% The problem is clearly feasible, since any arbitrary policy $\pi$ induces a valid state-action occupancy measure $\lambda^{\pi} \in F$. Since $F$ is bounded as a subset of the unit simplex, and also $J(\lambda) > 0$ and $\kappa + \widehat{H}(\lambda) > 0$ on $F$ by Assumption \ref{assum:two_nonzero_entries}, we have that $\rho$ is bounded on $F$. Furthermore, since $J(\lambda)$ is linear on $F$, $\kappa + \widehat{H}(\lambda)$ is concave on $F$, and $J(\lambda) > 0$ and $\kappa + \widehat{H}(\lambda) > 0$ on all the sublevel sets of $\rho$ on $F$, Prop.~5.20 of \cite{avriel2010generalized} applies to show that $\rho$ is quasiconvex on $F$.
% \end{proof}
%
%
%
\noindent Finally, \eqref{opt:Q0} enjoys the following key property, which guarantees that any solution to the concave program described in the next section provides a globally optimal solution to the OIR minimization problem \eqref{eqn:OIR}.
\begin{lemma} \label{lemma:local_implies_global}
Every local optimum of \eqref{opt:Q0} is a global optimum.
\end{lemma}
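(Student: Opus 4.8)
The plan is to derive the result directly from the strict quasiconvexity of $\rho$ on the convex feasible set $F$ established in Lemma \ref{lemma:quasiconvex}, via a standard line-segment argument. Recall that $\rho$ being strictly quasiconvex on $F$ means that for any two feasible points $\lambda_1, \lambda_2$ with $\rho(\lambda_1) \neq \rho(\lambda_2)$ and any $t \in (0,1)$, one has $\rho\big((1-t)\lambda_1 + t\lambda_2\big) < \max\{\rho(\lambda_1), \rho(\lambda_2)\}$. It is precisely this strictness --- as opposed to the weak inequality defining ordinary quasiconvexity --- that rules out spurious local minima sitting on a plateau, and it is the crux of why the lemma holds.

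First I would argue by contradiction: suppose $\lambda^* \in F$ is a local optimum of \eqref{opt:Q0} that is \emph{not} a global optimum. Then there exists a feasible $\lambda' \in F$ with $\rho(\lambda') < \rho(\lambda^*)$. Since $F$ is convex (being cut out by linear equalities and nonnegativity constraints), the entire segment $\lambda_t := (1-t)\lambda^* + t\lambda'$, $t \in [0,1]$, lies in $F$. Next, because $\rho(\lambda') < \rho(\lambda^*)$ forces $\rho(\lambda') \neq \rho(\lambda^*)$, strict quasiconvexity applies along this segment to yield $\rho(\lambda_t) < \max\{\rho(\lambda^*), \rho(\lambda')\} = \rho(\lambda^*)$ for every $t \in (0,1)$. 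Letting $t \to 0^+$, the points $\lambda_t$ converge to $\lambda^*$, so every neighborhood of $\lambda^*$ in $F$ contains points $\lambda_t$ with $\rho(\lambda_t) < \rho(\lambda^*)$. This contradicts the assumption that $\lambda^*$ is a local optimum, completing the argument.

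I do not expect a genuine obstacle here; the substance of the lemma is essentially packaged into Lemma \ref{lemma:quasiconvex}. The one point requiring care is to invoke \emph{strictness} rather than mere quasiconvexity --- with weak quasiconvexity the segment argument only yields $\rho(\lambda_t) \le \rho(\lambda^*)$, which is consistent with $\lambda^*$ being a local (but not global) minimum on a flat region. Accordingly, the only thing to confirm before finalizing is that the notion of strict quasiconvexity supplied by Lemma \ref{lemma:quasiconvex} (and, underlying it, the cited result from \cite{avriel2010generalized}) is indeed the ``semistrict'' variant giving strict inequality whenever the endpoint values differ, which is exactly what the segment argument consumes.
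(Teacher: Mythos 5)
Correct, and essentially the paper's approach: the paper proves this lemma by citing \cite[Prop.~3.3]{avriel2010generalized}, which is exactly the segment argument you spell out --- a local minimum of a semistrictly quasiconvex function on a convex feasible set is global. Your closing caveat is also the right one: the quasiconvexity that Lemma \ref{lemma:quasiconvex} inherits from \cite[Prop.~5.20]{avriel2010generalized} (a ratio of a positive convex function to a positive concave function) is of the semistrict type, which is precisely what both your argument and the proposition cited by the paper require.
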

\begin{proof}
The assertion follows directly from Prop.~3.3 in \cite{avriel2010generalized}.
\end{proof}

\subsubsection{Transformation to a Concave Program} \label{subsubsec:transformation}

Now that we are assured \eqref{opt:Q0} is quasiconvex and has no spurious stationary points, we show it can be reformulated as an equivalent concave program by leveraging results from classic results from the literature on quasiconcave programming \cite{schaible1976fractional, avriel2010generalized}.
% 
% To do so, we first transform it into an equivalent \textit{quasiconcave} maximization problem by showing that minimizing the objective function $\rho(\lambda)$ of problem \eqref{opt:Q0} is equivalent to maximizing its reciprocal $1 / \rho(\lambda)$. Next, we show that, by using a suitable change of variables first proposed in \cite{schaible1976fractional}, this objective function can be reinterpreted as the perspective transform of a concave function, which is itself concave. Finally, we apply results from \cite{avriel2010generalized} to obtain the desired concave reformulation of \eqref{opt:Q0}.
%
As highlighted at the beginning of this section, the definition of the OIR is critical to this reformulation, as it exploits structural attributes of the family of state-action occupancy measures to enable the transformation from the initial quasiconvex program to the desired concave program.

Define $q(\lambda) := 1 / \rho(\lambda) = (\kappa + \widehat{H}(\lambda)) / J(\lambda)$ and consider the problem
\begin{maxi*}
  {\lambda}{q(\lambda)}{}{}
    \addConstraint{\sum_s \sum_a \lambda_{sa}}{= 1}{}  \tag{$Q''$} \label{opt:Q1}
    \addConstraint{\sum_a \lambda_{sa}}{= \sum_{s'} \sum_a p(s | s', a) \lambda_{s'a},}{\hspace{4mm} \forall s \in \mathcal{S}}
    \addConstraint{\lambda}{\geq 0}{}{}.
\end{maxi*}
\noindent Note that the feasible region $F$ of \eqref{opt:Q1} is identical to the feasible region of \eqref{opt:Q0}. We have the following result:
\begin{lemma} \label{lemma:Q0_eq_Q1}
Problem \eqref{opt:Q0} is equivalent to \eqref{opt:Q1}.
\end{lemma}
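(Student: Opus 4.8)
The plan is to read ``equivalent'' in the natural sense for optimization problems: \eqref{opt:Q0} and \eqref{opt:Q1} share the same feasible region $F$ (already noted), and $\lambda^*$ is a minimizer of $\rho$ over $F$ if and only if it is a maximizer of $q = 1/\rho$ over $F$, with optimal values related by $q^* = 1/\rho^*$. Since the transformation $\lambda \mapsto 1/\rho(\lambda)$ is applied pointwise, this will follow from the elementary fact that minimizing a strictly positive function and maximizing its reciprocal are the same problem, provided the reciprocal is well-defined throughout $F$.

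First I would verify that $\rho$ takes values in $(0,\infty)$ on all of $F$. The numerator satisfies $J(\lambda) = c^T\lambda \geq \min_{s,a} c(s,a) > 0$, because $c$ is $\mathbb{R}^+$-valued and $\lambda$ lies in the unit simplex; the denominator $\kappa + \widehat{H}(\lambda)$ is strictly positive by Assumption \ref{assum:two_nonzero_entries}, which forces $\widehat{H}(\lambda) > 0$ even when $\kappa = 0$. This is precisely the positivity already recorded in the proof of Lemma \ref{lemma:quasiconvex}, so $q(\lambda) = 1/\rho(\lambda)$ is well-defined and finite on $F$.

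Next I would invoke that $t \mapsto 1/t$ is a strictly decreasing bijection of $(0,\infty)$ onto itself: for any $\lambda_1, \lambda_2 \in F$ we have $\rho(\lambda_1) \leq \rho(\lambda_2)$ if and only if $q(\lambda_1) \geq q(\lambda_2)$. Quantifying $\lambda_2$ over all of $F$ handles global optima, and quantifying it over a neighborhood of $\lambda_1$ in $F$ handles local optima, so the ordering of objective values is exactly reversed and no solutions are created or destroyed. Consequently $\argmin_{\lambda \in F} \rho(\lambda) = \argmax_{\lambda \in F} q(\lambda)$ and $q^* = 1/\rho^*$, which is the claimed equivalence.

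The only point requiring any care, and it is a minor one, is the strict positivity of the denominator: without Assumption \ref{assum:two_nonzero_entries} the reciprocal transformation could break down at boundary points where $\widehat{H}(\lambda) = 0$ and $\kappa = 0$. Since that assumption rules such points out, no genuine obstacle remains, and the result reduces to the monotonicity argument above.
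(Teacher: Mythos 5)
Your proof is correct and follows essentially the same route as the paper's: both establish that $\rho$ is strictly positive and finite on the shared feasible region $F$ (you verify this directly, the paper cites Lemma \ref{lemma:quasiconvex}) and then use the order-reversing property of the reciprocal map on $(0,\infty)$ to conclude that minimizers of $\rho$ and maximizers of $q = 1/\rho$ coincide. Your added remark about local optima is a harmless extra; the paper only needs the global statement.
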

\begin{proof}
Assume $\lambda^*$ is optimal for \eqref{opt:Q0}, i.e. $\lambda^* \in F$ and $\rho(\lambda^*) \leq \rho(\lambda)$, for all $\lambda \in F$. By Lemma \ref{lemma:quasiconvex}, there exists $M > 0$ such that $0 < \rho(\lambda^*) \leq \rho(\lambda) < M < \infty$, for all $\lambda \in F$. Clearly $0 < 1/M < q(\lambda) \leq q(\lambda^*) < \infty$, for all $\lambda \in F$, so $\lambda^*$ is optimal to \eqref{opt:Q1}. By an analogous argument, any optimal solution to \eqref{opt:Q1} is optimal to~\eqref{opt:Q0}.
\end{proof}
\newstuff{
\noindent The foregoing Lemma proves that solving \eqref{opt:Q1} also solves \eqref{opt:Q0}. Crucially, as shown in Theorem \ref{thm:quasiconcave_to_concave} below, we can in fact transform \eqref{opt:Q1} into a \textit{concave} optimization problem, which will allow us to indirectly solve \eqref{opt:Q0}. Before presenting the theorem, we provide an important definition.
\begin{definition} \label{def:perspective_transform}
Given $f : \mathbb{R}^n \rightarrow \mathbb{R}$, the \textit{perspective} of $f$ is the function $P_f : \mathbb{R}^{n+1} \rightarrow \mathbb{R}$ given by $P_f(x, t) = t f(x / t)$ with domain $\textbf{dom}(P_f) = \{ (x, t) \ | \ x / t \in \textbf{dom}(f), t > 0\}$.
\end{definition}
We now proceed with the theorem, whose proof follows that of \cite[Prop. 7.2]{avriel2010generalized}.
\begin{theorem} \label{thm:quasiconcave_to_concave}
    The quasiconcave program \eqref{opt:Q1} can be converted via the variable transformation $y = \frac{\lambda}{c^T \lambda}, \hspace{2mm} t = \frac{1}{c^T \lambda}$ into the following concave program: 
    \begin{maxi*}
        {y, t}{\kappa t - \sum_s \sum_a y_{sa} \log \left( \frac{\sum_a y_{sa}}{t} \right)}{}{}
        \addConstraint{\sum_s \sum_a y_{sa}}{= t} \tag{$Q'''$} \label{opt:Q3}
        \addConstraint{\sum_a y_{sa}}{= \sum_{s'} \sum_a p(s | s', a) y_{s'a},}{\hspace{4mm} \forall s \in \mathcal{S}}
        \addConstraint{\sum_s \sum_a c_{sa} y_{sa}}{= 1}
        \addConstraint{y}{\geq 0}.
        % \addConstraint{y / t}{\geq 0}.
    \end{maxi*}
\end{theorem}
\begin{proof}
    First, the transformation $y = \frac{\lambda}{c^T \lambda}, \hspace{2mm} t = \frac{1}{c^T \lambda}$ clearly provides a bijection between the feasible regions of \eqref{opt:Q1} and \eqref{opt:Q3}. Next, let $f(\lambda) = \kappa + \widehat{H}(\lambda)$ denote the numerator of $q(\lambda) = \big( \kappa + \widehat{H}(\lambda) \big) / c^T \lambda$. It is immediate that $q(\lambda) = t f(y / t)$, and recalling the definition $\widehat{H}(\lambda) = - \sum_{s,a} \lambda_{sa} \log \left( \sum_a \lambda_{sa} \right)$ allows us to see that $t f(y / t) = \kappa t - \sum_{sa} y_{sa} \log \left( \sum_a y_{sa} / t \right)$. The objectives of \eqref{opt:Q1} and \eqref{opt:Q3} thus share the same value for corresponding points in their feasible regions. Since $\widehat{H}(\lambda)$ is concave in $\lambda$, and since the perspective transform of a concave function is itself concave by \cite[\S3.6.2]{boyd2004convex}, the objective $t f(y / t)$ of \eqref{opt:Q3} is concave. Finally, since the feasible region of \eqref{opt:Q3} is determined by linear equalities and positivity constraints, its feasible region is convex. Problem \eqref{opt:Q3} is thus a concave program.
\end{proof}
}
Given an optimal solution $(y^*, t^*)$ to \eqref{opt:Q3}, we can recover an optimal solution $\lambda^* = y^* / t^*$ to \eqref{opt:Q1}, which by Lemma \ref{lemma:Q0_eq_Q1} is also optimal for \eqref{opt:Q0}. As noted above, the policy $\pi^*(a | s) = \lambda^*_{sa} / \sum_{a'} \lambda^*_{sa'}$ thus minimizes the OIR over the MDP $(\mc{S}, \mc{A}, p, c)$. This proves Theorem \ref{thm:Q0_eq_Q0_concave}.

\subsection{Policy Gradients}

% As shown above, we can efficiently solve the OIR optimization problem given knowledge of the cost and transition functions. In most cases the environment is unknown, however, so we resort to direct policy search methods, which are \textit{model-free} in that they can learn an optimal policy without knowledge of the environment.
%
% In this section, we lay the groundwork for model-free OIR policy search methods by deriving an OIR policy gradient theorem that allows us to sample from the gradient of \eqref{eqn:OIR} with respect to the policy parameters.
%
% Recall the definition of the OIR given in \eqref{eqn:OIR}:
% %
% \begin{equation*}
%     \rho(\theta) = \frac{J(\theta)}{\kappa + H(d_{\theta})},
% \end{equation*}
% %
% where $\kappa \geq 0$ is some user-specified constant.
%
% We are interested in minimizing the objective \eqref{eqn:OIR} by performing stochastic gradient descent in policy parameters $\theta$.
%
Sampling the gradient of \eqref{eqn:OIR} is not straightforward using existing tools, as obtaining stochastic estimates of $\nabla \rho(\theta)$ involves estimating
\begin{equation} \label{eqn:info_ratio_gradient}
    \nabla \rho(\theta) %= \frac{\nabla J(\theta)}{H(d_{\theta})} - \frac{J(\theta)}{[H(d_{\theta})]^2} \nabla H(d_{\theta}) 
    = \frac{\nabla J(\theta) (\kappa + H(d_{\theta})) - J(\theta)  \nabla H(d_{\theta})}{[\kappa + H(d_{\theta})]^2}.
\end{equation}
Though we can use the classical policy gradient theorem [cf. Eq. \eqref{eqn:classic_pgt}] to estimate $\nabla J(\theta)$ and we can empirically estimate $J(\theta)$ and $H(d_{\theta})$, it is not obvious how to estimate $\nabla H(d_{\theta})$. In what follows we prove an \textit{entropy gradient theorem} that allows us to estimate $\nabla H(d_{\theta})$ and consequently $\nabla \rho(\theta)$.

% First, we recall some preliminary material from the classic policy gradient literature. Second, en route to proving our OIR policy gradient theorem, we derive tractable expressions for the gradient of the entropy of the state occupancy measure, $\nabla H(d_{\theta})$, as well as the gradient of the entropy of the state-action occupancy measure, $\nabla H(\lambda_{\theta})$. These \textit{entropy gradient theorems} are likely of independent interest. Last, we provide our OIR policy gradient theorem, which we use in the subsequent section to develop policy gradient algorithms for minimizing the~OIR.

\subsubsection{Policy Gradient Preliminaries}
%
% Consider a finite, average-cost MDP $(\mathcal{S}, \mathcal{A}, p, c)$ and parametrized policy class $\{ \pi_{\theta} \}_{\theta \in \Theta}$.
%
Given an MDP $(\mathcal{S}, \mathcal{A}, p, c)$ and policy $\pi_{\theta}$, two important objects from the RL literature are the relative state value function
%
% \begin{equation*}
%     V_{\theta}(s) = \sum_{t = 0}^{\infty} \mathbb{E}_{\pi_{\theta}} \left[ c(s, a) - J(\theta) \ | \ s_0 = s \right]
% \end{equation*}
%
$V_{\theta}(s) = \sum_{t = 0}^{\infty} \mathbb{E}_{\pi_{\theta}} \left[ c(s, a) - J(\theta) \ | \ s_0 = s \right]$
and the relative action value function
%
% \begin{equation*}
%     Q_{\theta}(s, a) = \sum_{t = 0}^{\infty} \mathbb{E}_{\pi_{\theta}} \left[ c(s, a) - J(\theta) \ | \ s_0 = s, a_0 = a \right],
% \end{equation*}
%
$Q_{\theta}(s, a) = \sum_{t = 0}^{\infty} \mathbb{E}_{\pi_{\theta}} \left[ c(s, a) - J(\theta) \ | \ s_0 = s, a_0 = a \right].$
Under the assumption that $\pi_{\theta}(a | s)$ is differentiable in $\theta$, for all $s \in \mc{S}, a \in \mc{A}$, classic policy gradient methods minimize $J(\theta)$ by taking stochastic gradient descent steps in the direction $- \nabla J(\theta)$.
% The most fundamental approach is to simply perform gradient updates $\theta \gets \theta - \eta \nabla J(\theta)$, where $\eta > 0$ is a scalar stepsize. In order to carry out this update, it is necessary to estimate $\nabla J(\theta)$ at each step.
We are guaranteed by the policy gradient theorem \cite{sutton1999policy} that, under certain conditions,
\begin{equation}
    \nabla J(\theta) = \sum_s d_{\theta}(s) \sum_a Q_{\theta}(s, a) \nabla \pi_{\theta}(a | s) = \mathbb{E}_{\pi_{\theta}} \Big[ ( c(s, a) - J(\theta) ) \nabla \log \pi_{\theta}(a | s) \Big]. \label{eqn:classic_pgt}
\end{equation}
By following policy $\pi_{\theta}$, we can sample from the right-hand side of \eqref{eqn:classic_pgt} to estimate $\nabla J(\theta)$, then use this to perform stochastic gradient descent. % This expression will prove useful for obtaining the gradient of the OIR in \eqref{eqn:info_ratio_gradient}.

\subsubsection{Cross-Entropy Gradient}
To estimate $\nabla \rho(\theta)$ we must know how to estimate $\nabla H(d_{\theta})$. Fortunately, by using the relationship between entropy and cross-entropy, $\nabla H(d_{\theta})$ can be estimated in a straightforward manner.
Given two policy parameters $\theta$ and $\theta'$,
the cross-entropy between $d_{\theta}$ and $d_{\theta'}$ is given by $CE(d_{\theta}, d_{\theta'}) = - \sum_s d_{\theta}(s) \log d_{\theta'}(s)$ and their Kullback-Leibler (KL) divergence by $D_{KL}(d_{\theta} \ || \ d_{\theta'}) = \sum_s \log \left( \frac{d_{\theta}(s)}{d_{\theta'}(s)} \right) d_{\theta}(s)$.
Recall that $CE(d_{\theta}, d_{\theta'}) = H(d_{\theta}) + D_{KL}(d_{\theta} \ || \ d_{\theta'})$. We have:\footnote{For a function $f : \Theta \rightarrow \mathbb{R}$, we sometimes write $\nabla f(\theta) \rvert_{\theta = \theta_t}$ to emphasize the fact that the gradient of $f$ w.r.t. $\theta$ is being taken first, then subsequently evaluated at $\theta = \theta_t$.}

\begin{lemma} \label{lemma:entropy_cross_entropy_gradient}
For any $\theta' \in \Theta$,
\begin{align} \label{eqn:s_entropy_gradient}
    \nabla H(d_{\theta}) \big\rvert_{\theta = \theta'} = \nabla CE(d_{\theta}, d_{\theta'}) \big\rvert_{\theta = \theta'}.
\end{align}
\end{lemma}

\begin{proof}
Notice that
\begin{align}
    \nabla CE(d_{\theta}, d_{\theta'}) \big\rvert_{\theta = \theta'} &= \nabla \left[ H(d_{\theta}) + D_{KL}(d_{\theta} \ || \ d_{\theta'}) \right] \big\rvert_{\theta = \theta'} \nonumber \\
    &= \nabla H(d_{\theta}) \big\rvert_{\theta = \theta'} + \nabla D_{KL}(d_{\theta} \ || \ d_{\theta'}) \big\rvert_{\theta = \theta'}. \label{eqn:ce_entropy_kl}
\end{align}
Expanding the term $\nabla D_{KL}(d_{\theta} \ || \ d_{\theta'})$, where the gradient is being taken w.r.t. $\theta$ and $\theta'$ is fixed, we get
\small
\begin{align*}
    \nabla D_{KL}( d_{\theta} \ || \ d_{\theta'} ) &= \sum_s \nabla \left[ \log \left( \frac{d_{\theta}(s)}{d_{\theta'}(s)} \right) d_{\theta}(s) \right] = \sum_s \nabla \left[ ( \log d_{\theta}(s) - \log d_{\theta'}(s) ) d_{\theta}(s) \right] \nonumber \\
    &\labelrel={eqn:kl1} \sum_s \left[ \left( \nabla [ \log d_{\theta}(s) - \log d_{\theta'}(s) ] \right) d_{\theta}(s) + [ \log d_{\theta}(s) - \log d_{\theta'}(s) ] \nabla d_{\theta}(s) \right] \nonumber \\
    &\labelrel={eqn:kl2} \sum_s \left[ \frac{\nabla d_{\theta}(s)}{d_{\theta}(s)} d_{\theta}(s) + [ \log d_{\theta}(s) - \log d_{\theta'}(s) ] \nabla d_{\theta}(s) \right] \nonumber \\
    &\labelrel={eqn:kl3} \sum_s \left[ 1 + \log d_{\theta}(s) - \log d_{\theta'}(s) \right] \nabla d_{\theta}(s), 
\end{align*}
\normalsize
where \eqref{eqn:kl1} follows from the product rule, \eqref{eqn:kl2} holds by the chain rule, and the $d_{\theta}(s)$ terms cancel to yield \eqref{eqn:kl3}. Evaluating the last expression at $\theta = \theta'$, we end up with
\small
\begin{align*}
    \nabla D_{KL}(d_{\theta} \ || \ d_{\theta'}) \big\rvert_{\theta = \theta'} &= \left( \sum_s \left[ 1 + \log d_{\theta}(s) - \log d_{\theta'}(s) \right] \nabla d_{\theta}(s) \right) \Bigg\rvert_{\theta = \theta'} \\
    &\labelrel={eqn:kl4} \sum_s \left[ 1 + \log d_{\theta'}(s) - \log d_{\theta'}(s) \right] \left( \nabla d_{\theta}(s) \big\rvert_{\theta = \theta'} \right) \\
    &= \sum_s \left( \nabla d_{\theta}(s) \big\rvert_{\theta = \theta'} \right)
    \labelrel={eqn:kl5} \left( \sum_s \nabla d_{\theta}(s) \right) \Bigg\rvert_{\theta = \theta'}
    \labelrel={eqn:kl6} \left( \nabla \sum_s d_{\theta}(s) \right) \Bigg\rvert_{\theta = \theta'} \\
    &\labelrel={eqn:kl7} \nabla 1 \big\rvert_{\theta = \theta'} = 0,
\end{align*}
\normalsize
where we evaluate the terms $\log d_{\theta}(s)$ and $\nabla d_{\theta}(s)$ at $\theta = \theta'$ to get \eqref{eqn:kl4}, move the evaluation $\theta = \theta'$ outside the summation in \eqref{eqn:kl5}, pull the gradient outside the summation to obtain \eqref{eqn:kl6}, and finally recall that $\sum_s d_{\theta}(s) = 1$ to get \eqref{eqn:kl7}.
But, recalling equation \eqref{eqn:ce_entropy_kl}, this means that $\nabla H(d_{\theta}) \big\rvert_{\theta = \theta'} = \nabla CE(d_{\theta}, d_{\theta'}) \big\rvert_{\theta = \theta'}$,
%
% \begin{align*}
%     \nabla H(d_{\theta}) \big\rvert_{\theta = \theta'} = \nabla CE(d_{\theta}, d_{\theta'}) \big\rvert_{\theta = \theta'},
% \end{align*}
%
completing the proof.
\end{proof}

This establishes an important fact: \textit{we can estimate $\nabla H(d_{\theta}) \rvert_{\theta = \theta_t}$ by instead estimating $\nabla CE(d_{\theta}, d_{\theta_t}) \rvert_{\theta = \theta_t}$.}
At first glance, this simply substitutes one problem for another.
However, given a fixed $\theta_t$, for any $\theta$, we can use the policy gradient theorem \eqref{eqn:classic_pgt} to obtain a tractable expression for $\nabla CE(d_{\theta}, d_{\theta_t}) \rvert_{\theta = \theta_t}$, as described next.

\subsubsection{Entropy and OIR Policy Gradients}
Our next results enable policy gradient algorithms for maximizing $H(d_{\theta})$ and minimizing \eqref{eqn:OIR}.

\begin{theorem}
Let an MDP $(\mathcal{S}, \mathcal{A}, p, c)$ and a differentiable parametrized policy class $\{ \pi_{\theta} \}_{\theta \in \Theta}$ be given, and recall the definition above of the state occupancy measure $d_{\theta}$ induced by $\pi_{\theta}$ on $\mathcal{S}$. Fix a policy parameter iterate $\theta_t$ at time-step $t$. The gradient $\nabla H(d_{\theta}) \rvert_{\theta = \theta_t}$ [cf. \eqref{eqn:s_entropy_gradient}] with respect to the policy parameters $\theta$ of the state occupancy measure entropy $H(d_{\theta})$, evaluated at $\theta = \theta_t$, satisfies
\label{thm:entropy_gradient}
    \begin{equation} \label{eqn:entropy_gradient}
        \nabla H(d_{\theta}) \big\rvert_{\theta = \theta_t} = \mathbb{E}_{\pi_{\theta_t}} \Big[ \left( - \log d_{\theta_t}(s) - H(d_{\theta_t}) \right) \nabla \log \pi_{\theta_t}(a | s) \Big].
        %
        % \nabla H(d_{\theta}) \big\rvert_{\theta = \theta_t} = \mathbb{E}_{\pi_{\theta_t}} \left[ \delta_t^H \nabla \log \pi_{\theta_t}(a | s) \right],
    \end{equation}
%
% where $\delta_t^H = - \log d_{\theta_t}(s) - H(d_{\theta_t})$.
\end{theorem}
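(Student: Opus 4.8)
The plan is to exploit Lemma~\ref{lemma:entropy_cross_entropy_gradient}, which reduces the problem of computing $\nabla H(d_\theta)\rvert_{\theta=\theta_t}$ to computing the cross-entropy gradient $\nabla CE(d_\theta, d_{\theta_t})\rvert_{\theta=\theta_t}$. Since $\theta_t$ is held fixed, the target distribution $d_{\theta_t}$ in the cross-entropy is a \emph{constant} with respect to the differentiation variable $\theta$, so only the leading $d_\theta(s)$ in $CE(d_\theta, d_{\theta_t}) = -\sum_s d_\theta(s)\log d_{\theta_t}(s)$ is differentiated. This is precisely the point of introducing the cross-entropy: it linearizes the awkward $\log d_\theta$ factor into the constant $\log d_{\theta_t}$, turning the entropy gradient into something structurally identical to a cost-sensitivity computation.

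First I would write $\nabla CE(d_\theta, d_{\theta_t})\rvert_{\theta=\theta_t} = -\sum_s \big(\log d_{\theta_t}(s)\big)\,\nabla d_\theta(s)\rvert_{\theta=\theta_t}$, treating $-\log d_{\theta_t}(s)$ as a fixed per-state ``cost.'' The key observation is that this has exactly the form of the gradient of an average-cost objective $J(\theta) = \sum_s d_\theta(s)\,\bar c(s)$ with state-dependent cost $\bar c(s) = -\log d_{\theta_t}(s)$, for which the classical policy gradient theorem~\eqref{eqn:classic_pgt} applies. The plan is therefore to invoke~\eqref{eqn:classic_pgt} with this surrogate cost in place of $c(s,a)$. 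Doing so yields an expression of the form $\mathbb{E}_{\pi_{\theta_t}}\big[(-\log d_{\theta_t}(s) - b)\,\nabla\log\pi_{\theta_t}(a\mid s)\big]$, where $b$ is the corresponding average of the surrogate cost under the stationary distribution, i.e. the baseline term $\sum_s d_{\theta_t}(s)(-\log d_{\theta_t}(s)) = H(d_{\theta_t})$. That is exactly the claimed constant $-H(d_{\theta_t})$ appearing inside the expectation in~\eqref{eqn:entropy_gradient}.

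The main obstacle will be justifying the substitution of a \emph{state-only} surrogate cost $\bar c(s)$ into the policy gradient theorem~\eqref{eqn:classic_pgt}, which as stated in the excerpt is phrased for a state-action cost $c(s,a)$, and verifying that the baseline $J(\theta)$ appearing there specializes to $H(d_{\theta_t})$ at $\theta=\theta_t$. Concretely, I would check that with $\bar c(s)$ independent of $a$, the relative value functions and the average-cost subtraction in~\eqref{eqn:classic_pgt} behave as expected, and that the average-cost term evaluated with the surrogate cost equals $\sum_s d_{\theta_t}(s)\big(-\log d_{\theta_t}(s)\big) = H(d_{\theta_t})$. A subtle point worth confirming is that the whole argument is self-consistent at the single point $\theta=\theta_t$: the surrogate cost $-\log d_{\theta_t}(s)$ is defined using the fixed iterate, but the gradient $\nabla d_\theta(s)$ is taken in the free variable $\theta$ and only afterward evaluated at $\theta_t$, matching the footnoted convention in the excerpt.

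Once these pieces are assembled, the result follows by direct identification: the surrogate-cost policy gradient gives $-\mathbb{E}_{\pi_{\theta_t}}\big[(\log d_{\theta_t}(s) + H(d_{\theta_t}))\,\nabla\log\pi_{\theta_t}(a\mid s)\big]$, which is exactly~\eqref{eqn:entropy_gradient}. I would close by remarking that, as with the classical policy gradient, this expression is an expectation over trajectories generated by $\pi_{\theta_t}$ and hence admits straightforward Monte Carlo estimation, provided one has access to (estimates of) $\log d_{\theta_t}(s)$ and $H(d_{\theta_t})$ along the sampled states.
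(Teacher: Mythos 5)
Your proposal is correct and follows essentially the same route as the paper's proof: the paper likewise invokes Lemma \ref{lemma:entropy_cross_entropy_gradient}, then treats $-\log d_{\theta_t}(s)$ as the state-only reward of a ``shadow MDP'' $(\mathcal{S}, \mathcal{A}, p, r_t)$, applies the classical policy gradient theorem \eqref{eqn:classic_pgt} to the resulting average-reward objective $CE(d_{\theta}, d_{\theta_t})$, and identifies the baseline $\mu^t(\theta_t) = \sum_s d_{\theta_t}(s)(-\log d_{\theta_t}(s)) = H(d_{\theta_t})$. The points you flag as needing verification (the state-only cost and the self-consistency of fixing $\theta_t$ while differentiating in $\theta$) are exactly the details the paper handles via the observation $\sum_a \pi_{\theta}(a|s) = 1$ and the evaluation-after-differentiation convention.
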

\begin{proof}
    At a given $t$, consider the average-reward MDP $(\mathcal{S}, \mathcal{A}, p, r_t)$, where $r_t: \mathcal{S} \rightarrow \mathbb{R}^+$ is the purely state-dependent (i.e., action-independent) reward given by $r_t(s) = -\log d_{\theta_t}(s)$. We refer to the MDP $(\mathcal{S}, \mathcal{A}, p, r_t)$ as the \textit{shadow MDP} associated with parameter $\theta_t$. Define $\mu^t(\theta) = \lim_{n \rightarrow \infty} \frac{1}{n} \mathbb{E} \left[ \sum_{i=1}^n r_t(s_i) \ | \ \pi_{\theta} \right]$ and $Q^t_{\theta}(s, a) = \sum_{n=1}^{\infty} \mathbb{E} \left[ r_t(s_n) - \mu^t(\theta) \ | \ s_0 = s, a_0 = a, \pi_{\theta} \right]$.
    % 
    % \begin{equation*}
    %     \mu^t(\theta) = \lim_{n \rightarrow \infty} \frac{1}{n} \mathbb{E} \left[ \sum_{i=1}^n r_t(s_i) \ | \ \pi_{\theta} \right],
    % \end{equation*}
    %
    % \begin{equation*}
    %     Q^t_{\theta}(s, a) = \sum_{n=1}^{\infty} \mathbb{E} \left[ r_t(s_n) - \mu^t(\theta) \ | \ s_0 = s, a_0 = a, \pi_{\theta} \right].
    % \end{equation*}
    %
    Here $\mu^t(\theta)$ is the expected long-run average reward of using $\pi_{\theta}$ on the shadow MDP $(\mathcal{S}, \mathcal{A}, p, r_t)$, while $Q^t_{\theta}$ is the corresponding state-action value function. Given a policy $\pi_{\theta}$, 
    \small
    \begin{equation*}
    CE(d_{\theta}, d_{\theta_t}) = - \sum_s d_{\theta}(s) \log d_{\theta_t}(s) = \sum_s d_{\theta}(s) \ r_t(s) = \sum_s d_{\theta}(s) \sum_a \pi_{\theta}(a | s) \ r_t(s),
    \end{equation*}
    \normalsize
    where the last equality holds by the fact that $\sum_a \pi_{\theta}(a | s) = 1$, for all $s \in \mathcal{S}$.
    The policy gradient theorem \eqref{eqn:classic_pgt} can thus be used to yield the expression
    \small
    \begin{equation} \label{eqn:CE_pgt}
        \nabla CE(d_{\theta}, d_{\theta_t}) = \sum_s d_{\theta}(s) \sum_a Q^t_{\theta}(s, a) \nabla \log \pi_{\theta}(a | s).
    \end{equation}
    \normalsize
    Combining expression \eqref{eqn:CE_pgt} with Lemma \ref{lemma:entropy_cross_entropy_gradient} and noticing that $\mu^t(\theta_t) = H(d_{\theta_t})$ yields equation \eqref{eqn:entropy_gradient}.
\end{proof}
%
% \noindent We provide a similar result in Appendix \ref{sec:proofs}, Corollary \ref{cor:state_action_entropy_gradient}, for the state-action entropy $H(\lambda_{\theta})$.
%
%
% \begin{remark*}
% In this work we are interested in estimating $\nabla H(d_{\theta})$ because it is essential for estimating \eqref{eqn:info_ratio_gradient}. However, we believe Theorem \ref{thm:entropy_gradient} is of independent interest. Recent works \cite{hazan2019provably, zhang2020variational, lee2019efficient} consider the problem of maximizing entropy of occupancy measures, but none provide a simple gradient expression that can be used to directly perform policy gradient ascent on $H(d_{\theta})$ (or $H(\lambda_{\theta})$).
% %
% Theorem \ref{thm:entropy_gradient} resolves this issue, and straightforward policy gradient schemes can be developed using it. In addition, these results can be used to augment standard policy gradient schemes with a state or state-action exploration bonus.
% \end{remark*}
%
With Theorem \ref{thm:entropy_gradient} in hand, we have the following OIR policy gradient theorem:
\begin{theorem}
Let MDP $(\mathcal{S}, \mathcal{A}, p, c)$, differentiable policy class $\{ \pi_{\theta} \}_{\theta \in \Theta}$, and constant $\kappa \geq 0$ be given, and recall the definitions of the average cost $J(\theta)$, state occupancy measure $d_{\theta}$, and entropy $H(d_{\theta})$. Fix a policy parameter iterate $\theta_t$ at time-step $t$. The gradient $\nabla \rho(\theta_t)$ [cf. \eqref{eqn:info_ratio_gradient}] with respect to the policy parameters $\theta$ of the OIR $\rho(\theta)$ [cf. \eqref{eqn:OIR}], evaluated at $\theta = \theta_t$, satisfies
\label{thm:info_ratio_gradient}
    \small
    \begin{equation} \label{eqn:info_ratio_gradient_thm}
        % \nabla \rho(\theta_t) = \mathbb{E}_{\pi_{\theta_t}} \left[ \frac{ \big( c(s, a) - J(\theta_t) \big) \big( \kappa + H(d_{\theta_t}) \big) - J(\theta_t) \big( -\log d_{\theta_t}(s) - H(d_{\theta_t}) \big)}{ \left[ \kappa + H(d_{\theta_t}) \right]^2} \nabla \log \pi_{\theta_t}(a | s) \right].
        %
        \nabla \rho(\theta_t) = \mathbb{E}_{\pi_{\theta_t}} \left[ \frac{ \delta_t^J \big( \kappa + H(d_{\theta_t}) \big) - J(\theta_t) \delta_t^H}{ \left[ \kappa + H(d_{\theta_t}) \right]^2} \psi_t \right],
    \end{equation}
    \normalsize
where $\delta_t^J = c(s, a) - J(\theta_t)$, $\delta_t^H = -\log d_{\theta_t}(s) - H(d_{\theta_t})$, and $\psi_t = \nabla \log \pi_{\theta_t}(a | s)$.
\end{theorem}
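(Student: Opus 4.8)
The plan is to derive \eqref{eqn:info_ratio_gradient_thm} by combining the quotient rule with the two policy gradient expressions already established. First I would invoke the quotient rule to obtain the general expression \eqref{eqn:info_ratio_gradient} for $\nabla \rho(\theta)$ in terms of $\nabla J(\theta)$, $\nabla H(d_{\theta})$, and the scalars $J(\theta)$ and $\kappa + H(d_{\theta})$. This step only requires that $J$ and $H \circ d$ be differentiable in $\theta$, which is guaranteed by differentiability of the policy class together with the standing hypotheses underlying the classic policy gradient theorem \eqref{eqn:classic_pgt} and Theorem \ref{thm:entropy_gradient}. I would then evaluate the resulting expression at the fixed iterate $\theta = \theta_t$.

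Second, I would substitute the two tractable gradient formulas into the numerator. For the first term I use the classic policy gradient theorem \eqref{eqn:classic_pgt}, which gives $\nabla J(\theta_t) = \mathbb{E}_{\pi_{\theta_t}}\big[ (c(s,a) - J(\theta_t)) \nabla \log \pi_{\theta_t}(a \mid s) \big]$; for the second term I use the entropy gradient theorem \eqref{eqn:entropy_gradient}, which gives $\nabla H(d_{\theta}) \big\rvert_{\theta = \theta_t} = \mathbb{E}_{\pi_{\theta_t}}\big[ (-\log d_{\theta_t}(s) - H(d_{\theta_t})) \nabla \log \pi_{\theta_t}(a \mid s) \big]$. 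Both are expectations over the \emph{same} sampling distribution $\pi_{\theta_t}$ of a scalar weight multiplying the common score $\nabla \log \pi_{\theta_t}(a \mid s)$.

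Third, I would merge the two expectations into one. The key observation is that the factors $\kappa + H(d_{\theta_t})$, $J(\theta_t)$, and the denominator $[\kappa + H(d_{\theta_t})]^2$ are all deterministic given $\theta_t$; they do not depend on the sampled pair $(s,a)$ and hence pass freely through the expectation operator. Pulling each such scalar inside its respective expectation and collecting terms over the common measure $\pi_{\theta_t}$ collapses the whole expression into the single expectation \eqref{eqn:info_ratio_gradient_thm}, whose bracketed scalar weight is exactly the quotient-rule numerator divided by $[\kappa + H(d_{\theta_t})]^2$ with $\nabla J(\theta_t)$ and $\nabla H(d_{\theta})\big\rvert_{\theta=\theta_t}$ replaced by their integrand weights.

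I do not anticipate a genuine obstacle: once \eqref{eqn:classic_pgt} and Theorem \ref{thm:entropy_gradient} are in hand, the result is essentially a bookkeeping exercise. The only points warranting explicit care are, first, the legitimacy of interchanging the gradient with the limiting and expectation operations defining $J(\theta)$ and $H(d_{\theta})$ — but this is already discharged by the hypotheses of the two results being invoked, so the present argument inherits their validity — and second, the fact that Theorem \ref{thm:entropy_gradient} (via Lemma \ref{lemma:entropy_cross_entropy_gradient}) supplies $\nabla H(d_{\theta})$ only \emph{after} evaluation at $\theta = \theta_t$. Consequently the entire derivation must be carried out at the fixed iterate $\theta_t$ rather than at a symbolic $\theta$, which is precisely why the theorem is phrased in terms of $\nabla \rho(\theta_t)$.
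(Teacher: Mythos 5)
Your proposal is correct and follows exactly the paper's own argument: the paper proves Theorem \ref{thm:info_ratio_gradient} by combining the quotient-rule expression \eqref{eqn:info_ratio_gradient} with the classic policy gradient theorem \eqref{eqn:classic_pgt} and the entropy gradient theorem (Theorem \ref{thm:entropy_gradient}), merging the two expectations over the common measure $\pi_{\theta_t}$ by linearity. Your additional care about evaluating at the fixed iterate $\theta_t$ (since Lemma \ref{lemma:entropy_cross_entropy_gradient} only yields the gradient after evaluation at $\theta = \theta_t$) is a faithful elaboration of a point the paper leaves implicit.
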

The claim follows by combining equations \eqref{eqn:info_ratio_gradient} and \eqref{eqn:classic_pgt} with Theorem \ref{thm:entropy_gradient}.
Armed with this, we next develop policy gradient algorithms for minimizing the OIR.

\section{Algorithms} \label{sec:algorithms}

In this section we derive two policy search schemes for minimizing \eqref{eqn:OIR}. The first is based on the classic REINFORCE algorithm,
% performs direct policy search, using Monte Carlo rollouts to estimate the gradient \eqref{eqn:info_ratio_gradient_thm} of Theorem \ref{thm:info_ratio_gradient}.
while the second is an actor-critic scheme with two critics: a \textit{cost critic} and an \textit{entropy critic}. Throughout this section, we will assume that an average-cost MDP $(\mc{S}, \mc{A}, p, c)$ is fixed. The reward setting can be accommodated with minor changes by Remark \ref{rem:kappa_for_rewards}.
\subsection{Information-Directed REINFORCE}
The classic REINFORCE algorithm \cite{williams1992simple} 
generates a single, finite trajectory using a fixed policy, estimates the gradient of $J(\theta)$ based on the trajectory, and performs a corresponding stochastic gradient descent step.
%does not directly apply to minimizing \eqref{eqn:OIR} due to the presence of $H(d_{\theta})$ in the denominator.
We present a related algorithm, Information-Directed REINFORCE (ID-REINFORCE), that proceeds along similar lines to minimize the more complicated objective \eqref{eqn:OIR}.
%
% At a high level, the operation of the ID-REINFORCE algorithm for minimizing \eqref{eqn:OIR} is straightforward.
At each time-step $t$, the algorithm generates a trajectory using the current policy $\pi_{\theta_t}$. It then forms estimates of $J(\theta_t)$ and $H(d_{\theta_t})$ and in turn uses these to estimate $\nabla \rho(\theta_t)$ by leveraging \eqref{eqn:info_ratio_gradient_thm}. This gradient estimate is then used to update the policy parameters. Note that, in order to estimate $H(d_{\theta_t})$, it is necessary to first estimate $d_{\theta_t}$. This task is addressed both implicitly and explicitly in previous works \cite{hazan2019provably, lee2019efficient, zhang2020variational}. As in \cite{hazan2019provably}, for ease of exposition we assume access to an oracle \textsc{DensityEstimator} that returns the occupancy measure $d_{\theta} = \textsc{DensityEstimator} (\theta)$ when provided with input policy parameter $\theta \in \Theta$. When $\mc{S}$ is finite and not too large, \textsc{DensityEstimator} can be implemented by computing the empirical visitation probabilities for each of the states $s \in \mc{S}$ based on sample trajectories. We focus on this setting in this paper. When $\mc{S}$ is large or continuous, on the other hand, various parametric and nonparametric density estimation techniques can be used to implement \textsc{DensityEstimator}. Pseudocode for ID-REINFORCE is given in Algorithm \ref{alg:InfoREINFORCE}.

\hspace{-7mm}
\begin{minipage}{0.47\textwidth}
    \begin{algorithm}[H] % enter the algorithm environment
    \centering
    \caption{ID-REINFORCE} % give the algorithm a caption
    \label{alg:InfoREINFORCE} % and a label for \ref{} commands later in the document
    %\begin{minipage}[!t]{0.49\textwidth}
    \begin{algorithmic}[1] % enter the algorithmic environment
        \footnotesize
        \STATE{{\textbf{Initialization:}} Rollout length $K$, step-sizes $\eta > 0$ and $\tau \in (0, 1]$, policy class $\{ \pi_{\theta} \}_{\theta \in \Theta}$, entropy constant $\kappa \geq 0$. Sample $s_0$ and $\theta_0$, select $\mu^H_{-1}$, $\mu^J_{-1} > 0$. $t \gets 0$.}
        \REPEAT
            \STATE Generate $\{ (s_i, a_i) \}_{i = 1, \ldots, K} \sim \pi_{\theta_t}$
            \STATE $\widehat{J(\theta_t)} = \frac{1}{K} \sum_{i=1}^{K} c(s_i, a_i)$
            \STATE $\mu^J_t = (1 - \tau) \mu^J_{t-1} + \tau \widehat{J(\theta_t)}$ % \Comment*[r]{Update $J(\theta)$ estimate} % \\
            %
            % Compute estimate $\widehat{\nabla J(\theta_t)} = \frac{1}{K} \sum_{i=1}^K \left[ c(s_i, a_i) - \widetilde{\mu}^J_t \right] \nabla \log \pi_{\theta_t}(a_i | s_i)$ \\
            %
            % Compute estimate $\widehat{d_{\theta_t}}(s_i)$, for each $i = 1, \ldots, K$ \\
            %
            \STATE $d_{\theta_t} = \textsc{DensityEstimator}(\theta_t)$ % \Comment*[r]{Call to oracle} % \\
            \STATE $\widehat{H(d_{\theta_t})} = \frac{1}{K} \sum_{i=1}^K \left(- \log d_{\theta_t}(s_i) \right)$
            \STATE $\mu^H_t = (1 - \tau) \mu^H_{t-1} + \tau \widehat{H(d_{\theta_t})}$ % \Comment*[r]{Update $H(d_{\theta})$ estimate} % \\
            %
            %
            % \small
            % \STATE $\widehat{\nabla \rho(\theta_t)} = \frac{1}{\left[ \kappa + \widetilde{\mu}^H_t \right]^2} \frac{1}{K} \sum_{i=1}^K \left[ \left( c(s_i, a_i) - \widetilde{\mu}^J_t \right) \left(\kappa + \widetilde{\mu}^H_t \right)  - \widetilde{\mu}^J_t \left( - \log d_{\theta_t}(s_i) - \widetilde{\mu}^H_t \right) \right] \nabla \log \pi_{\theta_t}(a_i | s_i)$
            % \normalsize
            %
            \FOR{$i = 1, \ldots, k$}
                \STATE $\delta_i^J = c(s_i, a_i) - \mu^J_t$
                \STATE $\delta_i^H = - \log d_{\theta_t}(s_i) - \mu^H_t$
                \STATE $\psi_i = \nabla \log \pi_{\theta_t}(a_i | s_i)$
                \STATE $\Delta_i = \delta_i^J \left(\kappa + \mu^H_t \right)  - \mu^J_t \delta_i^H$
            \ENDFOR
            %
            % \small
            % \STATE $\widehat{\nabla \rho(\theta_t)} = \frac{1}{K \left[ \kappa + \mu^H_t \right]^2} \sum_{i=1}^K \left[ \delta_i^J \left(\kappa + \mu^H_t \right)  - \mu^J_t \delta_i^H \right] \psi_i$
            %
            % \STATE $\widehat{\nabla \rho(\theta_t)} = \frac{1}{K \left[ \kappa + \mu^H_t \right]^2} \sum_{i=1}^K \Delta_i \psi_i$
            % \normalsize
            %
            % \STATE $\theta_{t+1} = \theta_t - \eta \widehat{\nabla \rho(\theta_t)}$
            %
            \STATE $\theta_{t+1} = \theta_t - \eta \frac{1}{K \left[ \kappa + \mu^H_t \right]^2} \sum_{i=1}^K \Delta_i \psi_i$
            % \Comment*[r]{Policy gradient update} % \\
            %
            \STATE $t \gets t + 1$
        \UNTIL{convergence}
        \normalsize
    \end{algorithmic}
    %\end{minipage}
    \end{algorithm}
\end{minipage}
\hfill
\begin{minipage}{0.51\textwidth}
    \begin{algorithm}[H] % enter the algorithm environment
    \centering
    \caption{IDAC} % give the algorithm a caption
    \label{alg:irac} % and a label for \ref{} commands later in the document
    %\begin{minipage}[!t]{0.49\textwidth}
    \begin{algorithmic}[1] % enter the algorithmic environment
        \footnotesize
        \STATE{{\textbf{Initialization:}} Rollout length $K$, stepsizes $\{ \alpha_t \}, \{ \beta_t \}, \{ \tau_t \}$, policy class $\{ \pi_{\theta} \}_{\theta \in \Theta}$, critic class $\{ v_{\omega} \}_{\omega \in \Omega}$, entropy constant $\kappa \geq 0$. Sample $s_0, \theta_0, \omega_0^J, \omega_0^H$, select $\mu^H_{-1}$, $\mu^J_{-1} > 0$. $t \gets 0$.}
        \vspace{-3mm}
        \REPEAT
            \STATE Generate $\{ (s_i, a_i) \}_{i = 1, \ldots, K} \sim \pi_{\theta_t}$
            %
            % $\widehat{J(\theta_t)} = \frac{1}{K} \sum_{i=1}^{K} c(s_i, a_i)$ \\
            % %
            % $\widetilde{\mu}^J_t = (1 - \tau) \widetilde{\mu}^J_{t-1} + \tau \widehat{J(\theta_t)}$ \\
            %
            \STATE $\mu^J_t = (1 - \tau) \mu^J_{t-1} + \tau \frac{1}{K} \sum_{i=1}^{K} c(s_i, a_i)$ % \Comment*[r]{Update $J(\theta)$ estimate} % \\
            %
            % Compute estimate $\widehat{d_{\theta_t}}(s_i),$ for each $i = 1, \ldots, K$ \\
            %
            \STATE $d_{\theta_t} = \textsc{DensityEstimator}(\theta_t)$ % \Comment*[r]{Call to oracle} % \\
            %
            % $\widehat{H(d_{\theta_t})} = \frac{1}{K} \sum_{i=1}^K \left(- \log d_{\theta_t}(s_i) \right)$ \\
            % %
            % $\widetilde{\mu}^H_t = (1 - \tau) \widetilde{\mu}^H_{t-1} + \tau \widehat{H(d_{\theta_t})}$ \\
            %
            \STATE $\mu^H_t = (1 - \tau) \mu^H_{t-1} - \tau \frac{1}{K} \sum_{i=1}^K \log d_{\theta_t}(s_i)$ % \Comment*[r]{Update $H(d_{\theta})$ estimate} % \\
            \FOR{$i = 1, \ldots, K$}
                % \tiny
                \STATE Set $v_{\omega^J_t}(s_{K+1}) = v_{\omega^H_t}(s_{K+1}) = 0$
                %
                % Compute TD errors: \\
                \tiny
                \STATE $\delta^J_i = c(s_i, a_i) - \mu^J_t + v_{\omega^J_t}(s_{i+1}) - v_{\omega^J_t}(s_i)$ % \Comment*[r]{Compute cost TD error at $s_i$} % \\
                \STATE $\delta^H_i = - \log d_{\theta_t}(s_i) - \mu^H_t + v_{\omega^H_t}(s_{i+1}) - v_{\omega^H_t}(s_i)$ % \Comment*[r]{Compute entropy TD error at $s_i$} % \\
                \footnotesize
                \STATE $\psi_i = \nabla \log \pi_{\theta_t}(a_i | s_i)$
                \STATE $\Delta_i = \delta_i^J \left(\kappa + \mu^H_t \right)  - \mu^J_t \delta_i^H$
                %
                % where we stipulate $v_{\omega^J_t}(s_{K+1}) = v_{\omega^H_t}(s_{K+1}) = 0$
                \footnotesize
            \ENDFOR
            %
            % $\widehat{\nabla v_{\omega^J_t}} = \frac{1}{K} \sum_{i=1}^K \delta^J_i \nabla v_{\omega^J_t}(s_i)$ \\
            % %
            % $\omega^J_{t+1} = \omega^J_t + \alpha \widehat{\nabla v_{\omega^J_t}}$ \\
            %
            \STATE $\omega^J_{t+1} = \omega^J_t + \alpha \frac{1}{K} \sum_{i=1}^K \delta^J_i \nabla v_{\omega^J_t}(s_i)$  % \Comment*[r]{Cost critic update} % \\
            %
            % $\widehat{\nabla v_{\omega^H_t}} = \frac{1}{K} \sum_{i=1}^K \delta^H_i \nabla v_{\omega^H_t}(s_i)$ \\
            % %
            % $\omega^H_{t+1} = \omega^H_t + \alpha \widehat{\nabla v_{\omega^H_t}}$ \\
            %
            \STATE $\omega^H_{t+1} = \omega^H_t + \alpha \frac{1}{K} \sum_{i=1}^K \delta^H_i \nabla v_{\omega^H_t}(s_i)$ % \Comment*[r]{Entropy critic update} % \\
            %
            % \tiny
            % \STATE $\widehat{\nabla \rho(\theta_t)} = \frac{1}{ K \left[ \kappa + \mu^H_t \right]^2} \sum_{i=1}^K \left[ \delta^J_i \left(\kappa + \mu^H_t \right) - \mu^J_t \delta^H_i \right] \psi_i$
            %
            % \STATE $\widehat{\nabla \rho(\theta_t)} = \frac{1}{ K \left[ \kappa + \mu^H_t \right]^2} \sum_{i=1}^K \Delta_i \psi_i$
            % \footnotesize
            % \normalsize % \Comment*[r]{Policy gradient estimate} % \\
            %
            % \STATE $\theta_{t+1} = \theta_t - \beta \widehat{\nabla \rho(\theta_t)}$
            %
            \STATE $\theta_{t+1} = \theta_t - \beta \frac{1}{ K \left[ \kappa + \mu^H_t \right]^2} \sum_{i=1}^K \Delta_i \psi_i$
            % \Comment*[r]{Actor update} % \\
            %
            \STATE $t \gets t + 1$
        \UNTIL{convergence}
        \normalsize
    \end{algorithmic}
    %\end{minipage}
    \end{algorithm}
\end{minipage}

\subsection{Information-Directed Actor-Critic}
%
% In practice, REINFORCE-type algorithms can suffer from numerical instability, since the Monte Carlo estimates of the state values can have high variance. Actor-critic algorithms, on the other hand, use a value function estimator to provide lower-variance estimates of these quantities in exchange for increased bias, often resulting in improved empirical performance.
%
We next present the Information-Directed Actor-Critic (IDAC) algorithm, a variant of the classic actor-critic algorithm \cite{konda02, bhatnagar2009natural} with two critics: the standard critic corresponding to average cost $J(\theta)$, and an entropy critic corresponding to the shadow MDPs $(\mc{S}, \mc{A}, p, r_t)$,
$t \geq 0$, where $r_t(s, a) = - \log d_{\theta_t}(s)$ is the shadow reward discussed in the proof of Theorem \ref{thm:entropy_gradient}. We assume access to the \textsc{DensityEstimator} oracle throughout.
%
% In our experiments we used a counting-based density estimator, though this can be replaced with kernel density estimation or other techniques in practice.
%
The classic actor-critic algorithm for minimizing $J(\theta)$ alternates between \textit{critic} and \textit{actor} updates. At each time-step, it first computes the temporal difference (TD) error, which is a bootstrapped estimate of the amount by which the current state value function approximator, known as the \textit{critic}, over- or underestimates the true value of the current state (see \cite{sutton2018reinforcement} for details). This TD error is then used to update the critic, which is in turn used to update the policy, or \textit{actor}.
For IDAC, we modify the classic scheme by: (i) introducing an entropy critic to estimate the entropy gradient (lines 10 and 14), and (ii) altering the policy update to take a gradient descent step in the direction $-\nabla \rho(\theta_t)$ instead of $-\nabla J(\theta_t)$ (line 15).
%
% At time-step $t$, the algorithm computes two different TD errors: one corresponds to the critic for the MDP $(\mc{S}, \mc{A}, p, c)$, while the other corresponds to the critic for the shadow MDP $(\mc{S}, \mc{A}, p, r_t)$.
% % Note that the entropy critic TD error computation requires a call to \textsc{DensityEstimator}.
% Next, the cost and entropy critic TD errors are used to update their respective critics, which are in turn combined to perform the actor update.
%
Pseudocode is provided in Algorithm~\ref{alg:irac}.

\subsection{Density Estimation Issue} As discussed above, these algorithms are similar to the techniques described in \cite{hazan2019provably} in that they need to estimate the state density, which can be inefficient in continuous, high-dimensional spaces. There are two promising approaches for alleviating this issue. First, a variety of more sophisticated density estimation techniques have been successfully employed in RL and imitation learning in continuous settings, including kernel density estimation, variational autoencoders, energy-based models, and autoregressive models \cite{hazan2019provably, lee2019efficient, kim2021imitation}. Second, particle-based methods have recently been successfully used to avoid density estimation altogether by directly estimating occupancy measure entropy \cite{mutti2021task, yarats2021reinforcement, liu2021behavior}.
%
% There are two main options for overcoming this issue. First, density estimation procedures may be used to directly estimate the state occupancy measure. Density estimation is a well-developed field \cite{silverman2018density}, and methods such as kernel density estimation techniques \cite{rosenblatt1956remarks, parzen1962estimation, hastie2009elements} can effectively estimate continuous, multi-dimensional densities.
% %
% Second, by leveraging well-known connections between entropy and Jensen-Shannon divergence \cite{lin1991divergence}, the need for state density estimation can likely be completely eliminated by using a modified version of the occupancy measure matching techniques developed in \cite{kang2018policy}.
%
% In addition, important recent works such as \cite{hazan2019provably} also assume access to an occupancy measure oracle, highlighting that many in the RL community view this as reasonable.
%
% Though a limitation of the present algorithms, we believe the occupancy measure estimation issue can be mitigated using these approaches, presenting an important direction for future work.
%
Thus, though a limitation of the present algorithms, the density estimation issue can likely be mitigated, providing an important direction for future work.

\section{Theoretical Results} \label{sec:theoretical_results}

% In this section, we establish structural attributes of the occupancy information ratio minimization problem; in particular, we show that under certain conditions on the parametrized policy class $\{ \pi_{\theta} \}_{\theta \in \Theta}$,  all stationary points of $\rho(\theta)$ are in fact global minimizers. Second, we provide a non-asymptotic convergence rate for the stochastic gradient descent scheme underlying ID-REINFORCE that suggests a dependence on the choice of $\kappa$, the policy class $\{ \pi_{\theta} \}_{\theta \in \Theta}$, and ergodicity properties of the underlying Markov decision process (MDP). Finally, we prove asymptotic, almost sure (a.s.) convergence of IDAC to a neighborhood of a stationary point. Taken together, these results prove that both algorithms converge to globally optimal solutions under suitable conditions.

% In this section we provide the following key results underpinning policy search for the OIR problem: all stationary points of $\rho(\theta)$ are in fact global minimizers; the stochastic gradient descent scheme underlying ID-REINFORCE enjoys a non-asymptotic convergence rate depending on $\kappa$, the policy class, and ergodicity properties of the underyling MDP; IDAC enjoys asymptotic, almost sure (a.s.) convergence to a neighborhood of a stationary point. Taken together, these results prove that both algorithms converge to globally optimal solutions under suitable conditions.

\newstuff{In this section we provide key results underpinning policy search for the OIR problem. In \S\ref{sec:hidden_concavity}, we show that all stationary points of $\rho(\theta)$ are in fact global minimizers. In \S\ref{sec:conv_rate}, we prove that the stochastic gradient descent scheme underlying ID-REINFORCE enjoys a non-asymptotic convergence rate depending on $\kappa$, the policy class, and ergodicity properties of the underyling MDP. Finally, \S\ref{sec:as_conv} establishes that IDAC enjoys asymptotic, almost sure (a.s.) convergence to a neighborhood of a stationary point. Taken together, these results prove that both algorithms converge to globally optimal solutions under suitable conditions.}

% \newstuff{The key novelty in all three parts of the analysis lies in handling the fractional form of the OIR objective. In }

\subsection{Stationarity Implies Global Optimality} \label{sec:hidden_concavity}

As we will see, the OIR optimization problem enjoys a powerful \textit{hidden quasiconcavity} property: under certain conditions on the set $\Theta$ and the policy class $\{ \pi_{\theta} \}_{\theta \in \Theta}$, stationary points of $\rho(\theta)$ correspond to global optima of the OIR minimization problem
\begin{mini}
    {\theta \in \Theta}{\rho(\theta) = \frac{J(\theta)}{\kappa + H(d_{\theta})}}{}{}. \label{stat_implies_glob:1}
\end{mini}
This result is surprising, as the objective function $\rho(\theta)$ is typically highly non-convex.
Let $\Theta \subset \mathbb{R}^{k}$ be convex and let a parametrized policy class $\{ \pi_{\theta} \}_{\theta \in \Theta}$ be given. Let $\lambda : \Theta \rightarrow \mc{D}(\mc{S} \times \mc{A})$ be a function mapping each parameter vector $\theta \in \Theta$ to the state-action occupancy measure $\lambda(\theta) := \lambda_{\theta} := \lambda_{\pi_{\theta}}$ induced by the policy $\pi_{\theta}$ over $\mc{S} \times \mc{A}$. % \footnote{We use ``$\lambda(\theta)$'' to stress the nature of $\lambda(\cdot)$ as a function, ``$\lambda_{\theta}$'' to emphasize dependence of the occupancy measure on $\theta$, and ``$\lambda_{\pi_{\theta}}$'' to stress dependence on the policy $\pi_{\theta}$.}
We make the following assumptions.
\begin{assumption}\label{assum:actor_conditions}
	The set $\Theta$ is compact. For any $s \in \mathcal{S}, a \in \mathcal{A}$, the function $\pi_\theta(a | s)$ is continuously differentiable with respect to $\theta$ on $\Theta$, and the Markov chain induced by $\pi_\theta$ on $\mathcal{S}$ is ergodic.
\end{assumption}
\begin{assumption} \label{assum:hidden_concavity_info_ratio} The following statements hold: \\
% \begin{enumerate}
%     \setlength\itemsep{0em}
%     \item $\lambda(\cdot)$ gives a bijection between $\Theta$ and its image $\lambda(\Theta)$, and $\lambda(\Theta)$ is compact and convex.
%     %
%     \item Let $h(\cdot) := \lambda^{-1}(\cdot)$ denote the inverse mapping of $\lambda(\cdot)$. $h(\cdot)$ is Lipschitz continuous.
%     %
%     \item The Jacobian matrix $\nabla \lambda (\theta)$ is Lipschitz on $\Theta$.
% \end{enumerate}
%
\textbf{1.} $\lambda(\cdot)$ is a bijection between $\Theta$ and $\lambda(\Theta)$, and $\lambda(\Theta)$ is compact and convex. \\
\textbf{2.} Let $h(\cdot) := \lambda^{-1}(\cdot)$ denote the inverse mapping of $\lambda(\cdot)$. $h(\cdot)$ is Lipschitz continuous. \\
\textbf{3.} The Jacobian matrix $\nabla \lambda (\theta)$ is Lipschitz on $\Theta$.
\end{assumption}

Assumption \ref{assum:actor_conditions} is standard in the policy gradient literature, and it implies that $\nabla \rho(\theta)$ exists, for all $\theta \in \Theta$. Assumption \ref{assum:hidden_concavity_info_ratio} holds for reasonable examples and can likely be proven to hold in the tabular setting under suitable ergodicity conditions on the underlying MDP.
\newstuff{The following is an example for which Assumption \ref{sec:hidden_concavity} holds.

\begin{example} \label{ex:hidden_concavity_example}
Consider an MDP with state space $\mc{S} = \{ s_1, s_2 \}$, action space $\mc{A} = \{ L, R \}$, a transition probability function $p$ satisfying $p(s' | s, a) > 0$, for all $s, s' \in \mc{S}, a \in \mc{A}$, and an arbitrary cost function $c : \mc{S} \times \mc{A} \rightarrow \mathbb{R}$. Given a policy $\pi$, we can represent it in tabular form as a vector $\pi \in \mathbb{R}^4$ with non-negative entries such that $\pi(L | s_i) + \pi(R | s_i) = 1$, for $i \in \{ 1, 2 \}$. The transition probability matrix of the Markov chain induced by $\pi$ on $\mc{S}$ is as follows:
\small
\begin{equation*} P_{\pi} =
\begin{bmatrix}
    p(s_1 | s_1, L) \pi( L | s_1) + p(s_1 | s_1, R) \pi( R | s_1) & p(s_2 | s_1, L) \pi(L | s_1) + p(s_1 | s_1, R) \pi(R | s_1) \\
    p(s_1 | s_2, L) \pi(L | s_2) + p(s_1 | s_2, R) \pi(R | s_2) & p(s_2 | s_2, L) \pi(L | s_2) | p(s_2 | s_2, R) \pi(R | s_2)
\end{bmatrix}.
\end{equation*}
\normalsize
The state occupancy measure $d_{\pi}$ induced by $\pi$ can be obtained by solving the system of equations $P_{\pi}^T d = d, \bm{1}^T d = 1$ in the unknown $d = \begin{bmatrix} d_1 & d_2 \end{bmatrix}^T$. Some simple algebra yields
\small
\begin{align*}
    d_1 &= \frac{ p(s_1 | s_2, L) \pi(L | s_2) + p(s_1 | s_2, R) \pi(R | s_2) }{ 1 + p(s_1 | s_2, L) \pi(L | s_2) + p(s_1 | s_2, R) \pi(R | s_2) - p(s_1 | s_1, L) \pi( L | s_1) - p(s_1 | s_1, R) \pi( R | s_1)}.
\end{align*}
\normalsize
Notice that, since $p(s_i | s_j, a) > 0, \pi(a | s_i) \geq 0$ and $\pi(L | s_i) + \pi(R | s_i) = 1$, for all $i, j \in \{ 1, 2 \}$ and $a \in \{ L, R \}$, $d_1$ is always defined and strictly positive. Since $d_2 = 1 - d_1$, the same statement holds for $d_2$. The state occupancy measure induced by $\pi$ is thus given by the vector $d_{\pi} = d = \begin{bmatrix} d_1 & d_2 \end{bmatrix}^T$.

We now show that, for the MDP $(\mc{S}, \mc{A}, p, c)$ just specified, Assumption \ref{assum:hidden_concavity_info_ratio} holds.
Let $\Theta \subset \mathbb{R}^4$ denote the set of vector representations of all valid policies $\pi$, and let $\lambda : \Theta \rightarrow \mc{D}(\mc{S} \times \mc{A})$ denote the function mapping policies $\pi$ to state-action occupancy measures $\lambda(\pi)$. Clearly $\lambda(\pi)(s, a) = d_{\pi}(s) \pi(a | s)$, for all $s \in \mc{S}, a \in \mc{A}$.
It is known that part one of Assumption \ref{assum:hidden_concavity_info_ratio} holds (see, e.g., \cite{Puterman_2014}). It can furthermore be shown (see the proof of Prop.~H.2 in \cite{zhang2020variational}) that part two holds. All that remains is to show that part three holds by proving that $\nabla \lambda(\pi)$ is Lipschitz in $\pi$. Notice that the identity function is Lipschitz and trivially bounded over any bounded domain, and recall that the product of two Lipschitz and bounded functions is a Lipschitz function. Since the domain $\Theta$ is bounded, this means that, if $d_{\pi}$ is Lipschitz and bounded in $\pi$, it will follow that $\lambda(\pi)$ is Lipschitz in $\pi$ over $\Theta$, since it is a product of Lipschitz, bounded functions. But the partial derivatives of $d_1$ with respect to $\pi(L | s_1), \pi(R | s_1), \pi(L | s_2),$ and $\pi(R | s_2)$ are all continuous and bounded for all valid policy vectors $\pi$, so $d_1$ is Lipschitz in $\pi$. Since $d_2 = 1 - d_1$, this means that $d_{\pi} = \begin{bmatrix} d_1 & d_2 \end{bmatrix}^T$ is Lipschitz in $\pi$. Finally, we already know that $d_{\pi}$ is bounded, so $\lambda(\pi)$ is therefore Lipschitz in $\pi$, and thus part three of Assumption~\ref{assum:hidden_concavity_info_ratio}~holds.

\end{example}
}

We now have the following theorem. The key idea behind the proof is to show that the stationary point $\theta^*$ corresponds to an optimal solution to the concave program \eqref{opt:Q3} and thus also provides an optimal solution to the quasiconvex OIR minimization problem \eqref{opt:Q0}. The proof builds on that of Theorem~4.2 of \cite{zhang2020variational}, with key modifications to accommodate the fact that the underlying OIR optimization problem is not convex, but \textit{quasiconvex} in the state-action occupancy measure. \newstuff{In particular, the result in \cite{zhang2020variational} holds for concave functionals of the state-action occupancy measure only, not ones involving \emph{quasiconcave} functionals. The critical innovation in the proof of Theorem \ref{thm:stationarity_implies_global_optimality_info_ratio} below is to leverage properties of the perspective transform combined with the smoothness conditions of Assumption 5.2 to extend the hidden concavity analysis of \cite{zhang2020variational} to the quasiconcave setting.}
% This is a fundamental technical novelty of this work that has not appeared in prior art. In particular, this makes keys steps in the proof of Theorem 5.3 -- specifically, extending Assumption 5.2 to $\zeta \circ \lambda$ as well as showing $z^*$ can be pulled back to a stationary point of problem (5.1) -- novel.}
%
\begin{theorem} \label{thm:stationarity_implies_global_optimality_info_ratio}
    Let Assumptions \ref{assum:actor_conditions} and \ref{assum:hidden_concavity_info_ratio} hold. Let $\theta^*$ be a stationary point of \eqref{stat_implies_glob:1}, i.e.,
    %
    % \begin{align}
    %     \nabla \rho(\theta^*) = 0. \label{eqn:theta_stationary_rho}
    % \end{align}
    $\nabla \rho(\theta^*) = 0.$
    Then $\theta^*$ is globally optimal for \eqref{stat_implies_glob:1}.
\end{theorem}
\begin{proof}

%
% To do this we first reinterpret \eqref{stat_implies_glob:1} as a concave program via the perspective transformation considered in \S\ref{sec:concave_program_2}. We then demonstrate that the feasible solution corresponding to $\theta^*$ is a stationary point of this concave program and thus a global optimal solution of the problem. This will imply that $\theta^*$ is optimal for \eqref{stat_implies_glob:1}.
%
\newstuff{
We reformulate \eqref{stat_implies_glob:1} as a maximization problem. Let $q(\theta) = 1 / \rho(\theta) = (\kappa + H(d_{\theta})) / J(\theta)$. Let $\widehat{H}(\lambda_{\theta}) = H(d_{\theta})$, where $\widehat{H}(\lambda) = H(d^{\lambda})$ is the entropy of the state occupancy measure $d^{\lambda} \in \mc{D}(\mathcal{S})$ given by $d^{\lambda}_s = \sum_a \lambda_{sa}$, and recall that $J(\theta) = c^T \lambda_{\theta}$, for some vector $c > 0$ of costs. This means that $q(\theta) = (\kappa + \widehat{H}(\lambda_{\theta})) / J(\lambda_{\theta})$. In what follows we prove that $\theta^*$ is globally optimal for $\max_{\theta \in \Theta} q(\theta)$. By Lemma \ref{lemma:Q0_eq_Q1}, this will imply that $\theta^*$ is globally optimal for $\min_{\theta \in \Theta} \rho(\theta)$.
Also note that, since $\rho(\theta)$ is strictly positive on $\Theta$, we know $q(\theta)$ is differentiable in $\theta$ and $\nabla q(\theta) = - \nabla \rho(\theta) / [\rho(\theta)]^2$, for all $\theta \in \Theta$. Since $\nabla \rho(\theta^*) = 0$ by assumption, this means $\nabla q(\theta^*) = 0$, so $\theta^*$ is a stationary point of the optimization problem $\max_{\theta \in \Theta} q(\theta)$.

We now transform the problem $\max_{\theta \in \Theta} q(\theta)$ to a concave program. For $z \in \mathbb{R}^{|\mc{S}| |\mc{A}| + 1}$, let $y \in \mathbb{R}^{|\mc{S}| |\mc{A}|}$ denote all but the last entry in $z$, and let the scalar $t$ denote the last entry of $z$. We will write $z = (y, t)$ for brevity. Let $\zeta : \mc{D}(\mc{S} \times \mc{A}) \rightarrow \mathbb{R}^{|\mc{S}| |\mc{A}| + 1}$ be the mapping given by $\zeta(\lambda) = (\lambda / J(\lambda), 1 / J(\lambda))$. Consider the optimization problems
}

\begin{minipage}{0.43\textwidth}
\begin{maxi}
    {\lambda \in \lambda(\Theta)}{\frac{\kappa + \widehat{H}(\lambda)}{J(\lambda)}}{}{},  \label{stat_implies_glob:2}
\end{maxi}
\end{minipage}
\hfill
\begin{minipage}{0.43\textwidth}
\begin{maxi}
    {z \in (\zeta \circ \lambda)(\Theta)}{ P_{\kappa, \widehat{H}}(z)}{}{}, \label{stat_implies_glob:3}
\end{maxi}
\end{minipage}
\vspace{2mm}

\noindent where $P_{\kappa, \widehat{H}} : \mathbb{R}^{|\mc{S}| |\mc{A}| + 1} \rightarrow \mathbb{R}$ denotes the perspective transformation of $\kappa + \widehat{H}(\lambda)$, given by $P_{\kappa + \widehat{H}} (z) = P_{\kappa, \widehat{H}} ((y, t)) = t ( \kappa + \widehat{H}(y / t))$. For notational convenience we henceforth drop the dependency on $\kappa$ and simply write $P_{\widehat{H}}$ and $\widehat{H}$ instead of $P_{\kappa, \widehat{H}}$ and $\kappa + \widehat{H}$. Recall that, since $\widehat{H}$ is concave over the region $\mc{D}(\mc{S} \times \mc{A})$, its perspective transform $P_{\widehat{H}}$ is concave over the region $\zeta(\mc{D}(\mc{S} \times \mc{A}))$. $P_{\widehat{H}}$ is thus concave over the convex, compact region $(\zeta \circ \lambda)(\Theta) \subseteq \zeta(\mc{D}(\mc{S} \times \mc{A}))$.

% We now show that $\theta^*$ corresponds to a stationary point of the concave program \eqref{stat_implies_glob:3}. To see why this suffices, consider the following. Clearly if $\lambda^* = \lambda(\theta^*)$ is globally optimal for \eqref{stat_implies_glob:2}, then $\theta^*$ is globally optimal for \eqref{stat_implies_glob:1}. Notice that the problem \eqref{stat_implies_glob:2} is quasiconcave, as the objective is quasiconcave and the feasible region is convex and compact. Furthermore, since $\widehat{H}$ is concave and $J$ is linear and strictly positive in $\lambda$, the optimization problem \eqref{stat_implies_glob:3} is the concave transformation of problem \eqref{stat_implies_glob:2} by the discussion in \S\ref{sec:concave_program_2}; this means that any optimal solution $z^*$ of \eqref{stat_implies_glob:3} is both globally optimal for \eqref{stat_implies_glob:3} and allows us to recover a solution $\lambda^* = \zeta^{-1}(z^*)$ that is globally optimal for \eqref{stat_implies_glob:2} and thus \eqref{stat_implies_glob:1}. To prove the theorem, it thus suffices to prove that $z^* = (\zeta \circ \lambda)(\theta^*)$ is globally optimal for \eqref{stat_implies_glob:3}.

\newstuff{
The remainder of the proof provides the technical details demonstrating that $z^*$ is a stationary point of \eqref{stat_implies_glob:3}. Since \eqref{stat_implies_glob:3} is a concave program, this will imply that $z^*$ and thus $\theta^*$ are globally optimal for their respective problems.
%
% We first show that the conditions of Assumption \ref{assum:hidden_concavity_info_ratio} can be extended to the mapping $\zeta \circ \lambda$, then use these properties to make a series of bounding arguments that ultimately prove the stationarity of $z^*$. In order to complete the first task of extending the conditions of Assumption \ref{assum:hidden_concavity_info_ratio} to $\zeta \circ \lambda$, we need to prove:
%
We first show that the conditions of Assumption \ref{assum:hidden_concavity_info_ratio} can be extended to the mapping $\zeta \circ \lambda$. To do so, we need to prove:

% \vspace{1mm}
\begin{center}
\begin{varwidth}{0.6\textwidth}
\begin{enumerate}[(i)]
    \item $\zeta \circ \lambda$ gives a bijection between $\Theta$ and $(\zeta \circ \lambda)(\Theta)$;
    \item $\zeta \circ \lambda$ has a Lipschitz continuous inverse; and
    \item the Jacobian $\nabla_{\theta}(\zeta \circ \lambda)(\theta)$ is Lipschitz.
\end{enumerate}
\end{varwidth}
\end{center}
\vspace{2mm}

To prove (i), recall $\zeta(\lambda) = (\lambda / J(\lambda), 1 / J(\lambda))$. We know $\zeta$ is a surjection onto $(\zeta \circ \lambda)(\Theta)$ by definition, so we just need to show it is injective. Fix $\lambda \neq \lambda'$. If $J(\lambda) = J(\lambda')$, then $\lambda / J(\lambda) \neq \lambda' / J(\lambda')$, so $\zeta(\lambda) \neq \zeta(\lambda')$. If $J(\lambda) \neq J(\lambda')$, on the other hand, then $1 / J(\lambda) \neq 1 / J(\lambda')$, so again $\zeta(\lambda) \neq \zeta(\lambda')$. Therefore $\zeta$ is injective and thus gives a bijection. Combined with Assumption \ref{assum:hidden_concavity_info_ratio}, the foregoing implies that $\zeta \circ \lambda$ gives a bijection between $\Theta$ and $(\zeta \circ \lambda)(\Theta)$, proving (i).

For (ii), the inverse of $\zeta$ is clearly $\zeta^{-1}(z) = \zeta^{-1}((y, t)) = y / t$. Since $0 < \min_i c_i \leq t \leq \max_i c_i < \infty$, $\zeta^{-1}$ has continuous, bounded partial derivatives and is thus Lipschitz continuous on $(\zeta \circ \lambda)(\Theta)$. Since the composition of Lipschitz functions is Lipschitz, $k = (\zeta \circ \lambda)^{-1} = \lambda^{-1} \circ \zeta^{-1}$ is Lipschitz continuous, proving (ii).

For (iii), an application of the chain rule gives $\nabla_{\theta}(\zeta \circ \lambda)(\theta) = \left[ \nabla_{\lambda} \zeta( \lambda(\theta) ) \right]^T \nabla_{\theta} \lambda(\theta)$. Clearly $\nabla_{\lambda} \zeta( \lambda )$ is Lipschitz continuous and bounded over the compact set $\Theta$. Since $\nabla_{\theta} \lambda(\theta)$ is (Lipschitz) continuous and bounded over $\Theta$, we know $\lambda(\theta)$ is Lipschitz, implying that $\nabla_{\lambda} \zeta( \lambda(\theta) )$ is Lipschitz and bounded on $\theta \in \Theta$. Furthermore, $\nabla_{\theta} \lambda(\theta)$ is Lipschitz by assumption and bounded over $\Theta$, so all entries in the matrix product $\left[ \nabla_{\lambda} \zeta( \lambda(\theta) ) \right]^T \nabla_{\theta} \lambda(\theta)$ are sums and products of Lipschitz, bounded functions over $\Theta$. This implies that $\nabla_{\theta}(\zeta \circ \lambda)(\theta)$ is Lipschitz on $\Theta$, proving (iii).
}

We now move on to the bounding arguments that will ultimately prove that $z^*$ is a stationary point of \eqref{stat_implies_glob:3}. First, notice that
$$(P_{\widehat{H}} \circ \zeta \circ \lambda)(\theta) = P_{\widehat{H}}(\zeta(\lambda(\theta))) = \frac{\kappa + \widehat{H}(\lambda(\theta))}{J(\lambda(\theta))} = \frac{\kappa + \widehat{H}(\lambda_{\theta})}{J(\lambda_{\theta})} = q(\theta),$$
so $\nabla_{\theta} (P_{\widehat{H}} \circ \zeta \circ \lambda)(\theta^*) = \nabla_{\theta} \frac{\kappa + \widehat{H}(\lambda_{\theta^*})}{J(\lambda_{\theta^*})} = \nabla q(\theta^*) = 0$. Since $P_{\widehat{H}}$ is concave and locally Lipschitz on $(\zeta \circ \lambda)(\Theta)$, by the chain rule we have
$\nabla_{\theta} (P_{\widehat{H}} \circ \zeta \circ \lambda)(\theta^*) = \left[ \nabla_{\theta} (\zeta \circ \lambda)(\theta^*) \right]^T \nabla_z P_{\widehat{H}}(z^*) = 0,$
where $z^* = (\zeta \circ \lambda)(\theta^*)$.
This trivially implies that, for all $\theta \in \Theta$,
\begin{align} \label{stat_implies_glob:4}
    \langle \nabla_z P_{\widehat{H}}(z^*), \nabla_{\theta} (\zeta \circ \lambda)(\theta^*) (\theta - \theta^*) \rangle = \langle \left[ \nabla_{\theta} (\zeta \circ \lambda)(\theta^*) \right]^T \nabla_z P_{\widehat{H}}(z^*), \theta - \theta^* \rangle = 0.
\end{align}
Equation \eqref{stat_implies_glob:4} is important to the bounding arguments presented next.

In the following equations, let $\theta = k(z)$ and $\theta^* = k(z^*)$. Adding and subtracting $\langle \nabla_z P_{\widehat{H}}(z^*), \nabla_{\theta} (\zeta \circ \lambda)(\theta^*) (\theta - \theta^*)$, using equation \eqref{stat_implies_glob:4}, and applying the Cauchy-Schwarz inequality, we get
\begin{align*}
    \langle \nabla_z & P_{\widehat{H}}(z^*), z - z^* \rangle = \langle \nabla_z P_{\widehat{H}}(z^*), (\zeta \circ \lambda)(\theta) - (\zeta \circ \lambda)(\theta^*) \rangle \\
    &= \langle \nabla_z P_{\widehat{H}}(z^*), \nabla_{\theta} (\zeta \circ \lambda)(\theta^*) (\theta - \theta^*) \rangle \nonumber \\
    & \hspace{1cm} + \langle \nabla_z P_{\widehat{H}}(z^*), (\zeta \circ \lambda)(\theta) - (\zeta \circ \lambda)(\theta^*) - \nabla_{\theta} (\zeta \circ \lambda)(\theta^*) (\theta - \theta^*) \rangle \\
    &= \langle \nabla_z P_{\widehat{H}}(z^*), (\zeta \circ \lambda)(\theta) - (\zeta \circ \lambda)(\theta^*) - \nabla_{\theta} (\zeta \circ \lambda)(\theta^*) (\theta - \theta^*) \rangle \\
    &\leq \norm{\nabla_z P_{\widehat{H}}(z^*)} \norm{(\zeta \circ \lambda)(\theta) - (\zeta \circ \lambda)(\theta^*) - \nabla_{\theta} (\zeta \circ \lambda)(\theta^*) (\theta - \theta^*) }.
\end{align*}
Since $\nabla_{\theta} (\zeta \circ \lambda)(\theta)$ is Lipschitz, there exists $K_0 > 0$ such that
\begin{align*}
    \norm{(\zeta \circ \lambda)(\theta) - (\zeta \circ \lambda)(\theta^*) - \nabla_{\theta} (\zeta \circ \lambda)(\theta^*) (\theta - \theta^*)} \leq \frac{K_0}{2} \norm{\theta - \theta^*}^2.
\end{align*}
In addition, $k = (\zeta \circ \lambda)^{-1}$ is Lipschitz, so there exists $K_1 > 0$ such that
\begin{align*}
    \norm{\theta - \theta^*}^2 = \norm{k(z) - k(z^*)}^2 \leq K_1^2 \norm{z - z^*}^2.
\end{align*}
Combining these inequalities yields that
\begin{align*}
    \langle \nabla_z P_{\widehat{H}}(z^*), z - z^* \rangle \leq \frac{K_0 K_1^2}{2} \norm{\nabla_z P_{\widehat{H}}(z^*)} \norm{z - z^*}^2,
\end{align*}
for all $z \in (\zeta \circ \lambda)(\Theta)$. Since $(\zeta \circ \lambda)(\Theta)$ is convex, we can replace $z$ above with $(1 - \alpha) z^* + \alpha z$ for any $\alpha \in [0, 1]$, which gives
\begin{align*}
    \alpha \langle \nabla_z P_{\widehat{H}}(z^*), z - z^* \rangle \leq \frac{K_0 K_1^2 \alpha^2}{2} \norm{\nabla_z P_{\widehat{H}}(z^*)} \norm{z - z^*}^2,
\end{align*}
for all $z \in (\zeta \circ \lambda)(\Theta)$ and $\alpha \in [0, 1].$
Dividing both sides by $\alpha$ and taking the limit as $\alpha$ approaches 0 from above, we obtain
\begin{align*}
    \langle \nabla_z P_{\widehat{H}}(z^*), z - z^* \rangle \leq 0, \text{ for all } z \in (\zeta \circ \lambda)(\Theta).
\end{align*}
Since problem \eqref{stat_implies_glob:3} is concave in $z$, this implies that $z^* = (\zeta \circ \lambda)(\theta^*)$ is a stationary point of that problem. The solution $z^*$ is therefore a global optimal solution to \eqref{stat_implies_glob:3}, \newstuff{implying that $\theta^*$ is globally optimal for \eqref{stat_implies_glob:1}.}
\end{proof}

This powerful hidden quasiconcavity property implies that any policy gradient algorithm that can be shown to converge to a stationary point of the OIR optimization problem $\min_{\theta \in \Theta} \rho(\theta)$ in fact converges to a global optimum. This greatly strengthens the convergence results provided next by guaranteeing that they apply to \textit{global} optima.
In contrast to the global optimality guarantees for tabular, softmax policy search established in \cite{bhandari2019global, Agarwal_Kakade_Lee_Mahajan_2020, mei2020global, zhang2020variational, bedi2021sample} using persistent exploration conditions, our result instead builds on hidden concavity arguments from \cite{zhang2020variational}, which apply to parameterized policies. However, Theorem \ref{thm:stationarity_implies_global_optimality_info_ratio} generalizes these results in important ways. First, it applies to ratio objectives, which have not been addressed in prior work. In addition, we establish hidden \textit{quasiconcavity} for ratio objectives, not hidden \emph{concavity}, which requires reformulation via an application of the perspective transform [cf. \S\ref{sec:concave_program_2}]. In these ways, Theorem  \ref{thm:stationarity_implies_global_optimality_info_ratio} is a strict generalization of existing results for the landscape of RL objectives.

% \newstuff{In contrast to the global optimality guarantees of policy gradient search in tabular and softmax settings that have been established recently using persistent exploration conditions \cite{bhandari2019global, Agarwal_Kakade_Lee_Mahajan_2020, mei2020global, zhang2020variational, bedi2021sample}, our result instead builds upon hidden concavity arguments from \cite{zhang2020variational} which impose conditions on the mapping between occupancy measures and policies. However, Theorem \ref{thm:stationarity_implies_global_optimality_info_ratio} generalizes these results in important ways: firstly, it applies to ratio objectives, which have not been addressed in prior work. Additionally, we relate these ratio objectives to hidden quasiconcavity, \emph{not concavity}, which demands first a reformulation via employing the perspective function. In these ways, Theorem  \ref{thm:stationarity_implies_global_optimality_info_ratio} is a strict generalization of existing results regarding the benign landscape of RL objectives.}

\subsection{Non-Asymptotic Convergence Rate} \label{sec:conv_rate}

We next establish a convergence rate for the following projected gradient descent scheme for solving the OIR minimization problem \eqref{stat_implies_glob:1}:
%
% \begin{align} \label{eqn:prox_update_rho}
%     \small
%     &\theta_{t+1} =  \text{Proj}_{\Theta} \left( \theta_t - \eta \nabla \rho(\theta_t) \right) \\
%     %
%     &= \argmin_{\theta} [\rho(\theta_t) + \left< \nabla \rho(\theta_t), \theta - \theta_t \right> + \frac{1}{2 \eta} \norm{\theta - \theta_t}^2], \nonumber
%     \normalsize
% \end{align}
%
\begin{equation} \label{eqn:prox_update_rho}
    \small
    \theta_{t+1} =  \text{Proj}_{\Theta} \left( \theta_t - \eta \nabla \rho(\theta_t) \right) = \argmin_{\theta} [\rho(\theta_t) + \left< \nabla \rho(\theta_t), \theta - \theta_t \right> + \frac{1}{2 \eta} \norm{\theta - \theta_t}^2],
    \normalsize
\end{equation}
for a fixed stepsize $\eta > 0$, where $\text{Proj}_{\Theta}$ denotes euclidean projection onto $\Theta$ and the second equality holds by the convexity of $\Theta$. Note that \eqref{eqn:prox_update_rho} is a reformulation of Algorithm \ref{alg:InfoREINFORCE} with null gradient estimation error and projection onto the set $\Theta$; we assume the projection operation for the purposes of analysis, and we discuss the gradient estimation issue at the end of this subsection. % Extending this analysis to allow stochastic approximation error is similar, and we also provide an asymptotic analysis for the stochastic approximation case in the following subsection.

% We next formalize technical aspects of the setting.
%
Let $\Theta \subset \mathbb{R}^{k}$, $\{ \pi_{\theta} \}_{\theta \in \Theta}$, and $\lambda : \Theta \rightarrow \mc{D}(\mc{S} \times \mc{A})$ be as in the previous section.
Recall the mapping $\zeta : \mc{D}(\mc{S} \times \mc{A}) \rightarrow \mathbb{R}^{|\mc{S}| |\mc{A}| + 1}$ from the proof of Theorem \ref{thm:stationarity_implies_global_optimality_info_ratio}, which was defined to be $\zeta(\lambda) = (\lambda / c^\top \lambda, 1 / c^\top \lambda)$, where $c \in \mathbb{R}^{|\mc{S}| |\mc{A}|}, c > 0$ is a vector of positive costs. Notice that, under the ergodicity conditions in Assumption \ref{assum:actor_conditions} and properties of entropy, $\min_{\theta} \rho(\theta) > 0$ and $\max_{\theta} \rho(\theta) < \infty$. In addition to Assumptions \ref{assum:actor_conditions} and \ref{assum:hidden_concavity_info_ratio}, we will need the following.
\begin{assumption} \label{assum:OIR_lipschitz_jacobian}
    $\nabla \rho(\theta)$ is Lipschitz and $L > 0$ is the smallest number such that $\norm{\nabla \rho(\theta) - \nabla \rho(\theta') } \leq L \norm{\theta - \theta'},$ for all $\theta, \theta' \in \Theta$.
\end{assumption}
We have the following convergence rate result for the projected gradient descent scheme \eqref{eqn:prox_update_rho}. \newstuff{The key idea behind the proof is to link the objective function $\rho(\theta)$ that the updates \eqref{eqn:prox_update_rho} are minimizing to the concave structure of the transformed problem \eqref{opt:Q3}. This allows us to derive the bound \eqref{ineq:OIR_prox_bound} by studying related bounds for the concave objective function of \eqref{opt:Q3}.
Similar to Theorem \ref{thm:stationarity_implies_global_optimality_info_ratio} above, our key innovation is that we leverage properties of the perspective transform combined with the Lipschitz condition of Assumption 5.4 to extend the rate analysis of Theorem 4.4 in \cite{zhang2020variational} -- which holds only for concave functionals of the state-action occupancy measure -- to the quasiconcave setting.}
\begin{theorem} \label{thm:OIR_convergence_rate}
    Let Assumptions % \ref{assum:actor_conditions},
    \ref{assum:actor_conditions},
    \ref{assum:hidden_concavity_info_ratio},
    % \ref{assum:OIR_exact_gradients},
    and \ref{assum:OIR_lipschitz_jacobian} hold. Denote the diameter of the convex, compact set $(\zeta \circ \lambda)(\Theta)$ by $D_{\zeta} = \max_{z, z' \in (\zeta \circ \lambda)(\Theta)} \norm{z - z'}$. Define $M = \max_{\theta \in \Theta} \rho(\theta)$, $m = \min_{\theta \in \Theta} \rho(\theta)$, $K = \max \{ m^2 L, M^2 m^2 L \}$, and $L_1 = \max \{ L, M^2 L \}$. Then, with $\eta = 1 / K$, for all $t \geq 0$,
    \begin{equation} \label{ineq:OIR_prox_bound}
        \rho(\theta_t) - \rho(\theta^*) \leq \frac{ 4 M^2 L_1 \ell^2 D^2_{\zeta}}{t + 1},
    \end{equation}
    \newstuff{where $\ell$ is the minimal Lipschitz constant of the inverse mapping $(\zeta \circ \lambda)^{-1}$.}
\end{theorem}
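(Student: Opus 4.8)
The plan is to leverage the hidden concave structure uncovered in the proof of Theorem~\ref{thm:stationarity_implies_global_optimality_info_ratio} to turn the projected gradient iteration \eqref{eqn:prox_update_rho} into a standard convex-type $O(1/t)$ argument. Recall from that proof the bijection $\Phi := \zeta \circ \lambda : \Theta \to Z$, where $Z := (\zeta\circ\lambda)(\Theta)$ is convex and compact with diameter $D_\zeta$, that $\Phi$ has an $\ell$-Lipschitz inverse $k = (\zeta\circ\lambda)^{-1}$ and a Lipschitz Jacobian, and that $q(\theta) = 1/\rho(\theta) = P_{\widehat H}(\Phi(\theta))$ with $P_{\widehat H}$ concave on $Z$. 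The first step is to record the elementary consequences of the ergodicity in Assumption~\ref{assum:actor_conditions}: $\rho(\theta)\in[m,M]$ with $0<m\le M<\infty$, hence $q(\theta)\in[1/M,1/m]$ and $q(\theta)q(\theta^*)\ge 1/M^2$, which gives the ``reciprocal transfer'' bound $\rho(\theta)-\rho(\theta^*)\le M^2\bigl(q(\theta^*)-q(\theta)\bigr)$ relating the $\rho$-gap to the concave $q$-gap.

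Next I would establish an approximate star-convexity (quasar-convexity) inequality for $\rho$. Writing $z=\Phi(\theta)$ and $z^*=\Phi(\theta^*)$, concavity of $P_{\widehat H}$ gives $q(\theta^*)-q(\theta)\le\langle\nabla_z P_{\widehat H}(z),\,z^*-z\rangle$; expanding $z^*-z=\Phi(\theta^*)-\Phi(\theta)$ to first order and controlling the remainder by the Lipschitz-Jacobian constant $K_0$ (exactly as in the bounding step of the proof of Theorem~\ref{thm:stationarity_implies_global_optimality_info_ratio}, but now around the non-stationary point $\theta$ rather than around $\theta^*$) converts the right-hand side into $\langle\nabla_\theta q(\theta),\theta^*-\theta\rangle$ plus a term of size $O(\|\theta-\theta^*\|^2)$. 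Using $\nabla_\theta q=-\rho^{-2}\nabla\rho$ and $\rho\ge m$, this yields an inequality of the form $\rho(\theta)-\rho(\theta^*)\le \tfrac{M^2}{m^2}\langle\nabla\rho(\theta),\theta-\theta^*\rangle + (\text{second-order remainder})$, i.e. $\rho$ behaves like a (quasar-)convex function up to a controllable curvature correction.

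With this in hand, the third step is the standard machinery. The $L$-smoothness of $\rho$ (Assumption~\ref{assum:OIR_lipschitz_jacobian}), together with the projection and the stepsize $\eta=1/K$, yields a per-iteration descent bound of the form $\rho(\theta_{t+1})\le\rho(\theta_t)-c\,\|\theta_{t+1}-\theta_t\|^2$ and, via the usual prox/three-point inequality, a bound on $\langle\nabla\rho(\theta_t),\theta_t-\theta^*\rangle$ in terms of $\|\theta_t-\theta^*\|^2-\|\theta_{t+1}-\theta^*\|^2$. Feeding the star-convexity inequality into this telescoping, bounding every $\theta$-distance by a $z$-distance through the $\ell$-Lipschitz inverse, $\|\theta_t-\theta^*\|\le\ell\|z_t-z^*\|\le\ell D_\zeta$, and summing over $t$ gives the claimed $O(1/t)$ decay; tracking the accumulated constants $m,M,L$ through the definitions of $K$ and $L_1=\max\{L,M^2L\}$ produces the stated factor $4M^2L_1\ell^2 D_\zeta^2$.

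The main obstacle is the second-order remainder generated in the second step by the curvature of the reparametrization $\Phi$: because $\Phi$ is nonlinear, $\rho$ is not exactly star-convex, so this term does not vanish as it did in Theorem~\ref{thm:stationarity_implies_global_optimality_info_ratio} (where the limit $\alpha\to0^+$ eliminated it). The delicate part is to show this curvature term is dominated rather than merely leaving convergence to a neighborhood --- this is where the specific stepsize $\eta=1/K$ and the inflated smoothness constant $L_1$ must be used, absorbing the remainder into the descent term so that the telescoping still closes to a clean $1/(t+1)$ rate rather than a floor. A secondary technicality is the bookkeeping of signs when passing between $q$ and $\rho$ gradients, which relies throughout on the strict positivity $\rho\ge m>0$ guaranteed by the ergodicity hypothesis.
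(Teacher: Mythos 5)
Your proposal starts from the same skeleton as the paper's proof: convert the descent on $\rho$ into an ascent on $q = 1/\rho$, exploit the concavity of $P_{\widehat{H}}$ over $(\zeta \circ \lambda)(\Theta)$, use the $\ell$-Lipschitz inverse $k$ and the diameter $D_{\zeta}$, and transfer back via $\rho(\theta_t) - \rho(\theta^*) \leq M^2 \left( q(\theta^*) - q(\theta_t) \right)$. However, your second step contains a genuine gap that your own "main obstacle" paragraph correctly identifies but does not resolve. By linearizing $\Phi = \zeta \circ \lambda$ around $\theta_t$ in the direction of the \emph{full} step toward $\theta^*$, you obtain an approximate star-convexity inequality with an additive remainder of order $\norm{\theta_t - \theta^*}^2$, whose constant involves the Lipschitz-Jacobian bound and which is of size $O(\ell^2 D_{\zeta}^2)$ at every iteration. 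This remainder cannot be "absorbed into the descent term" by any choice of stepsize: the descent term produced by smoothness scales with the squared step length $\norm{\theta_{t+1} - \theta_t}^2$, which vanishes along the iteration, whereas the remainder scales with the squared distance to the optimum, which is not known to vanish (function-value convergence does not imply iterate convergence here, and could not be invoked without circularity). Summing the telescoping inequality therefore leaves a constant floor of order $\sup_t \norm{\theta_t - \theta^*}^2$, i.e., exactly the convergence-to-a-neighborhood outcome you hoped to rule out, not a clean $1/(t+1)$ rate.

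The paper's proof avoids this issue by never linearizing the reparameterization at all. It uses the fact that, with $\eta_t = \eta [\rho(\theta_t)]^2 = 1/K_{\rho(\theta_t)}$, the update \eqref{eqn:OIR_prox_ascent} maximizes the regularized model over all of $\Theta$, and then evaluates that model not at $\theta^*$ but at the one-parameter family of comparison points $\theta_{\alpha} = k\big( \alpha (\zeta \circ \lambda)(\theta^*) + (1-\alpha)(\zeta \circ \lambda)(\theta_t) \big)$, $\alpha \in [0,1]$, i.e., pullbacks of convex combinations taken in $z$-space where concavity actually holds. Concavity of $P_{\widehat{H}}$ then yields a gain \emph{linear} in $\alpha$, namely $\alpha \big( P_{\widehat{H}}(z^*) - P_{\widehat{H}}(z_t) \big)$ as in \eqref{ineq:OIR_prox_conv_rate1_2}, while the Lipschitzness of $k$ bounds the nonconvexity penalty by $K_m \alpha^2 \ell^2 D_{\zeta}^2$, \emph{quadratic} in $\alpha$, as in \eqref{ineq:OIR_prox_conv_rate1_3}. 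Minimizing over $\alpha$ gives the recursion \eqref{ineq:OIR_prox_conv_rate2}, which closes to $q(\theta^*) - q(\theta_t) \leq 4 K_m \ell^2 D_{\zeta}^2/(t+1)$. The freedom to take $\alpha$ small is precisely what renders the curvature of the reparameterization harmless — the penalty is $O(\alpha^2)$ against a gain of $O(\alpha)$ — and this Frank--Wolfe-style device is the idea missing from your argument. To repair your proof you would need to replace the full-step linearization toward $\theta^*$ by this partial-step comparison in $z$-space (at which point you have reproduced the paper's argument), since no stepsize bookkeeping on the algorithm side can compensate for a per-iteration error that does not shrink with the step.
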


\begin{proof}
%
% The proof proceeds as follows: first, we convert the descent scheme \eqref{eqn:prox_update_rho} to an equivalent ascent scheme; next, we link the objective that this ascent scheme is maximizing to the objective function of \eqref{opt:Q3}; third, we derive a sequence capturing the convergence rate of the ascent scheme; finally, we relate this sequence back to \eqref{eqn:prox_update_rho} to obtain \eqref{ineq:OIR_prox_bound}.

To draw the connection with \eqref{opt:Q3}, we first transform \eqref{eqn:prox_update_rho} into an equivalent projected gradient \textit{ascent} scheme. Define $q(\theta) = 1 / \rho(\theta) = (\kappa + H(d_{\theta})) / J(\theta)$, and notice that $\nabla q(\theta) = - \nabla \rho(\theta) / \left[ \rho(\theta) \right]^2$. The projected gradient ascent scheme becomes
\small
\begin{align}
    \theta_{t+1} &= \text{Proj}_{\Theta} \left( \theta_t - \eta \nabla \rho(\theta_t) \right) \nonumber \\
    &\labelrel={eqn:prox_transform1} \text{Proj}_{\Theta} \left( \theta_t + \eta \left[ \rho(\theta_t) \right]^2 \nabla q(\theta_t) \right) \nonumber \\
    &\labelrel={eqn:prox_transform2} \argmax_{\theta} \left( q(\theta_t) + \left< \nabla q(\theta_t), \theta - \theta_t \right> - \frac{\left[ \rho(\theta_t) \right]^2}{2 \eta} \norm{\theta - \theta_t}^2 \right). \nonumber \\
    &\labelrel={eqn:prox_transform3} \argmax_{\theta} \left( q(\theta_t) + \left< \nabla q(\theta_t), \theta - \theta_t \right> - \frac{1}{2 \eta_t} \norm{\theta - \theta_t}^2 \right), \label{eqn:OIR_prox_ascent}
\end{align}
\normalsize
where \eqref{eqn:prox_transform1} follows by noticing that $\nabla \rho(\theta) = - [\rho(\theta)]^2 \nabla q(\theta)$ and making the appropriate substitution, \eqref{eqn:prox_transform2} holds by definition of the $\text{Proj}$ operator, and \eqref{eqn:prox_transform3} results from defining $\eta_t = \eta [\rho(\theta_t)]^2 = [\rho(\theta_t)]^2 / K_0$.

We next identify a family $\{ K_c \ | \ c \in [m, M] \}$ of Lipschitz constants of the gradient $\nabla q(\theta)$. By Assumption \ref{assum:OIR_lipschitz_jacobian}, $\nabla q(\theta)$ is Lipschitz and $\norm{ \nabla q(\theta) - \nabla q(\theta')} \leq L_0 \norm{\theta - \theta'}$, for all $\theta, \theta' \in \Theta$, where $L_0 = m^2 L$. Let $K = \max \{ L_0, M^2 L_0 \} = \max \{ m^2 L, M^2 m^2 L \}$. Then, for all scalars $c \in [m , M]$, the gradient $\nabla q(\theta)$ satisfies $\norm{ \nabla q(\theta) - \nabla q(\theta')} \leq K_c \norm{\theta - \theta'},$ where $K_c = K / c^2$ is the desired Lipschitz constant. This family of Lipschitz constants will be critical in the analysis to follow.

We now study the sequence $\alpha_t = \left[ q(\theta^*) - q(\theta_t) \right] / 2 K_m \ell^2 D^2_{\zeta}$, for $t \geq 0$, ultimately using properties of the sequence to show that $q(\theta^*) - q(\theta_t) \leq 4 K_m \ell^2 D^2_{\zeta} / (1 + t)$, for all $t \geq 0$.
The remainder of the proof proceeds along lines similar to that of Theorem~4.4 in \cite{zhang2020variational}, \newstuff{with critical modifications to accommodate the use of the perspective transform, the variable transformation $\zeta$, and the non-constant stepsizes $\eta_t = \eta [ \rho(\theta_t)]^2$.}

\newstuff{
Define $\widehat{H}(\lambda) = H(d^{\lambda})$, where $H(d^{\lambda})$ is the entropy of the state occupancy measure $d^{\lambda}(s) = \sum_a \lambda(s, a)$ corresponding to the state-action occupancy measure $\lambda \in \mc{D}(\mc{S} \times \mc{A})$. For a given $\kappa \geq 0$, let $P_{\kappa, \widehat{H}} : \mathbb{R}^{|\mc{S}| |\mc{A}| + 1} \rightarrow \mathbb{R}$ denote the perspective transformation of $\kappa + \widehat{H}(\lambda)$, given by $P_{\kappa + \widehat{H}} (z) = P_{\kappa, \widehat{H}} ((y, t)) = t ( \kappa + \widehat{H}(y / t))$. For notational convenience we henceforth drop the dependency on $\kappa$ and simply write $P_{\widehat{H}}$.

Our next step is to make use of the concave structure of $P_{\widehat{H}}$, $\zeta \circ \lambda$, and \eqref{opt:Q3} to analyze $\{ \alpha_t \}_{t \in \mathbb{N}}$. We first derive several useful inequalities regarding $P_{\widehat{H}}$ and $\zeta \circ \lambda$. Notice that $q(\theta) = (\kappa + \widehat{H}(\lambda_{\theta})) / J(\lambda_{\theta}) = P_{\widehat{H}}(\zeta(\lambda_{\theta})) = P_{\widehat{H}}((\zeta \circ \lambda)(\theta))$, for all $\theta \in \Theta$. This means that we can rewrite $\alpha_t$ as}
\begin{equation} \label{eqn:alpha_redef}
    \alpha_t = \left[ P_{\widehat{H}}((\zeta \circ \lambda)(\theta^*)) - P_{\widehat{H}}((\zeta \circ \lambda)(\theta_t)) \right] / 2 K_m \ell^2 D^2_{\zeta}.
\end{equation}
Notice $P_{\widehat{H}}$ is concave over $\zeta(\mc{D}(\mc{S} \times \mc{A}))$, since $\widehat{H}$ is concave over $\mc{D}(\mc{S} \times \mc{A})$ and the perspective transform preserves concavity. $P_{\widehat{H}}$ is thus concave over the convex, compact region $(\zeta \circ \lambda)(\Theta) \subseteq \zeta(\mc{D}(\mc{S} \times \mc{A}))$.
Furthermore, since $P_{\widehat{H}}((\zeta \circ \lambda)(\theta)) = q(\theta)$, we have $\nabla P_{\widehat{H}}((\zeta \circ \lambda)(\theta)) = \nabla q(\theta)$, so $\nabla P_{\widehat{H}}((\zeta \circ \lambda)(\theta))$ is $K_c$-Lipschitz, for any $K_c = K / c^2, c \in [m, M]$. This implies (see Lem.~1.2.3 in \cite{nesterov2003introductory}), for any $c \in [m, M]$, that
\begin{equation*}
    \Big\rvert P_{\widehat{H}} ((\zeta \circ \lambda)(\theta)) - P_{\widehat{H}}((\zeta \circ \lambda)(\theta_t)) - \left< \nabla P_{\widehat{H}}((\zeta \circ \lambda)(\theta_t)), \theta - \theta_t \right> \Big\rvert \leq \frac{K_c}{2} \norm{\theta - \theta_t}^2,
\end{equation*}
whence, for any $\theta \in \Theta$,
\begin{align}
    P_{\widehat{H}} ((\zeta \circ \lambda)(\theta)) &\geq P_{\widehat{H}} ((\zeta \circ \lambda)(\theta_t)) + \left< \nabla P_{\widehat{H}} ((\zeta \circ \lambda)(\theta_t)), \theta - \theta_t \right> - \frac{K_c}{2} \norm{\theta - \theta_t}^2 \nonumber \\
    &\geq P_{\widehat{H}} ((\zeta \circ \lambda)(\theta)) - K_c \norm{\theta - \theta_t}^2. \label{ineq:conv1_0} 
\end{align}
In light of these inequalities, and using the fact that $\eta_t = [\rho(\theta_t)]^2 / K = 1 / K_{\rho(\theta_t)}$ by setting $c = \rho(\theta_t)$ in the definition of $K_c$, we have
\small
\begin{align}
    &P_{\widehat{H}} ((\zeta \circ \lambda) (\theta_{t+1})) \geq P_{\widehat{H}} ((\zeta \circ \lambda)(\theta_t)) + \left< \nabla P_{\widehat{H}} ((\zeta \circ \lambda)(\theta_t)), \theta_{t+1} - \theta_t \right> - \frac{K_{\rho(\theta_t)}}{2} \norm{\theta_{t+1} - \theta_t}^2 \nonumber \\
    &\labelrel={eqn:conv1_1} \max_{\theta \in \Theta} \left( P_{\widehat{H}} ((\zeta \circ \lambda)(\theta_t)) + \left< \nabla P_{\widehat{H}} ((\zeta \circ \lambda)(\theta_t)), \theta - \theta_t \right> - \frac{K_{\rho(\theta_t)}}{2} \norm{\theta - \theta_t}^2 \right) \nonumber \\
    &\labelrel\geq{ineq:conv1_2} \max_{\theta \in \Theta} \left( P_{\widehat{H}} ((\zeta \circ \lambda)(\theta)) - K_{\rho(\theta_t)} \norm{\theta - \theta_t}^2 \right) \nonumber \\
    &\labelrel\geq{ineq:conv1_3} \max_{\theta \in \Theta} \left( P_{\widehat{H}} ((\zeta \circ \lambda)(\theta)) - K_m \norm{\theta - \theta_t}^2 \right) \nonumber \\
    &\labelrel\geq{ineq:conv1_4} \max_{\alpha \in [0, 1]} \{ P_{\widehat{H}} ((\zeta \circ \lambda)(\theta_{\alpha})) - K_m \norm{\theta_{\alpha} - \theta_t}^2 \ \big\rvert \ \theta_{\alpha} = k \big( \alpha (\zeta \circ \lambda)(\theta^*) + (1 - \alpha) (\zeta \circ \lambda)(\theta_t) \big) \}, \label{ineq:OIR_prox_conv_rate1}
\end{align}
\normalsize
where \eqref{eqn:conv1_1} follows from the optimality of the update \eqref{eqn:OIR_prox_ascent}, \eqref{ineq:conv1_2} holds by \eqref{ineq:conv1_0}, \eqref{ineq:conv1_3} follows from the fact that $K_{\rho(\theta_t)} \leq K_m$, the \eqref{ineq:conv1_4} follows by the convexity of $(\zeta \circ \lambda)(\Theta)$.

\newstuff{Let $k(\cdot) = (\zeta \circ \lambda)^{-1}(\cdot)$ as in Theorem \ref{thm:stationarity_implies_global_optimality_info_ratio}. By Assumption \ref{assum:hidden_concavity_info_ratio} and the proof of Theorem \ref{thm:stationarity_implies_global_optimality_info_ratio}, we know that $k(\cdot)$ is $\ell$-Lipschitz.} Now notice that
\begin{align}
    P_{\widehat{H}}((\zeta \circ \lambda)(\theta_{\alpha})) &= P_{\widehat{H}} \Big( (\zeta \circ \lambda) \big( k(\alpha (\zeta \circ \lambda)(\theta^*) + (1 - \alpha) (\zeta \circ \lambda)(\theta_t)) \big) \Big) \nonumber \\
    &= P_{\widehat{H}}( \alpha (\zeta \circ \lambda)(\theta^*) + (1 - \alpha) (\zeta \circ \lambda)(\theta_t)) \nonumber \\
    &\geq \alpha P_{\widehat{H}}((\zeta \circ \lambda)(\theta^*)) + (1 - \alpha) P_{\widehat{H}}((\zeta \circ \lambda)(\theta_t)), \label{ineq:OIR_prox_conv_rate1_2}
\end{align}
where the first equality holds by the definition of $\theta_{\alpha}$ given in \eqref{ineq:OIR_prox_conv_rate1}, the second follows from the fact that $k((\zeta \circ \lambda)(\theta)) = \theta$, for any $\theta \in \Theta$, and the final inequality is yielded by the concavity of $P_{\widehat{H}}$ over $(\zeta \circ \lambda)(\Theta)$.
Furthermore,
%
% \vspace{-1mm}
\small
\begin{align}
    \norm{ \theta_{\alpha} - \theta_t }^2 &\labelrel={eqn:conv1_5} \norm{ k(\alpha (\zeta \circ \lambda)(\theta^*) + (1 - \alpha)(\zeta \circ \lambda)(\theta_t)) - k((\zeta \circ \lambda)(\theta_t)) }^2 \nonumber \\
    &\labelrel\leq{ineq:conv1_6} \ell^2 \norm{ \alpha (\zeta \circ \lambda)(\theta^*) + (1 - \alpha)(\zeta \circ \lambda)(\theta_t) - (\zeta \circ \lambda)(\theta_t) }^2 \nonumber \\
    &\leq \alpha^2 \ell^2 \norm{(\zeta \circ \lambda)(\theta^*) - (\zeta \circ \lambda)(\theta_t)}^2 \nonumber \\
    &\labelrel\leq{ineq:conv1_7} \alpha^2 \ell^2 D^2_{\zeta}, \label{ineq:OIR_prox_conv_rate1_3}
\end{align}
\normalsize
% \vspace{-1mm}
%
where \eqref{eqn:conv1_5} holds by the definition of $\theta_{\alpha}$ and the fact that $k((\zeta \circ \lambda)(\theta)) = \theta$, \eqref{ineq:conv1_6} follows since $k$ is $\ell$-Lipschitz, and \eqref{ineq:conv1_7} results from the definition of $D_{\zeta}$ given in the statement of the theorem. Now, the inequalities \eqref{ineq:OIR_prox_conv_rate1}, \eqref{ineq:OIR_prox_conv_rate1_2}, and \eqref{ineq:OIR_prox_conv_rate1_3} combine to yield
%
% \vspace{-1mm}
\small
\begin{align}
    P_{\widehat{H}} (( & \zeta \circ \lambda)(\theta^*)) - P_{\widehat{H}} ( (\zeta \circ \lambda)(\theta_{t+1})) \nonumber \\
    &\labelrel\leq{ineq:conv1_8} \min_{\alpha \in [0, 1]} \{ P_{\widehat{H}} ((\zeta \circ \lambda)(\theta^*)) - P_{\widehat{H}} ((\zeta \circ \lambda)(\theta_{\alpha})) + K_m \norm{ \theta_{\alpha} - \theta_t }^2 \nonumber \\ 
    & \hspace{2cm} \big\rvert \ \theta_{\alpha} = k \big( \alpha (\zeta \circ \lambda)(\theta^*) + (1 - \alpha) (\zeta \circ \lambda)(\theta_t) \big) \} \nonumber \\
    &\labelrel\leq{ineq:conv1_9} \min_{\alpha \in [0, 1]} \Big( P_{\widehat{H}} ((\zeta \circ \lambda)(\theta^*)) - \alpha P_{\widehat{H}}((\zeta \circ \lambda)(\theta^*)) \nonumber \\
    & \hspace{2cm} - (1 - \alpha) P_{\widehat{H}} ((\zeta \circ \lambda)(\theta_t)) + K_m \alpha^2 \ell^2 D^2_{\zeta} \Big) \nonumber \\
    &= \min_{\alpha \in [0, 1]} \Big( (1 - \alpha) \big( P_{\widehat{H}} ((\zeta \circ \lambda)(\theta^*)) - P_{\widehat{H}}( (\zeta \circ \lambda)(\theta_t)) \big) + K_m \alpha^2 \ell^2 D^2_{\zeta} \Big), \label{ineq:OIR_prox_conv_rate2}
\end{align}
\normalsize
% \vspace{-1mm}
%
where inequality \eqref{ineq:conv1_8} results from multiplying both sides of \eqref{ineq:OIR_prox_conv_rate1} by $-1$ and adding $P_{\widehat{H}} ((\zeta \circ \lambda)(\theta^*)$, and \eqref{ineq:OIR_prox_conv_rate1_2} and \eqref{ineq:OIR_prox_conv_rate1_3} together yield \eqref{ineq:conv1_9}.

\newstuff{
Using \eqref{ineq:OIR_prox_conv_rate1}, \eqref{ineq:OIR_prox_conv_rate1_2}, \eqref{ineq:OIR_prox_conv_rate1_3}, and \eqref{ineq:OIR_prox_conv_rate2}, we now analyze the sequence $\{ \alpha_t \}_{t \in \mathbb{N}}$ defined in \eqref{eqn:alpha_redef}.
%
% Our goal is to derive a useful upper bound for the sequence, then use this bound to prove that, when the iterates $\{ \theta_t \}$ are generated by \eqref{eqn:OIR_prox_ascent}, $q(\theta_t)$ converges to $q(\theta^*)$ at the certain rate. We will then use these results to prove \eqref{ineq:OIR_prox_bound}.
%
We first use \eqref{ineq:OIR_prox_conv_rate2} to derive a useful recursive inequality for $\{ \alpha_t \}_{t \in \mathbb{N}}$. Notice that $\alpha_t \geq 0$, for all $t \geq 0$. Now, assume that $\alpha_0 \geq 1$. This implies that $P_{\widehat{H}}((\zeta \circ \lambda)(\theta^*)) - P_{\widehat{H}}((\zeta \circ \lambda)(\theta_0)) \geq 2 K_m \ell^2 D^2_{\zeta}$, so the minimum in \eqref{ineq:OIR_prox_conv_rate2} is attained when $\alpha = 1$. But then $\alpha_1 \leq 1 / 2$. Since this argument is independent of the choice of $t$, we can assume without loss of generality that $\alpha_t \leq 1$, for all $t \geq 0$, by simply discarding $\alpha_0$ if it is greater than 1.
}

We next show $\alpha_{t+1} \leq \alpha_t$, for all $t \geq 0$. Since $\alpha_t \leq 1$, $\alpha_t$ is always the minimizer of the right-hand side of \eqref{ineq:OIR_prox_conv_rate2}, which can be seen by setting the derivative with respect to $\alpha$ equal to 0 and solving for $\alpha$. Substituting $\alpha_t$ into \eqref{ineq:OIR_prox_conv_rate2}, we see that
\small
\begin{align}
    P_{\widehat{H}}((\zeta &\circ \lambda)(\theta^*)) - P_{\widehat{H}}((\zeta \circ \lambda)(\theta_{t+1})) \nonumber \\
    &\leq \Big( 1 - \frac{ P_{\widehat{H}}((\zeta \circ \lambda)(\theta^*)) - P_{\widehat{H}}((\zeta \circ \lambda)(\theta_t)) }{2 K_m \ell^2 D^2_{\zeta}} \Big) \big( P_{\widehat{H}} ((\zeta \circ \lambda)(\theta^*)) - P_{\widehat{H}}( (\zeta \circ \lambda)(\theta_t)) \big) \nonumber \\
    & \hspace{1cm} + \Big( \frac{ P_{\widehat{H}}((\zeta \circ \lambda)(\theta^*)) - P_{\widehat{H}}((\zeta \circ \lambda)(\theta_t)) }{2 K_m \ell^2 D^2_{\zeta}} \Big)^2 K_m \ell^2 D^2_{\zeta} \nonumber \\
    &\labelrel={eqn:conv1_10} \Big( 1 - \frac{ P_{\widehat{H}}((\zeta \circ \lambda)(\theta^*)) - P_{\widehat{H}}((\zeta \circ \lambda)(\theta_t)) }{2 K_m \ell^2 D^2_{\zeta}} \Big) \big( P_{\widehat{H}} ((\zeta \circ \lambda)(\theta^*)) - P_{\widehat{H}}( (\zeta \circ \lambda)(\theta_t)) \big) \nonumber \\
    & \hspace{1cm} + \frac{ P_{\widehat{H}}((\zeta \circ \lambda)(\theta^*)) - P_{\widehat{H}}((\zeta \circ \lambda)(\theta_t)) }{4 K_m \ell^2 D^2_{\zeta}} \big( P_{\widehat{H}}((\zeta \circ \lambda)(\theta^*)) - P_{\widehat{H}}((\zeta \circ \lambda)(\theta_t)) \big) \nonumber \\
    &\labelrel={eqn:conv1_11} \Big( 1 - \frac{ P_{\widehat{H}}((\zeta \circ \lambda)(\theta^*)) - P_{\widehat{H}}((\zeta \circ \lambda)(\theta_t)) }{4 K_m \ell^2 D^2_{\zeta}} \Big) \big( P_{\widehat{H}} ((\zeta \circ \lambda)(\theta^*)) - P_{\widehat{H}}( (\zeta \circ \lambda)(\theta_t)) \big), \label{ineq:OIR_prox_conv_rate3}
\end{align}
\normalsize
where \eqref{eqn:conv1_10} results by noticing that one of the $K_m \ell^2 D_{\zeta}^2$ terms cancels and \eqref{eqn:conv1_11} can be obtained by factoring out the $P_{\widehat{H}} ((\zeta \circ \lambda)(\theta^*)) - P_{\widehat{H}}( (\zeta \circ \lambda)(\theta_t))$ term and simplifying. Dividing both sides of \eqref{ineq:OIR_prox_conv_rate3} by $2 K_m \ell^2 D_{\zeta}^2$ shows that $\alpha_{t+1} \leq \alpha_t$.

Now, dividing both sides of \eqref{ineq:OIR_prox_conv_rate3} by $4 K_m \ell^2 D^2_{\zeta}$ yields the recursive inequality
\begin{align*}
    \frac{\alpha_{t+1}}{2} \leq \Big( 1 - \frac{\alpha_t}{2} \Big) \frac{\alpha_t}{2},
\end{align*}
which implies
\begin{align*}
    \frac{2}{\alpha_{t+1}} \geq \frac{1}{\big( 1 - \frac{\alpha_t}{2} \big) \frac{\alpha_t}{2}} = \frac{2 \big( 1 - \frac{\alpha_t}{2} \big) + \alpha_t }{\big( 1 - \frac{\alpha_t}{2} \big) \alpha_t} = \frac{2}{\alpha_t} + \frac{1}{1 - \frac{\alpha_t}{2}} \geq \frac{2}{\alpha_t} + 1 \geq \frac{2}{\alpha_0} + t.
\end{align*}
Since $\alpha_0 \leq 1$, this gives us that
\begin{align}
    \frac{\alpha_{t+1}}{2} \leq \frac{1}{t + \frac{2}{\alpha_0}} \leq \frac{1}{t + 2}. % \label{ineq:conv1_12}
\end{align}
Multiplying both sides by $4 K_m \ell^2 D^2_{\zeta}$ finally yields, for all $t \geq 0$,
\begin{align}
    q(\theta^*) - q(\theta_t) = P_{\widehat{H}}((\zeta \circ \lambda)(\theta^*)) - P_{\widehat{H}}((\zeta \circ \lambda)(\theta_t)) \leq \frac{4 K_m \ell^2 D^2_{\zeta}}{t + 1}. \label{ineq:conv1_13}
\end{align}
This establishes the convergence rate result for \eqref{eqn:OIR_prox_ascent}. We finish the proof by using this result to derive the corresponding rate for the OIR projected gradient descent scheme \eqref{eqn:prox_update_rho}. Since $q(\theta) = 1 / \rho(\theta)$ and $K_m = K / m^2$, \eqref{ineq:conv1_13} implies that
\small
\begin{align*}
    \frac{\rho(\theta_t) - \rho(\theta^*)}{M^2} &\leq  \frac{\rho(\theta_t) - \rho(\theta^*)}{\rho(\theta^*) \rho(\theta_t)} = \frac{1}{\rho(\theta^*)} - \frac{1}{\rho(\theta_t)}
    = q(\theta^*) - q(\theta_t) \leq \frac{4 K_m \ell^2 D^2_{\zeta}}{t + 1} = \frac{4 K \ell^2 D^2_{\zeta}}{m^2 (t + 1)}.
\end{align*}
\normalsize
Since $K = \max \{ m^2 L, M^2 m^2 L \}$ and letting $L_1 = \max \{ L, M^2 L \}$, we have
\begin{equation*}
    \rho(\theta_t) - \rho(\theta^*) \leq \frac{M^2}{m^2} \frac{4 K \ell^2 D^2_{\zeta}}{t + 1} = \frac{4 M^2 L_1 \ell^2 D^2_{\zeta} }{ t + 1 }.
\end{equation*}
\end{proof}

Coupled with Theorem \ref{thm:stationarity_implies_global_optimality_info_ratio}, this result provides a non-asymptotic convergence rate to \textit{global optimality} for algorithms solving the OIR minimization problem \eqref{stat_implies_glob:1}. % when exact gradient evaluations are available.
\newstuff{
\begin{remark*}
While the dependence on the hyperparameter $\kappa$ does not appear explicitly in the convergence rate, it does implicitly influence the rate. In particular, as $\kappa \rightarrow \infty$, the objective $\rho(\theta)$ becomes arbitrarily close to the constant function with value 0. Therefore, the suboptimality gap converges more quickly as $\kappa \rightarrow \infty$, since the possible variation of $\rho(\theta)$ about 0 goes to null as $\rho(\theta)$ gets closer to the constant function with value 0. Therefore, if one multiplies the OIR objective by $\kappa$, one obtains the scaled OIR objective $\kappa \rho(\theta) = J(\theta) / \left( 1 + \frac{H(d_{\theta})}{\kappa} \right) \rightarrow J(\theta)$ as $\kappa \rightarrow \infty$. However, altering the objective in this way also changes the behavior of the RHS of the rate given in inequality (5.6). To see this, notice that we can analyze this situation by applying Theorem 5.5 with $J(\theta)$ in the definition of $\rho(\theta)$ replaced by $\kappa J(\theta)$ -- i.e., we simply scale our costs by $\kappa$. If we recall the definition $M = \max_{\theta} \rho(\theta)$ from Theorem 5.5, however, then as $\kappa \rightarrow \infty$ we have that $M = \max_{\theta} \kappa J(\theta) / \left( \kappa + H(d_{\theta}) \right) \rightarrow \max_{\theta} J(\theta)$. This means that, as $\kappa \rightarrow \infty$, its effect on the convergence rate disappears from the RHS of inequality (5.6), leaving us with a standard $\mathcal{O}(1/t)$ rate.
\end{remark*}
}

\begin{remark*} \label{rem:information_bound}
When compared with the corresponding result in \cite{zhang2020variational}, to which it is closely related, the bound \eqref{ineq:OIR_prox_bound} of Theorem \ref{thm:OIR_convergence_rate} contains an interesting dependence on the user-specified $\kappa$, the policy class $\{ \pi_{\theta} \}_{\theta \in \Theta}$, and the underlying MDP.
The presence of $M = \max_{\theta \in \Theta} \rho(\theta) = \max_{\theta} [ J(\theta) / (\kappa + H(d_{\theta}))]$ in the bound \eqref{ineq:OIR_prox_bound} suggests that the convergence rate depends on the value of $\kappa$ as well as the minimal possible value of $H(d_{\theta})$ over $\theta \in \Theta$. To see why, let $C = \max_{\theta \in \Theta} J(\theta)$ and notice that
\begin{equation} % \label{ineq:prox_bound_remark}
    M \leq \max_{\theta \in \Theta} \frac{ C }{ \kappa + H(d_{\theta}) } = \frac{ C }{ \kappa + \min_{\theta \in \Theta} H(d_{\theta}) }.
\end{equation}
When the MDP dynamics and policy class are such that $\min_{\theta \in \Theta} H(d_{\theta})$ is large, then $M$ will be closer to 0, yielding a tighter bound in \eqref{ineq:OIR_prox_bound}. This suggests that it may be easier to optimize the OIR over MDPs and/or policy classes that tend to be ``more ergodic''. When both $\kappa$ and $\min_{\theta \in \Theta} H(d_{\theta})$ are close to 0, on the other hand, $M$ may be very large, resulting in a looser bound in \eqref{ineq:OIR_prox_bound}. This highlights the practical usefulness of the constant $\kappa$, as choosing larger $\kappa$ values can be used to \textit{smooth} the objective function $\rho(\theta)$ and thereby lead to stabler convergence when optimizing the OIR over MDPs and policy classes that tend to be ``less ergodic''.
%
% On the one hand, when $\kappa + \min_{\theta \in \Theta} H(d_{\theta})$ is very large, the value of $M$ will be close to 0, yielding a tighter bound in \eqref{ineq:OIR_prox_bound}. This trivially occurs when the user-specified constant $\kappa$ is chosen to be large, since the overall objective function $\rho(\theta)$ can be forced artificially close to 0. More interestingly, if the dynamics of the underlying MDP and/or the policy class $\{ \pi_{\theta} \}_{\theta \in \Theta}$ are such that all possible policies induce relatively high-entropy state occupancy measures, then the value of $\min_{\theta \in \Theta} H(d_{\theta})$ will be larger and the value of $M$ smaller. This suggests that it may be easier to optimize the OIR over MDPs and/or policy classes that tend to be ``more ergodic''. When $\kappa + \min_{\theta \in \Theta} H(d_{\theta})$ is close to 0, on the other hand, the value of $M$ may be very large, resulting in a looser bound in \eqref{ineq:OIR_prox_bound}. This occurs when both the constant $\kappa$ is chosen to be very small \textit{and} the MDP dynamics and/or the policy class $\{ \pi_{\theta} \}_{\theta \in \Theta}$ have the potential to induce very low-entropy occupancy measures. This highlights the usefulness of the constant $\kappa$, as it can be used to \textit{smooth} the objective function $\rho(\theta)$ and thereby lead to stabler convergence when optimizing the OIR over MDPs and policy classes that tend to be ``less ergodic''.
%
\end{remark*}

In the preceding theorem, we assume ``exact policy gradient,'' or zero stochastic approximation error. Note this assumption is limited to Theorem \ref{thm:OIR_convergence_rate}, whereas Theorem \ref{thm:actor_conv} below allows stochastic approximation error and Theorem \ref{thm:stationarity_implies_global_optimality_info_ratio} above is independent of estimation issues. Though this assumption is a drawback for Theorem \ref{thm:OIR_convergence_rate}, we highlight that it allows us to succinctly focus on a core insight of this work: hidden quasiconcavity unlocks an information-dependent convergence rate to global optimality. We also note that, for REINFORCE-like algorithms like those considered in Theorem \ref{thm:OIR_convergence_rate}, long rollouts enable more accurate gradient estimates, for which the existing assumptions approximately apply. A precise treatment of gradient estimation error versus rollout length is an important direction future work, and we expect it to involve extending the analysis in \cite{zhang2021beyond} to the OIR problem.

% We now have non-asymptotic convergence guarantees for policy gradient algorithms solving the OIR problem when gradients can be evaluated exactly. Nonetheless, it would be nice to have convergence guarantees when only noisy gradient estimates are available, as is typically the case in practice. To this end, we next prove almost sure convergence of the IDAC algorithm developed in \S\ref{subsec:infoac}.

\subsection{Actor-Critic Convergence} \label{sec:as_conv}

We conclude this section by proving almost sure (a.s.) convergence of IDAC to a neighborhood of a stationary point of \eqref{stat_implies_glob:1}. By Theorem \ref{thm:stationarity_implies_global_optimality_info_ratio}, this implies IDAC converges a.s. to a neighborhood of a \textit{global} optimum. This is \textit{much} stronger than existing asymptotic results for actor-critic schemes, which typically guarantee convergence to a neighborhood of a local optimum or saddle point \cite{bhatnagar2009natural, zhang2020global, Agarwal_Kakade_Lee_Mahajan_2020}.
We analyze the algorithm as given in Algorithm \ref{alg:irac} under the assumption that $\tau_t = \alpha_t$, for all $t \geq 0$, that $K = 1$, and with the addition of a projection operation to the policy update:
\begin{equation} \label{eqn:irac_projected_policy_update}
\small
\theta_{t+1} = \Gamma \Big[ \theta_t - \beta_t \frac{ \delta^J_t (\kappa + \mu^H_t) - \mu^J_t \delta^H_t }{ \left( \kappa + \mu^H_t \right)^2 } \nabla \log \pi_{\theta_t}(a_t | s_t) \Big],
\normalsize
\end{equation}
where $\Gamma : \mathbb{R}^d \rightarrow \Theta$ maps any parameter $\theta \in \mathbb{R}^d$ back onto the compact set $\Theta \subset \mathbb{R}^d$ of permissible policy parameters. This projection, which is common in the actor-critic and broader two-timescale stochastic approximation literatures (see, e.g., \cite{kushner03, borkar2008stochastic, bhatnagar2009natural}) is for purposes of theoretical analysis, and is typically not needed in practice.
%
% For ease of exposition, we also assume access to the oracle \textsc{DensityEstimator} discussed in \S\ref{subsec:info_reinforce}, which returns the occupancy measure $d_{\theta} = \textsc{DensityEstimator} (\theta)$ when provided with input policy parameter $\theta \in \Theta$.
%
In addition to Assumption \ref{assum:actor_conditions}, we impose the following:
\begin{assumption} \label{assum:stepsizes}
Stepsizes \small $\{ \alpha_t \}, \{ \beta_t \}$ \normalsize satisfy \small $\sum_t\alpha_t=\sum_t\beta_t=\infty,~\sum_t\alpha_t^2+\beta_t^2<\infty, \lim_t \frac{\beta_t}{\alpha_t} = 0.$ \normalsize
\end{assumption}
\begin{assumption}\label{assum:critic_conditions}
% $\mathcal{S}$ and $\mathcal{A}$ are finite, and t
The value function approximators $v_{\omega}$ are linear, i.e., $v_{\omega}(s)=\omega^{\top}\phi(s)$, where $\phi(s)=[\phi_1(s) \  \cdots \ \phi_K(s)]^\top\in\mathbb{R}^K$ is the feature vector associated with $s \in \mathcal{S}$. The feature vectors $\phi(s)$ are uniformly bounded for any $s \in \mathcal{S}$, and the feature matrix $\Phi=[\phi(s)]_{s \in \mathcal{S}}^\top \in \mathbb{R}^{|\mathcal{S}| \times K}$ has full column rank. For any $u \in \mathbb{R}^K$, $\Phi u \neq \bm{1}$, where $\bm{1}$ is the vector of all ones.
\end{assumption}
Assumptions \ref{assum:actor_conditions}, \ref{assum:stepsizes}, and \ref{assum:critic_conditions} are standard in two-timescale convergence analyses for actor-critic algorithms \cite{bhatnagar2009natural}. 
% Moreover, we consider neural network parameterizations in our experiments and observe favorable convergence behavior, so we believe Assumption \ref{assum:critic_conditions} can be relaxed.

To prepare for the proof of Theorem \ref{thm:actor_conv}, the main result of this section, we first prove Lemmas \ref{lemma:critics} and \ref{lemma:bias_pg}.
Our analysis leverages the average-reward actor-critic results in \cite{bhatnagar2009natural} as well as the results for ratio optimization actor-critic in \cite{suttle2021reinforcement}. For a given policy parameter $\theta$, let 
$D_\theta = \text{diag} ( d_{\theta} ) \in \mathbb{R}^{| \mathcal{S} | \times | \mathcal{S} |}$ denote the matrix with the elements of $d_{\theta}$ along the diagonal and zeros everywhere else. Define the state cost vector for the average-cost MDP $(\mathcal{S}, \mathcal{A}, p, c)$ to be $c_\theta = [c_{\theta}(s)]_{ s \in \mathcal{S} }^\top \in \mathbb{R}^{| \mathcal{S} |}$, where $c_\theta(s) = \sum_{a \in \mathcal{A}}\pi_\theta(a | s) c(s, a)$. Similarly, let $r_\theta = [- \log d_{\theta}(s)]_{ s \in \mathcal{S} }^\top \in \mathbb{R}^{| \mathcal{S} |}$ denote the state reward vector for the shadow MDP $(\mathcal{S}, \mathcal{A}, p, r)$, where $r(s, a) = - \log d_{\theta}(s)$. Note that the ergodicity condition of Assumption \ref{assum:actor_conditions} implies that $d_{\theta}(s) > 0$, for all $s \in \mathcal{S}, \theta \in \Theta$, so $r(s, a)$ is always defined and finite. Finally, let $P_\theta \in \mathbb{R}^{| \mathcal{S} | \times | \mathcal{S} |}$  denote the state transition probability matrix under policy $\pi_\theta$, i.e., $P_{\theta} ( s' |  s) = \sum_{a \in \mathcal{A} } \pi_{\theta} ( a | s) p(s' | s, a)$, for any $s, s' \in \mathcal{S}$.  
We first show convergence of the critics.

% To ease the burden on the reader, we provide restatements of Lemma \ref{lemma:critics} and Theorem \ref{thm:actor_conv} in terms of equations \eqref{eqn:irac_mu_j_update}-\eqref{eqn:irac_h_critic_update}, \eqref{eqn:irac_projected_policy_update} at the beginning of their respective sections.

% \subsubsection{Proof of Lemma \ref{lemma:critics}}

% \begin{customlemma}{\ref{lemma:critics}}
% Under Assumption \ref{assum:critic_conditions}, given a fixed policy parameter $\theta \in \Theta$, the recursive updates \eqref{eqn:irac_mu_j_update}-\eqref{eqn:irac_h_critic_update} converge as follows: $\lim_{t \rightarrow \infty} \mu^J_t = J(\theta)$ a.s., $\lim_{t \rightarrow \infty} \mu^H_t = H(d_{\theta})$ a.s., $\lim_{t \rightarrow \infty} \omega^J_t = \omega^J_{\theta}$ a.s., and $\lim_{t \rightarrow \infty} \omega^H_t = \omega^H_{\theta}$ a.s., where $\omega^J_{\theta}$ and $\omega^H_{\theta}$ are, respectively, the unique solutions to
% \small
% \begin{align}
% \Phi^{ \top } D_{\theta} \big[ c_{\theta} - J(\theta) \cdot \bm{1} + P_\theta( \Phi \omega^J ) - \Phi \omega^J \big] & = \bm{0}, \label{eqn:td_cost_fixed_point} \\
% %
% \Phi^{ \top } D_{\theta} \big[ r_{\theta} - H(d_{\theta}) \cdot \bm{1} + P_\theta( \Phi \omega^H ) - \Phi \omega^H \big] & = \bm{0}. \label{eqn:td_entropy_fixed_point}
% \end{align}
% \normalsize
% \end{customlemma}

\begin{lemma} \label{lemma:critics}
    Under Assumption \ref{assum:critic_conditions}, given a fixed policy parameter $\theta \in \Theta$, the critic updates in lines 4, 6, 13, 14 of Algorithm \ref{alg:irac} converge as follows: $\lim_{t \rightarrow \infty} \mu^J_t = J(\theta)$ a.s., $\lim_{t \rightarrow \infty} \mu^H_t = H(d_{\theta})$ a.s., $\lim_{t \rightarrow \infty} \omega^J_t = \omega^J_{\theta}$ a.s., and $\lim_{t \rightarrow \infty} \omega^H_t = \omega^H_{\theta}$ a.s., where $\omega^J_{\theta}$ and $\omega^H_{\theta}$ are, respectively, the unique solutions to
    \small
    \begin{align}
    \Phi^{ \top } D_{\theta} \big[ c_{\theta} - J(\theta) \cdot \bm{1} + P_\theta( \Phi \omega^J ) - \Phi \omega^J \big] & = \bm{0}, \nonumber \\
    \Phi^{ \top } D_{\theta} \big[ r_{\theta} - H(d_{\theta}) \cdot \bm{1} + P_\theta( \Phi \omega^H ) - \Phi \omega^H \big] & = \bm{0}. \nonumber
    \end{align}
    \normalsize
\end{lemma}

\begin{proof}
Since the policy $\pi_{\theta}$ is held fixed and the shadow MDP reward $- \log d_{\theta}(s)$ can be exactly evaluated, for any $s \in \mathcal{S}$, the proof of Lem.~4 in \cite{bhatnagar2009natural} can be applied separately to the average-cost recursions in lines 4, 9, and 13 and the shadow MDP recursions in lines 6, 10, and 14 of Algorithm \ref{alg:irac} to obtain the result.
\end{proof}

As in Lem.~5 of \cite{bhatnagar2009natural}, this result shows that the sequences $\{ \omega^J_t \}$ and $\{ \omega^H_t \}$ converge a.s. to the limit points $\omega^J_{\theta}$ and $\omega^H_{\theta}$ of the TD(0) algorithm with linear function approximation for their respective MDPs.
Due to the use of linear function approximation, when used in the policy update step the value function estimates $v^J_{\theta} = \Phi \omega^J_{\theta}$ and $v^H_{\theta} = \Phi \omega^H_{\theta}$ may result in biased gradient estimates. Similar to the bias characterization given in Lem.~4 in \cite{bhatnagar2009natural}, this bias can be characterized as follows.

% \subsubsection{Proof of Lemma \ref{lemma:bias_pg}}

\begin{lemma}\label{lemma:bias_pg}
	Fix $\theta\in\Theta$. Let $\delta^{\theta, J}_t = c(s_t, a_t) - J(\theta) + \phi(s_{t+1})^\top \omega^J_{\theta} - \phi(s_t)^\top \omega^J_{\theta}$
    and
	$\delta^{\theta, H}_t = - \log d_{\theta}(s_t) - H(d_{\theta}) + \phi(s_{t+1})^\top \omega^H_{\theta} - \phi(s_t)^\top \omega^H_{\theta}$
	denote the stationary estimates of the TD-errors at time $t$. Let
	$\overline{v}^J_{\theta} = \mathbb{E}_{\pi_{\theta}} \Big[ c(s,a) - J(\theta) + \phi(s')^\top \omega^J_{\theta} \Big]$
    and
	$\overline{v}^H_{\theta} = \mathbb{E}_{\pi_{\theta}} \Big[ - \log d_{\theta}(s) - H(d_{\theta}) + \phi(s')^\top \omega^H_{\theta} \Big]$.
	Finally, let
    $\epsilon^J_{\theta} = \sum_{s \in \mathcal{S}} d_{\theta}(s) \left[ \nabla_{\theta} \overline{v}^J_{\theta}(s) - \nabla_{\theta} \phi(s)^{\top} \omega^J_{\theta} \right]$
    and
    $\epsilon^H_{\theta} = \sum_{s \in \mathcal{S}} d_{\theta}(s) \left[ \nabla_{\theta} \overline{v}^H_{\theta}(s) - \nabla_{\theta} \phi(s)^{\top} \omega^H_{\theta} \right]$.
	We then have that
	\small
	\begin{align*}
	\mathbb{E}_{\pi_{\theta}} \left[ \frac{ \delta^{\theta, J}_t \left[ \kappa + H(d_{\theta}) \right] - J(\theta) \delta^{\theta, H}_t}{ \left[ \kappa + H(d_{\theta}) \right]^2} \nabla \log \pi_{\theta}(a_t | s_t) \right] = \nabla \rho(\theta) + \frac{ \epsilon^J_{\theta} \left[ \kappa + H(d_{\theta}) \right] - J(\theta) \epsilon^H_{\theta}}{ \left[ \kappa + H(d_{\theta}) \right]^2}.
	\end{align*}
	\normalsize
\end{lemma}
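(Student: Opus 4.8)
The plan is to decouple the cost and entropy contributions, characterize the bias of each separately, and then recombine them through the quotient structure of $\nabla\rho(\theta)$. Since $J(\theta)$, $H(d_\theta)$, and $\kappa$ are deterministic constants with respect to the expectation over $(s_t,a_t)$, the left-hand side factors as
\begin{equation*}
\frac{[\kappa + H(d_\theta)]\, g^J(\theta) - J(\theta)\, g^H(\theta)}{[\kappa + H(d_\theta)]^2},
\end{equation*}
where $g^J(\theta) = \mathbb{E}_{\pi_\theta}[\delta^{\theta,J}_t \nabla\log\pi_\theta(a_t|s_t)]$ and $g^H(\theta) = \mathbb{E}_{\pi_\theta}[\delta^{\theta,H}_t \nabla\log\pi_\theta(a_t|s_t)]$ are the expected critic-based gradient directions. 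Comparing this with the expression \eqref{eqn:info_ratio_gradient} for $\nabla\rho(\theta)$, it suffices to prove the two vector identities $g^J(\theta) = \nabla J(\theta) + \epsilon^J_\theta$ and $g^H(\theta) = \nabla H(d_\theta) + \epsilon^H_\theta$; the stated result then follows by substitution and algebraic regrouping.

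The heart of the argument is establishing $g^J(\theta) = \nabla J(\theta) + \epsilon^J_\theta$, following the bias characterization of \cite[Lemma~4]{bhatnagar2009natural}. First I would condition on $(s_t,a_t)$ and use the baseline property $\sum_a \nabla\pi_\theta(a|s) = 0$ to discard the $-\phi(s_t)^\top\omega^J_\theta$ term, giving
\begin{equation*}
g^J(\theta) = \sum_s d_\theta(s)\sum_a \nabla\pi_\theta(a|s)\Big[c(s,a) - J(\theta) + \textstyle\sum_{s'}p(s'|s,a)\phi(s')^\top\omega^J_\theta\Big].
\end{equation*}
Next I would expand $\epsilon^J_\theta = \sum_s d_\theta(s)\nabla_\theta[\overline{v}^J_\theta(s) - \phi(s)^\top\omega^J_\theta]$ by the product rule, reading $\overline{v}^J_\theta(s)$ as the one-step backup conditioned on $s$, so that it depends on $\theta$ through $\pi_\theta$, through $J(\theta)$, and through the critic $\omega^J_\theta$. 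The terms differentiating $\pi_\theta$ reproduce $g^J(\theta)$ exactly; the terms differentiating $J(\theta)$ collapse to $-\nabla J(\theta)\sum_s d_\theta(s) = -\nabla J(\theta)$ by normalization; and the terms differentiating $\omega^J_\theta$ carry the factor $\sum_s d_\theta(s)\big[\sum_{s'}P_\theta(s'|s)\phi(s') - \phi(s)\big]$, which vanishes by the stationarity identity $d_\theta^\top P_\theta = d_\theta^\top$. This yields $\epsilon^J_\theta = g^J(\theta) - \nabla J(\theta)$, as desired. The same computation applied to the shadow MDP with reward $r(s) = -\log d_\theta(s)$ gives $g^H(\theta) = \nabla H(d_\theta) + \epsilon^H_\theta$, where I would invoke Theorem \ref{thm:entropy_gradient} to identify the shadow-MDP policy gradient (with the reward held fixed, per Lemma \ref{lemma:entropy_cross_entropy_gradient}) with the true entropy gradient $\nabla H(d_\theta)$.

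I expect the main obstacle to be the careful bookkeeping of the $\theta$-dependence of the TD(0) fixed points $\omega^J_\theta$ and $\omega^H_\theta$. In the definition of $g^J(\theta)$ the critic is frozen and only $\log\pi_\theta$ is differentiated, whereas in $\epsilon^J_\theta$ the full $\nabla_\theta$ acts on everything, including $\omega^J_\theta$; reconciling the two requires showing that the stray $\nabla_\theta\omega^J_\theta$ contribution is annihilated by the stationary distribution, which is exactly where the fixed-point characterizations \eqref{eqn:td_cost_fixed_point}--\eqref{eqn:td_entropy_fixed_point} and the stationarity of $d_\theta$ enter. A secondary subtlety is that the shadow reward itself depends on $\theta$, so for the entropy branch I must differentiate only through the occupancy measure and not through the reward, consistent with the cross-entropy argument underlying Theorem \ref{thm:entropy_gradient}. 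Once both bias identities are in hand, the final recombination through \eqref{eqn:info_ratio_gradient} is immediate.
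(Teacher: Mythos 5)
Your proposal is correct and takes essentially the same route as the paper: reduce the claim by linearity of expectation to the two bias identities $\mathbb{E}_{\pi_\theta}[\delta^{\theta,J}_t \nabla\log\pi_\theta(a_t|s_t)] = \nabla J(\theta) + \epsilon^J_\theta$ and $\mathbb{E}_{\pi_\theta}[\delta^{\theta,H}_t \nabla\log\pi_\theta(a_t|s_t)] = \nabla H(d_\theta) + \epsilon^H_\theta$, then recombine through the quotient formula \eqref{eqn:info_ratio_gradient}. The only difference is one of detail: the paper obtains these identities by directly citing \cite[Lemma~4]{bhatnagar2009natural} together with Theorem \ref{thm:entropy_gradient} for the shadow MDP, whereas you re-derive that bias characterization from scratch, and your product-rule/stationarity computation (including the cancellation of the $\nabla_\theta \omega^J_\theta$ terms via $d_\theta^\top P_\theta = d_\theta^\top$) is a correct unpacking of the cited result.
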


\begin{proof}
By \cite[Lemma~4]{bhatnagar2009natural} and Theorem \ref{thm:entropy_gradient},
%
% \small
% \begin{align*}
%     \mathbb{E}_{\pi_{\theta}} \left[ \delta^{\theta, J}_t \nabla \log \pi_{\theta}(a_t | s_t) \right] &= \nabla J(\theta) + \epsilon^J_{\theta}, \\
%     %
%     \mathbb{E}_{\pi_{\theta}} \left[ \delta^{\theta, H}_t \nabla \log \pi_{\theta}(a_t | s_t) \right] &= \nabla H(d_{\theta}) + \epsilon^H_{\theta}.
% \end{align*}
% \normalsize
%
$\mathbb{E}_{\pi_{\theta}} \left[ \delta^{\theta, J}_t \nabla \log \pi_{\theta}(a_t | s_t) \right] = \nabla J(\theta) + \epsilon^J_{\theta}$ and $\mathbb{E}_{\pi_{\theta}} \left[ \delta^{\theta, H}_t \nabla \log \pi_{\theta}(a_t | s_t) \right] = \nabla H(d_{\theta}) + \epsilon^H_{\theta}$.
This implies that
\small
\begin{align*}
    \mathbb{E}_{\pi_{\theta}} & \left[ \frac{ \delta^{\theta, J}_t \left[ \kappa + H(d_{\theta}) \right] - J(\theta) \delta^{\theta, H}_t}{ \left[ \kappa + H(d_{\theta}) \right]^2} \nabla \log \pi_{\theta}(a_t | s_t) \right] \\
    &= \frac{ \left[ \kappa + H(d_{\theta}) \right] \mathbb{E}_{\pi_{\theta}} \left[ \delta^{\theta, J}_t \nabla \log \pi_{\theta}(a_t | s_t) \right] - J(\theta) \mathbb{E}_{\pi_{\theta}} \left[ \delta^{\theta, H}_t \nabla \log \pi_{\theta}(a_t | s_t) \right]}{ \left[ \kappa + H(d_{\theta}) \right]^2 } \\
    &= \frac{ \left[ \kappa + H(d_{\theta}) \right] \left( \nabla J(\theta) + \epsilon^J_{\theta} \right) - J(\theta) \left( \nabla H(d_{\theta}) + \epsilon^H_{\theta} \right) }{ \left[ \kappa + H(d_{\theta}) \right]^2 } \\
    &= \nabla \rho(\theta) + \frac{ \epsilon^J_{\theta} \left[ \kappa + H(d_{\theta}) \right] - J(\theta) \epsilon^H_{\theta}}{ \left[ \kappa + H(d_{\theta}) \right]^2},
\end{align*}
\normalsize
which completes the proof.
\end{proof}

We now establish convergence of the actor step, and thus the actor-critic algorithm. Given any continuous function $f : \Theta \to \mathbb{R}^d$, define the function $\hat{\Gamma}(\cdot)$ using the projection operator $\Gamma$ to be $\hat{\Gamma}(f(\theta))=\lim_{\eta \rightarrow 0^+} \left [ \Gamma(\theta+\eta \cdot f(\theta))-\theta\right ] \big/ \eta$.
%
% \begin{equation} \label{eqn:gamma_projection_def}
%     \hat{\Gamma}(f(\theta))=\lim_{\eta \rightarrow 0^+} \left [ \Gamma(\theta+\eta \cdot f(\theta))-\theta\right ] \big/ \eta.
% \end{equation}
%
% $$\hat{\Gamma}(f(\theta))=\lim_{\eta \rightarrow 0^+} \left [ \Gamma(\theta+\eta \cdot f(\theta))-\theta\right ] \big/ \eta.$$
%
Define
\begin{equation} \label{eqn:error_def}
    \epsilon_{\theta} = \frac{ \epsilon^J_{\theta} \left[ \kappa + H(d_{\theta}) \right] - J(\theta) \epsilon^H_{\theta} }{ \left[ \kappa + H(d_{\theta}) \right]^2}.
\end{equation}
Consider the ODEs

\begin{minipage}{0.43\textwidth}
\begin{equation} \label{eqn:actor_ode}
\dot{\theta} = \hat{\Gamma} (\nabla \rho(\theta)), 
\end{equation}
\end{minipage}
\hfill
\begin{minipage}{0.43\textwidth}
\begin{equation} \label{eqn:biased_actor_ode}
\dot{\theta} = \hat{\Gamma} (\nabla \rho(\theta) + \epsilon_{\theta}). 
\end{equation}
\end{minipage}
\vspace{2mm}

\noindent Notice that, by the definition of $\hat{\Gamma}$, the right-hand side of \eqref{eqn:actor_ode} is simply $\Gamma(\nabla \rho(\theta))$ when there exists $\eta_0 > 0$ such that $\theta + \eta \nabla \rho(\theta) \in \Theta$, for all $\eta < \eta_0$. When such an $\eta_0$ does not exist, $\hat{\Gamma}(\nabla \rho(\theta))$ can be interpreted as the projected ODE $\dot{\theta} = \nabla \rho(\theta) + z(\theta)$, where $z(\theta)$ is the minimal force necessary to project $\theta$ back onto $\Theta$. Similar statements hold for \eqref{eqn:biased_actor_ode}. For further discussion of the definition of $\hat{\Gamma}$ and related results, see p.~191 of \cite{kushner1978stochastic}. For the projected ODE interpretation, see \S4.3 of \cite{kushner03}.

We now present the main result of this subsection, which establishes convergence of the actor-critic algorithm. Its proof follows that of Theorem~1 in \cite{bhatnagar2009natural}, with key modifications to accommodate complications arising from the fact that the objective to be minimized is a ratio; specifically, we ensure that: (i) the resulting noise terms are indeed asymptotically negligible, and (ii) the Lipschitz properties of the gradient $\nabla \rho(\theta)$ necessary for the ODE analysis are satisfied.
\begin{theorem}\label{thm:actor_conv}
	Let $\mathcal{Z}$ denote the set of asymptotically stable equilibria of the ODE \eqref{eqn:actor_ode}. Given any $\varepsilon > 0$, define $\mathcal{Z}^{\varepsilon} = \{ z \ | \ \inf_{z' \in \mathcal{Z}} \| z - z' \| \leq \varepsilon \}$. For any $\theta \in \Theta$, let $\varepsilon_{\theta}$ be defined as in \eqref{eqn:error_def}. Under Assumptions \ref{assum:actor_conditions}, \ref{assum:stepsizes}, and \ref{assum:critic_conditions}, given any $\varepsilon > 0$, there exists $\delta > 0$ such that, for $\{ \theta_t \}$  obtained from Algorithm \ref{alg:irac} with projection \eqref{eqn:irac_projected_policy_update}, if $\sup_t \| \epsilon_{\theta_t} \| < \delta$, then $\theta_t \to \mathcal{Z}^\varepsilon$ a.s. as $t \to \infty$.
\end{theorem}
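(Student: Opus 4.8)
The plan is to treat the coupled recursions \eqref{eqn:irac_mu_j_update}--\eqref{eqn:irac_h_critic_update} together with the projected actor update \eqref{eqn:irac_projected_policy_update} as a projected two-timescale stochastic approximation scheme in the sense of Borkar, adapting the average-reward actor-critic analysis of \cite{bhatnagar2009natural} and the ratio-optimization analysis of \cite{suttle2021reinforcement}. First I would exploit the timescale separation guaranteed by Assumption \ref{assum:stepsizes}: since $\beta_t / \alpha_t \to 0$, the critics evolve on the faster timescale and the actor on the slower one, so that from the critic's perspective $\theta_t$ is quasi-static while from the actor's perspective the critics are essentially equilibrated. Lemma \ref{lemma:critics} then applies along the fast timescale to give $\norm{(\mu^J_t, \mu^H_t, \omega^J_t, \omega^H_t) - (J(\theta_t), H(d_{\theta_t}), \omega^J_{\theta_t}, \omega^H_{\theta_t})} \to 0$ a.s., i.e.\ the critics track their TD(0) fixed points for the slowly drifting policy parameter.

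Next I would recast the actor update \eqref{eqn:irac_projected_policy_update} as a single projected recursion $\theta_{t+1} = \Gamma\bigl(\theta_t + \beta_t (g(\theta_t) + N_{t+1} + \xi_t)\bigr)$, where Lemma \ref{lemma:bias_pg} identifies the mean field $g(\theta)$ with (the negative of) $\nabla \rho(\theta) + \epsilon_{\theta}$ via its conditional-expectation identity, $N_{t+1}$ is a martingale-difference noise term, and $\xi_t$ is the error incurred because the critics are not exactly at their fixed points. The ergodicity in Assumption \ref{assum:actor_conditions} keeps $d_{\theta}(s)$ bounded away from $0$, so the shadow rewards $-\log d_{\theta}(s)$, the costs, and $\nabla \log \pi_{\theta}$ are all uniformly bounded on the compact set $\Theta$; this yields bounded conditional second moments for $N_{t+1}$ (which, with $\sum_t \beta_t^2 < \infty$, controls its contribution) and, combined with the fast-timescale convergence above, gives $\xi_t \to 0$ a.s.

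With these ingredients in hand, I would verify the hypotheses of the projected-ODE convergence theorem (the Kushner--Clark lemma / \cite{kushner03}, or the two-timescale statement in \cite{borkar2008stochastic}): continuity and boundedness of the mean field on compact $\Theta$, the step-size conditions of Assumption \ref{assum:stepsizes}, the martingale and vanishing-bias properties just established, and boundedness of the iterates enforced by the projection $\Gamma$ onto the compact $\Theta$. The conclusion of this step is that $\{\theta_t\}$ asymptotically tracks the biased projected ODE \eqref{eqn:biased_actor_ode} and converges a.s.\ to its set $\mc{Y}$ of asymptotically stable equilibria.

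The final and hardest step is to relate $\mc{Y}$ to the set $\mc{Z}$ of stable equilibria of the unbiased ODE \eqref{eqn:actor_ode}. Because the bias $\epsilon_{\theta}$ does \emph{not} vanish --- it is a persistent perturbation arising from linear function approximation --- one cannot simply appeal to convergence toward the true ODE; instead I would invoke a perturbation (upper-semicontinuity) result for asymptotically stable invariant sets under small changes in the driving vector field, namely Hirsch's lemma as used in the stochastic-approximation-with-bounded-errors literature \cite{borkar2008stochastic}. The two ODEs differ only through the term $\hat{\Gamma}(\nabla \rho(\theta) + \epsilon_{\theta}) - \hat{\Gamma}(\nabla \rho(\theta))$, whose magnitude is controlled by $\sup_t \norm{\epsilon_{\theta_t}} < \delta$; making the dependence of the attractor on the vector field explicit, one chooses $\delta$ small enough to force $\mc{Y} \subseteq \mc{Z}^{\varepsilon}$, which gives $\theta_t \to \mc{Z}^{\varepsilon}$ a.s. I expect this attractor-perturbation argument to be the main obstacle, since it requires quantifying, uniformly in $\theta$, how the asymptotically stable set deforms under the $O(\delta)$ perturbation; by contrast the martingale/bias decomposition of the second step, while technical, is routine once the uniform boundedness afforded by Assumption \ref{assum:actor_conditions} is in place.
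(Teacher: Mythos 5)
Your proposal is correct and follows essentially the same route as the paper: the same two-timescale decomposition of the actor update into the biased mean field from Lemma \ref{lemma:bias_pg}, a martingale-difference noise term, and an asymptotically negligible critic-tracking error handled via Lemma \ref{lemma:critics}, followed by showing the iterates track the biased ODE \eqref{eqn:biased_actor_ode} and then a perturbation argument relating its stable set to $\mathcal{Z}^{\varepsilon}$. The only cosmetic differences are that you invoke Hirsch's lemma explicitly for the final attractor-perturbation step, whereas the paper defers it to the corresponding argument in \cite{bhatnagar2009natural}, and the paper devotes most of its effort to verifying in detail that the mean field is Lipschitz (so that \eqref{eqn:biased_actor_ode} is well-posed), a point you list as a hypothesis to check rather than carry out.
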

\begin{proof}
% The main idea behind the proof is to show that the update scheme \eqref{eqn:irac_projected_policy_update} converges to a neighborhood of a stationary point by showing that it eventually tracks the ODE \eqref{eqn:actor_ode}.
% %
% % To do this, we first rewrite the update in \eqref{eqn:irac_projected_policy_update} as the sum of the true gradient direction and noise terms. We next argue that these noise terms are asymptotically negligible. We finally show that this noisy update scheme asymptotically tracks the biased ODE \eqref{eqn:biased_actor_ode}, which means that it converges to a neighborhood of the stationary points of \eqref{eqn:actor_ode}.
% %
% The proof follows that of Theorem~1 in \cite{bhatnagar2009natural}, with key modifications to accommodate complications arising from the fact that the objective to be minimized is a ratio; specifically, we ensure that: (i) the resulting noise terms are indeed asymptotically negligible, and (ii) the Lipschitz properties of the gradient $\nabla \rho(\theta)$ necessary for the ODE analysis are satisfied.
%
%
\newstuff{
Let $\mathcal{F}_t = \sigma( \theta_k, k \leq t )$ denote the $\sigma$-algebra generated by the $\theta$-iterates up to time $t$.
Define $\delta_t = \frac {\delta^J_t \left[ \kappa + \mu^H_t \right] - \mu^J_t \delta^H_t}{\left[ \kappa + \mu^H_t \right]^2}$ and $\delta^{\theta}_t = \frac{ \delta^{J, \theta}_t \left[ \kappa + H(d_{\theta}) \right] - J(\theta) \delta^{H, \theta}_t }{ \left[ \kappa + H(d_{\theta}) \right]^2}$.
%
% \begin{equation*}
%     \delta_t = \frac {\delta^J_t \left[ \kappa + \mu^H_t \right] - \mu^J_t \delta^H_t}{\left[ \kappa + \mu^H_t \right]^2},
%     \hspace{4mm}
%     \delta^{\theta}_t = \frac{ \delta^{J, \theta}_t \left[ \kappa + H(d_{\theta}) \right] - J(\theta) \delta^{H, \theta}_t }{ \left[ \kappa + H(d_{\theta}) \right]^2}.
% \end{equation*}
%
In addition, define the noise terms 
$M^{(1)}_t = \delta_t \nabla \log \pi_{\theta_t} (a_t | s_t) - \mathbb{E} \left[ \delta_t \nabla \log \pi_{\theta_t} (a_t | s_t) \ | \ \mathcal{F}_t \right]$ and $M^{(2)}_t = \mathbb{E} [ ( \delta_t - \delta^{\theta_t}_t ) \nabla \log \pi_{\theta_t} (a_t | s_t) \ | \ \mathcal{F}_t ]$.
%
% \begin{align*}
%     M^{(1)}_t &= \delta_t \nabla \log \pi_{\theta_t} (a_t | s_t) - \mathbb{E}_{\pi_{\theta_t}} \left[ \delta_t \nabla \log \pi_{\theta_t} (a_t | s_t) \ | \ \mathcal{F}_t \right], \\
%     %
%     &= \delta_t \nabla \log \pi_{\theta_t} (a_t | s_t) - \mathbb{E} \left[ \delta_t \nabla \log \pi_{\theta_t} (a_t | s_t) \ | \ \mathcal{F}_t \right], \\
%     %
%     M^{(2)}_t &= \mathbb{E}_{\pi_{\theta_t}} \left[ \left( \delta_t - \delta^{\theta_t}_t \right) \nabla \log \pi_{\theta_t} (a_t | s_t) \ | \ \mathcal{F}_t \right], \\
%     %
%     &= \mathbb{E} \left[ \left( \delta_t - \delta^{\theta_t}_t \right) \nabla \log \pi_{\theta_t} (a_t | s_t) \ | \ \mathcal{F}_t \right],
% \end{align*}
%
Finally, define the function $h(\theta_t) = \mathbb{E}_{\pi_{\theta_t}} [ \delta^{\theta_t}_t \nabla \log \pi_{\theta_t} (a_t | s_t) \ | \ \mathcal{F}_t ] = \mathbb{E}_{\pi_{\theta_t}} [ \delta^{\theta_t}_t \nabla \log \pi_{\theta_t} (a_t | s_t) ]$,
%
% \begin{align*}
%     h(\theta_t) = \mathbb{E}_{\pi_{\theta_t}} \left[ \delta^{\theta_t}_t \nabla \log \pi_{\theta_t} (a_t | s_t) \ | \ \mathcal{F}_t \right] = \mathbb{E}_{\pi_{\theta_t}} \left[ \delta^{\theta_t}_t \nabla \log \pi_{\theta_t} (a_t | s_t) \right],
% \end{align*}
%
which is the gradient expression from Lemma \ref{lemma:bias_pg}. Note that simultaneously taking an expectation with respect to $\pi_{\theta_t}$ and conditioning on $\mathcal{F}_t$ is redundant, so we can suppress one or the other in our notation without altering the meaning.
%
% In what follows we will typically condition on $\mathcal{F}_t$ when we discussing the stochastic processes $\{ M^{(1)}_t \}, \{ M^{(2)}_t \}$, whereas we will take expectations with respect to $\pi_{\theta_t}$ when discussing the function~$h$.

We can now rewrite the projected actor update \eqref{eqn:irac_projected_policy_update} as
\begin{align*}
    \theta_{t+1} = \Gamma \Big( \theta_t - \beta_t \delta_t \nabla \log \pi_{\theta_t} (a_t | s_t) \Big) = \Gamma \left( \theta_t - \beta_t \left[ h(\theta_t) + M^{(1)}_t + M^{(2)}_t \right] \right).
\end{align*}
We show that this update scheme asymptotically tracks the ODE \eqref{eqn:biased_actor_ode} a.s. by demonstrating that the noise terms $\{ M^{(1)}_t \}$ form an a.s. bounded martingale difference sequence, that the terms $\{ M^{(2)}_t \}$ are asymptotically negligible, and that $h$ is Lipschitz and thus the ODE is well-posed.

Since $\delta_t \rightarrow \delta^{\theta_t}_t$ a.s. by Lemma \ref{lemma:critics}, we have that $M^{(2)}_t \rightarrow 0$ a.s., so the noise terms $\{ M^{(2)}_t \}$ are indeed asymptotically negligible. Next, recall the tower property of conditional expectations: for any $\mathcal{F}$-measurable random variable $X$ and any sub-$\sigma$-algebras $\mathcal{G} \subset \mathcal{H} \subset \mathcal{F}$, we have $\mathbb{E} \left[ \mathbb{E} \left[ X | \mathcal{G} \right] | \mathcal{H} \right] = \mathbb{E} \left[ \mathbb{E} \left[ X | \mathcal{H} \right] | \mathcal{G} \right] = \mathbb{E} \left[ X | \mathcal{G} \right]$. Since $\mathcal{F}_t \subset \mathcal{F}_{t+1}$, for all $t \geq 0$, this implies that, for all $t \geq 0$,
\small
\begin{align*}
    \mathbb{E} \left[ M^{(1)}_{t+1} | \mathcal{F}_t \right] &= \mathbb{E} \left[ \delta_{t+1} \nabla \log \pi_{\theta_{t+1}} (a_{t+1} | s_{t+1}) - \mathbb{E} \left[ \delta_{t+1} \nabla \log \pi_{\theta_{t+1}} (a_{t+1} | s_{t+1}) \ | \ \mathcal{F}_{t+1} \right] \ | \ \mathcal{F}_t \right] \\
    &= \mathbb{E} \left[ \delta_{t+1} \nabla \log \pi_{\theta_{t+1}} (a_{t+1} | s_{t+1}) \ | \ \mathcal{F}_t \right] \\
    & \hspace{2cm} - \mathbb{E} \left[ \mathbb{E} \left[ \delta_{t+1} \nabla \log \pi_{\theta_{t+1}} (a_{t+1} | s_{t+1}) \ | \ \mathcal{F}_{t+1} \right] \ | \ \mathcal{F}_t  \right] \\
    &= \mathbb{E} \left[ \delta_{t+1} \nabla \log \pi_{\theta_{t+1}} (a_{t+1} | s_{t+1}) \ | \ \mathcal{F}_t \right] - \mathbb{E} \left[ \delta_{t+1} \nabla \log \pi_{\theta_{t+1}} (a_{t+1} | s_{t+1}) \ | \ \mathcal{F}_t \right] \\
    &= 0,
\end{align*}
\normalsize
so $\{ M^{(1)}_t \}$ is an $\mathcal{F}$-martingale difference sequence, where $\mathcal{F}$ is the filtration $\mathcal{F} = \{ \mathcal{F}_t \}$.

To see that $M^{(1)}_t$ is a.s. bounded, first notice that $\mu^H_0 > 0$, and $0 < \inf_t d_{\theta_t}(s_t) \leq \sup_t d_{\theta_t}(s_t) \leq 1$, so $\{ \mu^H_t \}$ is uniformly bounded both above and below away from zero. A similar argument applies to $\{ \mu^J_t \}$. Coupled with a.s. boundedness of $\{ \omega^J_t \}$ and $\{ \omega^H_t \}$, this implies that $\{ \delta_t \}$ and thus $\{ M^{(1)}_t \}$ are a.s. bounded. As discussed in \S\S2.1-2.2 of \cite{borkar2008stochastic}, the facts that $\{ M^{(1)}_t \}$ is a.s. bounded martingale difference noise and $\{ M^{(2)}_t \}$ is asymptotically negligible ensure that, so long as the right-hand side of the ODE \eqref{eqn:biased_actor_ode} is Lipschitz, the iterates generated by \eqref{eqn:irac_projected_policy_update} will asymptotically track~it.

To see that $h$ is Lipschitz in $\theta$, first rewrite
\small
\begin{align*}
    h(\theta_t) = \frac{1}{\left[ \kappa + H(d_{\theta_t}) \right]^2} \Big( & \left[ \kappa + H(d_{\theta_t}) \right] \mathbb{E}_{\pi_{\theta_t}} \left[ \delta^{J, \theta_t}_t \nabla \log \pi_{\theta_t} (a_t | s_t) \right] \\
    &- J(\theta_t) \mathbb{E}_{\pi_{\theta_t}} \left[ \delta^{H, \theta_t}_t \nabla \log \pi_{\theta_t} (a_t | s_t) \right] \Big).
\end{align*}
\normalsize
We verify that each of the component terms in this expression is Lipschitz and bounded. Recall that a function is Lipschitz if it is continuously differentiable with bounded derivatives. 
First, as discussed in the proof of \cite{bhatnagar2009natural}, Lem.~5, $J(\theta)$, $d_{\theta}(s)$, $\nabla \pi_{\theta}(a | s)$, and $\Phi \omega^J_{\theta}$, are all Lipschitz and bounded, for all $s \in \mathcal{S}, a \in \mathcal{A}$. Thus $J(\theta)$ and $\mathbb{E}_{\pi_{\theta}} \left[ \delta^{J, \theta}_t \nabla \log \pi_{\theta} (a_t | s_t) \right]$ are Lipschitz and bounded on $\Theta$. The remaining terms we need to inspect are $\kappa + H(d_{\theta})$, $1 / \left[ \kappa + H(d_{\theta}) \right]^2$, and $\mathbb{E}_{\pi_{\theta}} \left[ \delta^{H, \theta}_t \nabla \log \pi_{\theta} (a_t | s_t) \right]$.

Theorem \ref{thm:entropy_gradient} implies $\nabla H(d_{\theta})$ is continuous and bounded. To see this, notice that
\small
\begin{align*}
    \nabla H(d_{\theta}) &= \mathbb{E}_{\pi_{\theta}} \Big[ \left( - \log d_{\theta}(s) - H(d_{\theta}) \right) \nabla \log \pi_{\theta}(a | s) \Big] \\
    &= \sum_s d_{\theta}(s) \sum_a \pi_{\theta}(a | s) \Big[ \left( - \log d_{\theta}(s) - H(d_{\theta}) \right) \nabla \log \pi_{\theta}(a | s) \Big] \\
    &= \sum_s d_{\theta}(s) \sum_a \nabla \pi_{\theta}(a | s) \Big[ \left( - \log d_{\theta}(s) - H(d_{\theta}) \right) \Big].
\end{align*}
\normalsize

By the ergodicity condition of Assumption \ref{assum:actor_conditions}, we have that $d_{\theta}(s) > 0$, for all $s \in \mathcal{S}$, which means that the $- \log d_{\theta}(s)$ is always defined. Since $d_{\theta}$ is continuous, we furthermore have that $- \log d_{\theta}(s)$ and $H(d_{\theta})$ are both continuous. The gradient $\nabla \pi_{\theta}(a | s)$ is continuous by Assumption \ref{assum:actor_conditions}. Finally, since $\Theta$ is a compact set, we know that $d_{\theta}(s), \nabla \pi_{\theta}(a | s), - \log d_{\theta}(s),$ and $H(d_{\theta})$ remain bounded, implying that $\nabla H(d_{\theta})$ is continuous and bounded, since it is formed by taking products and sums of continuous, bounded functions. $H(d_{\theta})$ is thus Lipschitz and bounded, as is the term $\kappa + H(d_{\theta})$, for any constant $\kappa \geq 0$. Furthermore, since $d_{\theta}(s) > 0$, for all $s \in \mathcal{S}$, and since $\Theta$ is compact, there exists some constant $B$ such that $\inf_{\theta \in \Theta} H(d_{\theta}) = B > 0$. This means that $1 / \left[ \kappa + H(d_{\theta}) \right]^2 \leq 1 / \left[ \kappa + B \right]^2$, for all $\theta \in \Theta$. The term $1 / \left[ \kappa + H(d_{\theta}) \right]^2$ is therefore Lipschitz and bounded, as well.

Finally, notice that $\mathbb{E}_{\pi_{\theta}} \left[ \delta^{H, \theta}_t \nabla \log \pi_{\theta} (a_t | s_t) \right] =$
\small
\begin{align}
    &\mathbb{E}_{\pi_{\theta}} \left[ \left( - \log d_{\theta}(s_t) - H(d_{\theta}) + \phi(s_{t+1})^{\top} \omega^H_{\theta} - \phi(s_t)^{\top} \omega^H_{\theta} \right) \nabla \log \pi_{\theta}(a_t | s_t) \right] \nonumber \\
    &= \sum_s d_{\theta}(s) \sum_a \nabla \pi_{\theta}(a | s) \left[ - \log d_{\theta}(s) - H(d_{\theta}) - \phi(s)^{\top} \omega^H_{\theta}  + \sum_{s'} p(s' | s, a) \phi(s')^{\top} \omega^H_{\theta} \right]. \label{eqn:final_actor_term}
    \normalsize
\end{align}
\normalsize
As discussed above, $d_{\theta}(s), \pi_{\theta}(a | s), - \log d_{\theta}(s),$ and $H(d_{\theta})$ are all continuously differentiable with bounded derivatives on $\Theta$. Furthermore, given that $H(d_{\theta})$ is Lipschitz and bounded both above and away from zero, $\Phi \omega^H_{\theta}$ is Lipschitz and bounded for reasons analogous to those for $\Phi \omega^J_{\theta}$. Expression \eqref{eqn:final_actor_term} is thus Lipschitz and bounded. 

By the foregoing, $h$ is Lipschitz, since it is formed by taking products and sums of Lipschitz, bounded functions.
The ODE \eqref{eqn:biased_actor_ode} is therefore well-posed, and its equilibrium set $\mathcal{Z}$ is well-defined. A similar argument to the one just presented can be used to show that \eqref{eqn:actor_ode} with equilibrium set $\mathcal{Y}$ is also well-posed. The remainder of the arguments in the proof of Lem.~5 in \cite{bhatnagar2009natural} now apply to prove that $\theta_t \rightarrow \mathcal{Y}$ a.s. as $t \rightarrow \infty$, and that, as $\sup_{\theta} \| \epsilon_{\theta} \| \rightarrow 0$, the trajectories of \eqref{eqn:biased_actor_ode} converge to those of \eqref{eqn:actor_ode}. In particular, this implies that, for a given $\varepsilon > 0$, there exists a $\delta > 0$ such that, if $\sup_{\theta} \| \epsilon_{\theta} \| < \delta$, then $\theta_t \rightarrow \mathcal{Z}^{\varepsilon}$ a.s. as $t \rightarrow \infty$.
}
\end{proof}

Combined with Theorem \ref{thm:stationarity_implies_global_optimality_info_ratio}, Theorem \ref{thm:actor_conv} establishes almost sure convergence of IDAC to a neighborhood of a \textit{global} optimum of the OIR minimization problem \eqref{stat_implies_glob:1}.
% when linear function approximation is used for the critic.
Note that if the linear approximation and features are expressive enough, then
% the neighborhood radius
$\varepsilon$ will be small or even zero.
\section{Experiments} \label{sec:experiments}

% The experimental results presented in this section demonstrate that OIR policy gradient methods avoid spurious behavior, while state-of-the-art methods can become overconfident and settle into suboptimality. In particular, 

\newstuff{
The experimental results presented in this section demonstrate that, when the reward signal is sparse, OIR methods can lead to improved performance when compared with vanilla RL methods. These results provide empirical support to the study of the OIR as an important and useful RL objective, and are meant to be viewed as auxiliary to the purely theoretical results presented above.
To demonstrate the advantages of OIR policy gradient methods over vanilla methods in such settings, we conducted two different sets of experiments on gridworld environments of varying complexity. In the first set of experiments, discussed in \S\ref{subsec:tabular_experiments}, we compared tabular implementations of IDAC and vanilla AC on three relatively small gridworlds. For the second set of experiments, which are discussed in \S\ref{subsec:nn_experiments}, we compared a neural network version of IDAC with the A2C, DQN, and PPO algorithms on a larger, more complex gridworld. All the environments that we considered emit sparse reward signals in the sense that the majority of costs convey no information about the central task of finding the goal state. Specific details regarding the environments and algorithm implementations are provided in \S\ref{subsec:environments} and \S\ref{subsec:implementation}, respectively. On all four gridworlds, OIR policy gradient methods outperform the vanilla RL methods that we tested. We interpret this as illustrating that algorithms minimizing the OIR fall back to maximizing the useful alternative objective $H(d_{\theta})$, representing coverage of the state space, when reward signals are sparse. This contrasts with vanilla methods, which focus on a single objective, $J(\theta)$, and can consequently converge in the sparse-reward setting to suboptimal policies before the state space has been sufficiently explored.

% \begin{figure}[H]
% \centering
% \begin{minipage}{.5\textwidth}
%     \centering
%     \includegraphics[width=\linewidth]{}
%     \captionsetup{width=0.95\linewidth}
%     \captionof{figure}{Comparison of neural network IDAC with common deep RL methods on the sparse-reward \texttt{LargeGridWorld}.}
%     \label{fig:large_gridworld}
% \end{minipage}%
% \begin{minipage}{.5\textwidth}
%     \centering
%     \includegraphics[width=\linewidth]{}
%     \captionsetup{width=0.95\linewidth}
%     \captionof{figure}{Comparison of IDAC average cost during training on \texttt{LargeGridWorld} across 48 random seeds.}
%     \label{fig:large_gridworld_infoac_seeds}
% \end{minipage}%=
% \end{figure}

\subsection{Environments} \label{subsec:environments}

Each gridworld is composed of an $n \times m$ grid of states, $\mathcal{S} = \{0, \ldots, n - 1\} \times \{0, \ldots, m - 1\}$, along with a designated start state $s_{\text{start}}$, designated goal state $s_{\text{goal}}$, and a set $B \subset \mathcal{S}$ of blocked states which the agent is not permitted to enter. Episodes are of fixed length $K$, and the agent begins each episode in state $s_{\text{start}}$. In a given state $s = (i, j)$, the agent chooses an action $a \in \{ \text{stay}, \text{ up}, \text{ down}, \text{ left}, \text{ right} \}$. The agent then attempts to move in the direction corresponding to the action selected: if the selected action would move the agent off the grid or into a blocked state, the agent remains in $s$; otherwise, the agent moves into (or remains in) the state corresponding to the action selected. For example, if $a = \text{up}$ is chosen, the agent attempts to move to state $s' = (i, j - 1)$. If $s'$ is off the grid (i.e. $j - 1 < 0$) or $s' \in B$, the agent remains in $s$. Otherwise, the agent transitions to $s'$. Finally, let $\mathcal{A}(s)$ denote the set of all actions at $s$ that do not lead off the grid or into a blocked state; the cost function is then given by:
\begin{equation*}
    c(s, a) =
    \begin{cases*}
        c_{\text{goal}} & if $s = s_{\text{goal}}$ and $a \in \mathcal{A}(s)$, \\
        c_{\text{allowed}} & if $s \neq s_{\text{goal}}$ and $a \in \mathcal{A}(s)$, \\
        c_{\text{blocked}} & if $a \notin \mathcal{A}(s)$,
    \end{cases*}
\end{equation*}
where $0 < c_{\text{goal}} < c_{\text{allowed}} < c_{\text{blocked}}$. A policy minimizing $J(\theta)$ will move as quickly as possible to $s_{\text{goal}}$ while always choosing actions within $\mathcal{A}(s)$. Because of this, when a problem is small enough that the agent can reach the goal state quickly and remain in it for most of the episode, the optimal average cost should be close to 1. A policy minimizing $\rho(\theta)$, on the other hand, will seek to balance minimizing $J(\theta)$ with maximizing $H(d_{\theta})$, while avoiding actions $a \notin \mathcal{A}(s)$.

\subsection{Implementation} \label{subsec:implementation}

For the first set of experiments, we implemented a tabular version of Algorithm \ref{alg:irac}. In order to have a baseline to compare against, we also implemented classic average-cost actor-critic, vanilla AC.
For both algorithms, we used tabular softmax policies:
$$\pi_{\theta}(a_i | s) = \frac{\exp(\theta^T \psi(s, a_i))}{\sum_j \exp(\theta^T \psi(s, a_j))},$$
where $\theta \in \mathbb{R}^{|\mathcal{S}| \cdot |\mathcal{A}|}$ and $\psi : \mathcal{S} \times \mathcal{A} \rightarrow \mathbb{R}^{|\mathcal{S}| \cdot |\mathcal{A}|}$ maps each state-action pair to a unique standard basis vector $e_k \in \mathbb{R}^{|\mathcal{S}| \cdot |\mathcal{A}|}$, where $e_k$ has a $1$ in its $k$th entry and $0$ everywhere else. We similarly used tabular representation for the value functions:
$$v_{\omega}(s) = \omega^T \phi(s),$$
where $\omega \in \mathbb{R}^{|\mathcal{S}|}$ and $\phi : \mathcal{S} \rightarrow \mathbb{R}^{|\mathcal{S}|}$ maps each state $s_i$ to a unique standard basis vector $e_i$.

For the second set of experiments, we implemented IDAC with a categorical policy using two-layer, fully connected neural networks for both the policy and value function approximators, and we compared against the Stable Baselines 3 \cite{raffin2019stable} implementations of A2C, DQN, and PPO with two-layer, fully connected neural networks for all policies and value function approximators.

\subsection{Tabular Experiment Results} \label{subsec:tabular_experiments}

Figures \ref{fig:tabular_gridworlds} shows comparisons of IDAC and vanilla AC on three \texttt{GridWorld} environments with $c_{\text{goal}} = 1, c_{\text{allowed}} = 10,$ and $c_{\text{blocked}} = 100$. To generate these results, 15 instances of each algorithm were run on the corresponding environment, the average cost and entropy were computed for each episode, and the sample means and 95\% confidence intervals for the cost, entropy, and corresponding OIR over the 15 runs were used to generate the learning curves. As the figures show, the OIR algorithm outperforms the vanilla algorithm in every case. In particular, IDAC consistently explores the state space, leading to eventual discovery of the goal state, while vanilla AC quickly becomes deterministic and converges to a suboptimal policy. This supports our interpretation that, in the sparse-reward setting, algorithms minimizing the OIR fall back on maximizing a useful secondary objective, $H(d_{\theta})$; this can provide an advantage over vanilla methods focused solely on minimizing $J(\theta)$.

On all three \texttt{GridWorld} environments, for both the IDAC and vanilla AC algorithms we used actor learning rate $\alpha = 1.8$, critic learning rate $\beta = 2.0$, and geometric mixing rate $\tau = 0.1$. For IDAC, we set $\kappa = 1.0$. We chose these parameters through trial and error. For \texttt{GridWorld1}, we used episode length 200 over 2500 episodes. For \texttt{GridWorld2}, we used episode length 200 over 3000 episodes. Finally, for \texttt{GridWorld3}, we used episode length 300 over 3000 episodes. To facilitate learning, we found it helpful to increase the episode length and number of episodes as the complexity of the problem increased. Figure \ref{fig:tabular_gridworlds} presents the average cost, entropy, and OIR (with $\kappa = 1.0$) for both algorithms as training proceeds. Note that we did not provide optimal benchmarks for these problems using the concave program solver. Since the solver finds the optimal state-action occupancy measure based on the assumption that it is independent of the initial start state, the designated start states inherent in the \texttt{GridWorld} environments causes the solver results to be inaccurate.

\begin{figure} % [ht]
    \centering
    \includegraphics[width=\linewidth]{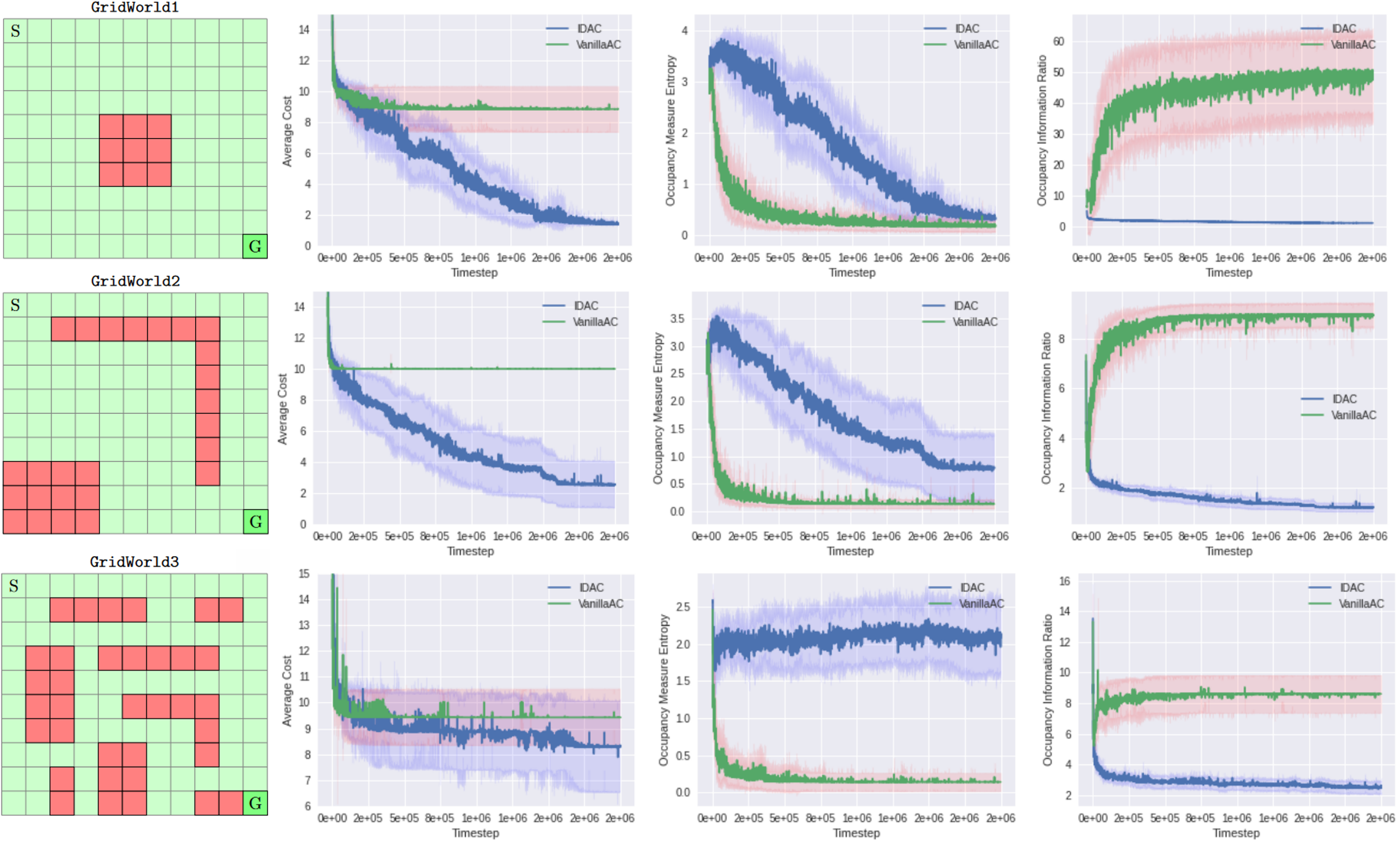}
    % \captionsetup{width=0.6\linewidth}
    \captionof{figure}{Comparison of tabular IDAC and vanilla actor-critic on three different gridworlds. Plots give means and 95\% confidence intervals. Optimal average cost is 1.0.}
    \label{fig:tabular_gridworlds}
\end{figure}

As can be seen from the average cost in all three figures, both algorithms quickly learn to avoid actions moving off the grid or into blocked states, decreasing to an average cost of around 10. On all three environments, vanilla AC gets stuck near 10 for the remainder of training. This corresponds to taking allowed actions, but not attaining the goal state. The IDAC algorithm, on the other hand, clearly spends an increasing amount of time in the goal state, since its cost decreases well below 10. Next, the evolution of the state occupancy measure entropy achieved by the two algorithms during training provides some insight into why vanilla AC fails while IDAC succeeds. Vanilla AC converges fairly quickly to a policy visiting only a small subset of the available states, reflecting overconfidence in its past experience. This is why vanilla AC struggles on these environments, since its policy becomes deterministic before the state space has been sufficiently explored. IDAC, in contrast, maintains policies with relatively high state occupancy measure entropy early on, only decreasing as the algorithm seeks to strike the right balance between cost and entropy. Finally, in all cases IDAC makes clear progress minimizing the OIR, while vanilla AC consistently increases it.

\subsection{Neural Network Experiment Results} \label{subsec:nn_experiments}

% \begin{wrapfigure}{r}{0.5\textwidth}
% \vspace{-0.7in}
% \centering
%     \includegraphics[width=0.48\textwidth]{images/infoac_comparison.png}
%     \captionsetup{width=0.45\textwidth}
%     \captionof{figure}{Comparison of IDAC average cost during training on \texttt{LargeGridWorld} across 48 random seeds.}
%     \label{fig:large_gridworld_infoac_seeds}
% \end{wrapfigure}

% \begin{wrapfigure}{r}{0.5\textwidth}
% % \begin{figure}[ht]
% \vspace{-0.5cm}
% \begin{center}
% \centerline{\includegraphics[width=0.5\textwidth]{images/sb3_comparison.png}}
% \caption{Comparison of neural network IDAC with common deep RL methods on the sparse-reward \texttt{LargeGridWorld}. Plots give means and 95\% confidence intervals. Optimal average cost is 0.1. Training took place over $1\mathrm{e}{+6}$ timesteps; no further improvement occurred beyond timestep $1.2\mathrm{e}{+5}$.}
% \label{fig:large_gridworld}
% \end{center}
% \vspace{-0.5cm}
% % \end{figure}
% \end{wrapfigure}

\begin{figure} %[ht]
    \centering
    \includegraphics[width=\linewidth]{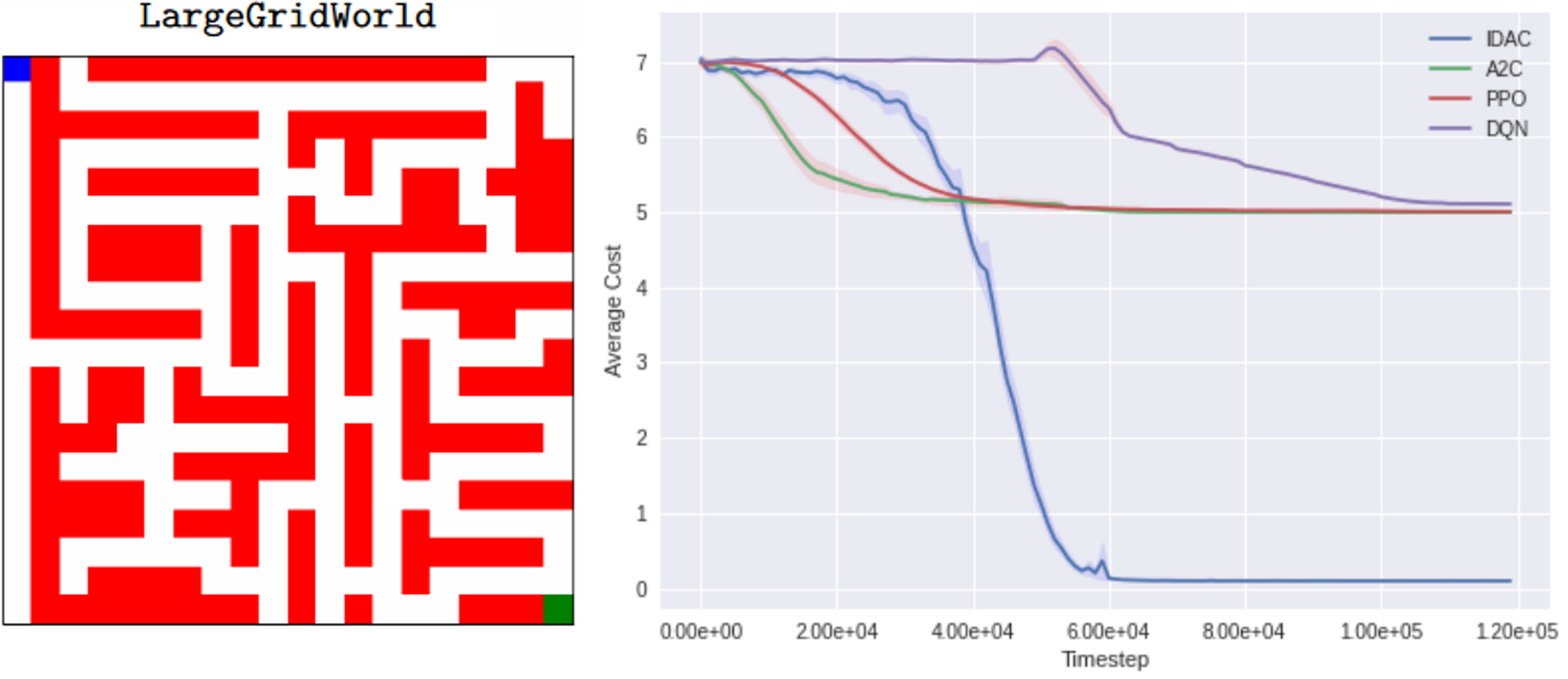}
    % \captionsetup{width=0.6\linewidth}
    \captionof{figure}{Comparison of neural network IDAC with common deep RL methods. Plot gives means and 95\% confidence intervals. Optimal average cost is 0.1. Training took place over $1\mathrm{e}{+6}$ timesteps; no further improvement occurred beyond timestep $1.2\mathrm{e}{+5}$.}
    \label{fig:large_gridworld}
\end{figure}

Figure \ref{fig:large_gridworld} illustrates the performance of neural IDAC and A2C, DQN, and PPO on \texttt{LargeGridWorld} with $c_{\text{goal}} = 0.1,$ $c_{\text{allowed}} = 5,$ and $c_{\text{blocked}} = 10$. To generate the data for these figures, we first trained 48 instances of neural IDAC with different random seeds. We next trained 15 instances of each of the A2C, DQN, and PPO algorithms on the environment. For each algorithm, the average cost was computed for each episode, and the sample means and 95\% confidence intervals were used to create the learning curves. As the figure illustrates, IDAC outperformed all three. Furthermore, none of A2C, DQN, and PPO found the goal state after $1.2\mathrm{e}{+5}$ timesteps.

Hyperparameters $\alpha = 0.0001$, $\beta = 0.0002$, $\tau = 0.1$, and $\kappa = 0.1$ for neural IDAC were selected through trial and error. After finding that increasing the width of the layers improved performance, we used 512 hidden units for each layer in both the policy and value functions. After experimenting with a range of different parameters and detecting no noticeable difference in performance, Stable Baselines' default parameters for A2C, DQN, and PPO were used. This included learning rates $0.0007$ for A2C, $0.0003$ for PPO, and $0.0001$ for DQN, as well as 64-width layers for all networks.

As in the tabular experiments, all algorithms quickly learn to avoid blocked actions. In the case of A2C and PPO, this leads to an average cost of exactly $5$, while for DQN the cost remains slightly above $5$ due to exploration noise lower bounded by $0.05$. Though the optimal cost is $0.1$, once they have converged to these values, they remain there for the remainder of training. Once again, the combination of sparse reward signals and overconfidence in past experience likely caused this premature convergence. Meanwhile, since neural IDAC is minimizing $\rho(\theta)$ instead of $J(\theta)$, it swiftly locates the goal state and finds an optimal policy with average cost $0.1$. This illustrates that, in sparse-reward environments, OIR-based policy gradient methods can lead to improved performance over vanilla techniques.
}

\section{Conclusion}

% In this paper we have addressed the exploration/exploitation trade-off in reinforcement learning by developing policy gradient methods for a new RL objective, the OIR.
%
In this paper we have developed policy gradient methods for a new RL objective, the OIR.
En route, we have elaborated a rich theory underlying these methods, including: a concave programming reformulation of the OIR optimization problem with links to the powerful linear programming theory for MDPs; policy gradient theorems for the OIR setting; and both asymptotic and non-asymptotic convergence theory with global optimality guarantees \newstuff{under appropriate assumptions}.
%
% \newstuff{We have furthermore presented empirical results that both validate the theory and indicate promising empirical performance when compared with state-of-the-art methods on sparse-reward problems.}
%
\newstuff{We have furthermore presented empirical results that indicate promising performance compared with state-of-the-art methods on sparse-reward problems.}
Interesting directions for future work include extensions to more general classes of ratio optimization problems, development of variants of the IDAC algorithm for continuous spaces using suitable density estimation techniques,
\newstuff{exploration of whether the OIR enables faster-than-linear non-asymptotic rate analyses,}
and thorough empirical evaluation of deep RL variants of IDAC on a range of benchmark problems.

%\section*{Disclaimer} This paper was prepared for informational purposes in part by the Artificial Intelligence Research group of JPMorgan Chase \& Co. and its affiliates (“JP Morgan”), and is not a product of the Research Department of JP Morgan. JP Morgan makes no representation and warranty whatsoever and disclaims all liability, for the completeness, accuracy or reliability of the information contained herein. This document is not intended as investment research or investment advice, or a recommendation, offer or solicitation for the purchase or sale of any security, financial instrument, financial product or service, or to be used in any way for evaluating the merits of participating in any transaction, and shall not constitute a solicitation under any jurisdiction or to any person, if such solicitation under such jurisdiction or to such person would be unlawful.

\bibliographystyle{siamplain}
\bibliography{refs,references}

\begin{thebibliography}{10}

\bibitem{Agarwal_Kakade_Lee_Mahajan_2020}
{\sc A.~Agarwal, S.~M. Kakade, J.~D. Lee, and G.~Mahajan}, {\em Optimality and approximation with policy gradient methods in {M}arkov decision processes}, in Conference on Learning Theory, PMLR, 2020, pp.~64--66.

\bibitem{avriel2010generalized}
{\sc M.~Avriel, W.~E. Diewert, S.~Schaible, and I.~Zang}, {\em Generalized Concavity}, SIAM, 2010.

\bibitem{bedi2021sample}
{\sc A.~S. Bedi, A.~Parayil, J.~Zhang, M.~Wang, and A.~Koppel}, {\em On the sample complexity and metastability of heavy-tailed policy search in continuous control}, arXiv preprint arXiv:2106.08414,  (2021).

\bibitem{bhandari2019global}
{\sc J.~Bhandari and D.~Russo}, {\em Global optimality guarantees for policy gradient methods}, arXiv preprint arXiv:1906.01786,  (2019).

\bibitem{bhatnagar2009natural}
{\sc S.~Bhatnagar, R.~Sutton, M.~Ghavamzadeh, and M.~Lee}, {\em Natural actor-critic algorithms}, Automatica, 45 (2009), pp.~2471--2482.

\bibitem{borkar2008stochastic}
{\sc V.~S. Borkar}, {\em Stochastic Approximation: \relax{A} Dynamical Systems Viewpoint}, Cambridge University Press, 2008.

\bibitem{boyd2004convex}
{\sc S.~Boyd and L.~Vandenberghe}, {\em Convex Optimization}, Cambridge University Press, 2004.

\bibitem{gray2011entropy}
{\sc R.~M. Gray}, {\em Entropy and Information Theory}, Springer Science \& Business Media, 2011.

\bibitem{haarnoja2018soft}
{\sc T.~Haarnoja, A.~Zhou, P.~Abbeel, and S.~Levine}, {\em Soft actor-critic: Off-policy maximum entropy deep reinforcement learning with a stochastic actor}, in International Conference on Machine Learning, PMLR, 2018, pp.~1861--1870.

\bibitem{hazan2019provably}
{\sc E.~Hazan, S.~Kakade, K.~Singh, and A.~Van~Soest}, {\em Provably efficient maximum entropy exploration}, in International Conference on Machine Learning, PMLR, 2019, pp.~2681--2691.

\bibitem{karmarkar1984new}
{\sc N.~Karmarkar}, {\em A new polynomial-time algorithm for linear programming}, in Proceedings of the sixteenth annual ACM symposium on Theory of computing, 1984, pp.~302--311.

\bibitem{khachiyan1979polynomial}
{\sc L.~G. Khachiyan}, {\em A polynomial algorithm in linear programming}, in Doklady Akademii Nauk, vol.~244, Russian Academy of Sciences, 1979, pp.~1093--1096.

\bibitem{kim2021imitation}
{\sc K.~Kim, A.~Jindal, Y.~Song, J.~Song, Y.~Sui, and S.~Ermon}, {\em Imitation with neural density models}, Advances in Neural Information Processing Systems, 34 (2021), pp.~5360--5372.

\bibitem{kirschner2020information}
{\sc J.~Kirschner, T.~Lattimore, and A.~Krause}, {\em Information directed sampling for linear partial monitoring}, in Conference on Learning Theory, PMLR, 2020, pp.~2328--2369.

\bibitem{kirschner2021asymptotically}
{\sc J.~Kirschner, T.~Lattimore, C.~Vernade, and C.~Szepesv{\'a}ri}, {\em Asymptotically optimal information-directed sampling}, in Conference on Learning Theory, PMLR, 2021, pp.~2777--2821.

\bibitem{konda02}
{\sc V.~Konda}, {\em Actor-Critic Algorithms}, PhD thesis, MIT, 2002.

\bibitem{kushner1978stochastic}
{\sc H.~J. Kushner and D.~S. Clark}, {\em Stochastic Approximation Methods for Constrained and Unconstrained Systems}, Springer Science \& Business Media, 1978.

\bibitem{kushner03}
{\sc H.~J. Kushner and G.~G. Yin}, {\em Stochastic Approximation and Recursive Algorithms and Applications}, Stochastic Modelling and Applied Probability, Springer-Verlag New York, 2003.

\bibitem{lee2019efficient}
{\sc L.~Lee, B.~Eysenbach, E.~Parisotto, E.~Xing, S.~Levine, and R.~Salakhutdinov}, {\em Efficient exploration via state marginal matching}, arXiv preprint arXiv:1906.05274,  (2019).

\bibitem{lillicrap2015continuous}
{\sc T.~P. Lillicrap, J.~J. Hunt, A.~Pritzel, N.~Heess, T.~Erez, Y.~Tassa, D.~Silver, and D.~Wierstra}, {\em Continuous control with deep reinforcement learning}, arXiv preprint arXiv:1509.02971,  (2015).

\bibitem{liu2021behavior}
{\sc H.~Liu and P.~Abbeel}, {\em Behavior from the void: Unsupervised active pre-training}, Advances in Neural Information Processing Systems, 34 (2021), pp.~18459--18473.

\bibitem{lu2023reinforcement}
{\sc X.~Lu, B.~Van~Roy, V.~Dwaracherla, M.~Ibrahimi, I.~Osband, Z.~Wen, et~al.}, {\em Reinforcement learning, bit by bit}, Foundations and Trends{\textregistered} in Machine Learning, 16 (2023), pp.~733--865.

\bibitem{mei2020global}
{\sc J.~Mei, C.~Xiao, C.~Szepesv{\'a}ri, and D.~Schuurmans}, {\em On the global convergence rates of softmax policy gradient methods}, in International Conference on Machine Learning, PMLR, 2020, pp.~6820--6829.

\bibitem{mutti2021task}
{\sc M.~Mutti, L.~Pratissoli, and M.~Restelli}, {\em Task-agnostic exploration via policy gradient of a non-parametric state entropy estimate}, in Proceedings of the AAAI Conference on Artificial Intelligence, vol.~35, 2021, pp.~9028--9036.

\bibitem{nesterov2003introductory}
{\sc Y.~Nesterov}, {\em Introductory Lectures on Convex Optimization: A Basic Course}, Springer Science \& Business Media, 2003.

\bibitem{nikolov2018information}
{\sc N.~Nikolov, J.~Kirschner, F.~Berkenkamp, and A.~Krause}, {\em Information-directed exploration for deep reinforcement learning}, arXiv preprint arXiv:1812.07544,  (2018).

\bibitem{Puterman_2014}
{\sc M.~L. Puterman}, {\em Markov Decision Processes: Discrete Stochastic Dynamic Programming}, John Wiley \& Sons, 2014.

\bibitem{raffin2019stable}
{\sc A.~Raffin, A.~Hill, M.~Ernestus, A.~Gleave, A.~Kanervisto, and N.~Dormann}, {\em Stable baselines3}, GitHub repository,  (2019).

\bibitem{russo2014learning}
{\sc D.~Russo and B.~Van~Roy}, {\em Learning to optimize via information-directed sampling}, Advances in Neural Information Processing Systems, 27 (2014), pp.~1583--1591.

\bibitem{russo2016information}
{\sc D.~Russo and B.~Van~Roy}, {\em An information-theoretic analysis of {T}hompson sampling}, The Journal of Machine Learning Research, 17 (2016), pp.~2442--2471.

\bibitem{schaible1976fractional}
{\sc S.~Schaible}, {\em Fractional programming. {I}, duality}, Management Science, 22 (1976), pp.~858--867.

\bibitem{schulman2017proximal}
{\sc J.~Schulman, F.~Wolski, P.~Dhariwal, A.~Radford, and O.~Klimov}, {\em Proximal policy optimization algorithms}, arXiv preprint arXiv:1707.06347,  (2017).

\bibitem{suttle2021reinforcement}
{\sc W.~A. Suttle, K.~Zhang, Z.~Yang, D.~Kraemer, and J.~Liu}, {\em Reinforcement learning for cost-aware {M}arkov decision processes}, in International Conference on Machine Learning, PMLR, 2021, pp.~9989--9999.

\bibitem{sutton2018reinforcement}
{\sc R.~S. Sutton and A.~G. Barto}, {\em Reinforcement Learning: An Introduction}, MIT Press, 2018.

\bibitem{sutton1999policy}
{\sc R.~S. Sutton, D.~A. McAllester, S.~P. Singh, and Y.~Mansour}, {\em Policy gradient methods for reinforcement learning with function approximation.}, Advances in Neural Information Processing Systems, 99 (1999), pp.~1057--1063.

\bibitem{williams1992simple}
{\sc R.~J. Williams}, {\em Simple statistical gradient-following algorithms for connectionist reinforcement learning}, Machine Learning, 8 (1992), pp.~229--256.

\bibitem{yarats2021reinforcement}
{\sc D.~Yarats, R.~Fergus, A.~Lazaric, and L.~Pinto}, {\em Reinforcement learning with prototypical representations}, in International Conference on Machine Learning, PMLR, 2021, pp.~11920--11931.

\bibitem{zhang2021beyond}
{\sc J.~Zhang, A.~S. Bedi, M.~Wang, and A.~Koppel}, {\em Beyond cumulative returns via reinforcement learning over state-action occupancy measures}, in 2021 American Control Conference, 2021, pp.~894--901.

\bibitem{zhang2020variational}
{\sc J.~Zhang, A.~Koppel, A.~S. Bedi, C.~Szepesv{\'a}ri, and M.~Wang}, {\em Variational policy gradient method for reinforcement learning with general utilities}, Advances in Neural Information Processing Systems, 33 (2020), pp.~4572--4583.

\bibitem{zhang2020global}
{\sc K.~Zhang, A.~Koppel, H.~Zhu, and T.~Ba\c{s}ar}, {\em Global convergence of policy gradient methods to (almost) locally optimal policies}, SIAM Journal on Control and Optimization, 58 (2020), pp.~3586--3612.

\end{thebibliography}
\end{document}

% --- supplement: ex_supplement.tex ---

\maketitle

\section{A detailed example}

Here we include some equations and theorem-like environments to show
how these are labeled in a supplement and can be referenced from the
main text.
Consider the following equation:
\begin{equation}
  \label{eq:suppa}
  a^2 + b^2 = c^2.
\end{equation}
You can also reference equations such as \cref{eq:matrices,eq:bb} 
from the main article in this supplement.

\lipsum[100-101]

\begin{theorem}
An example theorem.
\end{theorem}

\lipsum[102]
 
\begin{lemma}
An example lemma.
\end{lemma}

\lipsum[103-105]

Here is an example citation: \cite{KoMa14}.

\section[Proof of Thm]{Proof of \cref{thm:bigthm}}
\label{sec:proof}

\lipsum[106-112]

\section{Additional experimental results}
\Cref{tab:smfoo} shows additional
supporting evidence. 

\begin{table}[htbp]
\footnotesize
  \caption{Example table.}\label{tab:smfoo}
\begin{center}
  \begin{tabular}{|c|c|c|} \hline
   Species & \bf Mean & \bf Std.~Dev. \\ \hline
    1 & 3.4 & 1.2 \\
    2 & 5.4 & 0.6 \\ \hline
  \end{tabular}
\end{center}
\end{table}

\bibliographystyle{siamplain}
\bibliography{references}